\newcommand{\ubar}[1]{\underaccent{\bar}{#1}}
\newtheorem{thm}{Theorem}[section] % theorem number should follow the section number
\newtheorem{lemma}[thm]{Lemma} % share counter with [thm] environment
\newtheorem{prop}[thm]{Proposition}
\newtheorem{cor}[thm]{Corollary}
\newtheorem{ass}{Assumption}
\theoremstyle{definition} % no italics
\newtheorem{defn}[thm]{Definition}
\newtheorem{rmk}[thm]{Remark}
\newcommand{\norm}[1]{\lVert#1\rVert}
\newcommand{\Norm}[1]{\left\lVert#1\right\rVert}
\newcommand{\abs}[1]{\left\lvert#1\right\rvert}
\newcommand{\KL}[2]{\mathrm{KL}(#1\Vert #2)}
\newcommand{\E}[1]{\mathbb{E}[#1]}
\newcommand{\Ebig}[1]{\mathbb{E}\left[#1\right]}
\newcommand{\EE}[2]
{\mathbb{E}_{#1}[#2]}
\newcommand{\EEbig}[2]{\mathbb{E}_{#1}\left[#2\right]}
\newcommand{\hatE}[1]{\hat{\mathbb{E}}_{X|Z}[#1]}
\newcommand*{\rd}{\mathop{}\!\mathrm{d}}
\newcommand\bW{\mathbf{W}}
\DeclareMathOperator{\Var}{Var}
\DeclareMathOperator{\Vol}{Vol}
\DeclareMathOperator{\spn}{span}
\DeclareMathOperator{\poly}{poly}
\DeclareMathOperator{\rank}{rank}
\DeclareMathOperator{\supp}{supp}
\DeclareMathOperator{\im}{im}
\DeclareMathOperator{\str}{str}
\DeclareMathOperator{\DNN}{DNN}
\DeclareMathOperator{\XX}{\mathcal{X}}
\DeclareMathOperator{\PP}{\mathcal{P}}
\DeclareMathOperator{\FF}{\mathcal{F}}
\DeclareMathOperator{\DD}{\mathcal{D}}
\DeclareMathOperator{\RR}{\mathbb{R}}
\DeclareMathOperator{\ZZ}{\mathbb{Z}}
\DeclareMathOperator{\NN}{\mathbb{N}}
\DeclareMathOperator{\UU}{\mathbb{U}}
\DeclareMathOperator*{\argmin}{arg\,min}
\DeclareMathOperator{\hd}{\hat{\Delta}}
\DeclareMathOperator{\hp}{\hat{\Pi}}
\DeclareMathOperator{\hc}{\hat{\mathcal{C}}}
\DeclareMathOperator{\cstar}{\mathcal{C}^*}
\newcommand\indep{\protect\mathpalette{\protect\independenT}{\perp}}
\def\independenT#1#2{\mathrel{\rlap{$#1#2$}\mkern2mu{#1#2}}}
\newenvironment{customass}[1]{\innercustomthm\itshape}
  {\endinnercustomthm}
\title{\LARGE\bf Optimality and Adaptivity of Deep Neural Features for Instrumental Variable Regression\vspace{1em}}
\author{
Juno Kim\thanks{Department of Mathematical Informatics, University of Tokyo.} $^{,}$\thanks{Center for Advanced Intelligence Project, RIKEN.} \\ {\footnotesize\em junokim@g.ecc.u-tokyo.ac.jp} \\ \and Dimitri Meunier\thanks{Gatsby Computational Neuroscience Unit, University College London.} \\ {\footnotesize\em dimitri.meunier.21@ucl.ac.uk} \\ \and Arthur Gretton\footnotemark[3] \\ {\footnotesize\em arthur.gretton@gmail.com} \\ \and Taiji Suzuki\footnotemark[1] $^{,}$\footnotemark[2] \\ {\footnotesize\em taiji@mist.i.u-tokyo.ac.jp} \\ \and Zhu Li\footnotemark[3] \\ {\footnotesize\em zhu.li@imperial.ac.uk}  
}
\date{\vspace{-5ex}}
\begin{document}

\maketitle

\begin{abstract}
We provide a convergence analysis of \emph{deep feature instrumental variable} (DFIV) regression \citep{Xu21}, a nonparametric approach to IV regression using data-adaptive features learned by deep neural networks in two stages. We prove that the DFIV algorithm achieves the minimax optimal learning rate when the target structural function lies in a Besov space. This is shown under standard nonparametric IV assumptions, and an additional smoothness assumption on the regularity of the conditional distribution of the covariate given the instrument, which controls the difficulty of Stage 1. We further demonstrate that DFIV, as a data-adaptive algorithm, is superior to fixed-feature (kernel or sieve) IV methods in two ways. First, when the target function possesses low spatial homogeneity (i.e., it has both smooth and spiky/discontinuous regions), DFIV still achieves the optimal rate, while fixed-feature methods are shown to be strictly suboptimal. Second, comparing with kernel-based two-stage regression estimators, DFIV is provably more data efficient in the Stage 1 samples.
\end{abstract}

\section{Introduction}

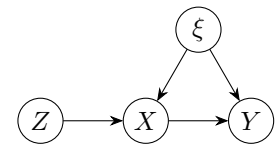
\begin{wrapfigure}{R}{0.3\textwidth}
\centering
%\vspace{-10pt}
\begin{tikzpicture}
\draw[-{Stealth[length=1.8mm]}] (-1.1,0) -- (-0.3,0);
\draw[-{Stealth[length=1.8mm]}] (0.3,0) -- (1.1,0);
\draw[-{Stealth[length=1.8mm]}] (0.7,1.212) -- (0.15,0.26);
\draw[-{Stealth[length=1.8mm]}] (0.7,1.212) -- (1.25,0.26);

\draw[fill=white] (-1.4,0) circle (0.3);
\node at (-1.4,0) {$Z$};
\draw[fill=white] (0,0) circle (0.3);
\node at (0,0) {$X$};
\draw[fill=white] (1.4,0) circle (0.3);
\node at (1.4,0) {$Y$};
\draw[fill=white] (0.7,1.212) circle (0.3);
\node at (0.7,1.212) {$\xi$};
\end{tikzpicture}
\caption{Causal graph of IV.}
\label{fig:graphic}
\end{wrapfigure}

We study the nonparametric instrumental variable (NPIV) regression problem \citep{Newey03,ai2003efficient,Darolles11}. For random variables $X$, $Y$,and $\xi$, 
we have
\begin{IEEEeqnarray}{rCl} \label{eq:npiv1}
 Y = f_{\str}(X) + \xi,\quad \E{\xi|X} \neq 0,
\end{IEEEeqnarray}
where $X\in\XX$ is the endogenous variable, $Y$ is the outcome, and $\xi$ denotes unobserved confounding which affects both $X$ and $Y$. The central object of interest is the structural function $f_{\str}$: this may be estimated by utilizing an exogenous \emph{instrumental variable} $Z\in \mathcal{Z}$, which satisfies
\begin{equation}\label{eq:zAssumptions}
\E{\xi|Z} = 0,   \quad Z \not\indep X, \quad  (Z \indep Y)_{G_{\bar{X}}}
\end{equation}
(the second requirement is the {\em relevance} assumption, and the final requirement denotes independence of $Z$ and $Y$ when incoming edges to $X$ are removed in the graphical model, thus excluding a hidden common cause of $Z$ and $Y$; see Figure \ref{fig:graphic}). The IV setting can be understood in terms of an example \citep{AngristKrueger01}: let $X$ denote the price of coffee, $Y$ denote coffee consumption, and $\xi$ denote the level of demand,  unknown to the coffee vendor (e.g. increased demand due to conference deadlines), which affects both $X$ and $Y$. In this case, $Z$ might be the wholesale price of coffee beans, which influences the price of coffee and is assumed to be independent of $\xi$. 
 
Given (\ref{eq:npiv1}) and (\ref{eq:zAssumptions}), taking the conditional expectation w.r.t. $Z$, $f_{\str}$ satisfies the functional equation $Tf_{\str} = \E{Y|Z}$, where $\PP_{\XX},\PP_{\mathcal{Z}}$ are the marginal laws of $X,Z$, and $T: L^2(\PP_{\XX}) \rightarrow L^2(\PP_\mathcal{Z})$ is the bounded linear operator that maps each $h \in L^2(\PP_{\XX})$ to $\E{h(X)| Z} \in L^2(\PP_\mathcal{Z})$ \citep{Darolles11,bennett2023minimax}. The aim of NPIV is to solve for this possibly under-determined system, where $f_{\str}$ is uniquely identified if and only if $T$ is injective.  This task is inherently challenging due to its ill-posed nature. For example, a small perturbation in the outcome can lead to estimators that are far from the true solution as $T^{-1}$ is often unbounded \citep{Carra07}. There has been a growing interest in addressing this challenge using modern statistical and machine learning techniques.

NPIV estimation plays a crucial role in various fields, including causal inference \citep{angrist1995identification,Newey03}, addressing missing data challenges \citep{wang2014instrumental,miao2015identification}, and reinforcement learning \citep{liao2021instrumental,uehara2021finite,Xu21,ChenFreitasReinforcement22}. Existing methods can be broadly categorized into two main approaches: conditional moments methods \citep{muandet2020dual,liao2020provably,dikkala2020minimax,bennett2019deep,bennett2023inference,bennett2023source} and two-stage estimation techniques \citep{Newey03,Carra07,chen2007large,horowitz2011applied,Darolles11,hartford2017deep,Chen17,Singh19,Xu21,ren2024spectral,wang2022spectral,anony}.
%
%The key difficulty lies in the potential non-uniqueness of the solution, especially when the instrumental variables are under-identified \citep[see, e.g.,][]{andrews2005inference,andrews2019weak}. Therefore, many estimation approaches rely on the assumption of a unique solution \citep{Darolles11,Chen11,Singh19}, which is often not guaranteed.
%
Conditional moment methods approach NPIV by constructing a min-max optimization problem of the form $\min_{f \in \mathcal{F}} \max_{g \in \mathcal{G}} \mathcal{L}(f, g)$ for certain function classes $\mathcal{F},\mathcal{G}$. However, obtaining these estimators can be difficult in practice since the solutions are typically saddle points. Furthermore, information-theoretic lower bounds for these methods have not yet been established, so it is unknown whether the proposed estimators can achieve the optimal learning rate.

In the present work, we consider two-stage methods. These decompose NPIV into the following steps: Stage 1 learns either the conditional expectation operator $T$ or the conditional density $X\mid Z$ depending on the algorithm. In Stage 2, the outcome $Y$ is regressed using the estimator obtained from Stage 1. While this offers more stability compared to conditional moment methods by avoiding saddle-point optimization, a key challenge remains to represent the conditional distribution $X\mid Z$  from the first stage. One approach is to explicitly learn the conditional density $X\mid Z$ from data \citep{Darolles11,hartford2017deep,li2024regularized}.  Density estimation is challenging when $X$ and $Z$ are high dimensional, however, and convergence rates can be correspondingly slow \citep[see e.g.][for convergence properties of density estimates]{Wasserman06}. 

A second approach is to learn the conditional mean of features of $X$  given $Z$,  where the features of $X$ are the input features of Stage 1 \citep[see e.g.,][]{Chen11,Singh19,Chen17,anony}. This approach, known as two-stage least-squares (2SLS) regression, has the advantage that the Stage 1 problem is only as difficult as it needs to be in order to solve Stage 2, and does not require to address the harder problem of conditional density estimation.  The question of feature dictionary choice remains a challenge, however, and the above methods employ dictionaries for classes of smooth functions (i.e., spline and RKHS classes), which are not data adaptive.  
More recently, two-stage IV approaches have been proposed with data-adaptive feature dictionaries represended by neural nets \citep{Xu21,ren2024spectral,wang2022spectral}. While these have shown strong empirical performance, the corresponding theoretical guarantees remain incomplete, and minimax rates are yet to be established. Finally, while sample splitting is predominantly used in two-stage NPIV, the ratio between Stage 1 ($m$) and Stage 2 ($n$) samples is rarely studied in the literature. \cite{Singh19} and \cite{anony} recently showed that $m\gg n$ is required in order to achieve the minimax optimal rate in the case of fixed-feature estimators.

{
\begin{table}[t]
\caption{Summary of main results for IV regression, when the structural function lies in a Besov space $B_{p,q}^s(\XX)$. $m,n$ denote the number of Stage 1, 2 samples, respectively. The constants $\gamma_1\le\gamma_0$ denote the decay rates of the link and reverse link conditions and the degree of separation $\Delta = d_x(1/p-1/2)_+$. The upper bound of DFIV assumes $T$ has \emph{maximal smoothness} (see Lemma \ref{thm:sprime}), while the projected fixed-feature lower bound assumes $d_z\le d_x$ and $d':= d_z\vee (d_x-2(s-\Delta))$.} %The kernel regime is studied in \cite{anony}. % Juno: This make it sound like the kernel rates in this particular table are copied from the other paper. I think it is better to discuss in the intro text.

\label{tab:main}
\begin{center}
\begin{tabular}{llll}
\toprule
\multicolumn{1}{c}{\bf Rates}  &\multicolumn{1}{c}{\bf DFIV}
&\multicolumn{1}{c}{\bf Optimal}
&\multicolumn{1}{c}{\bf Fixed-feature}
\\ \hline \\[-8pt]
\!Projected \!\!&\!\! $\widetilde{O} \left(m^{-\frac{2s+2\gamma_1}{2s+2\gamma_1+d_x}} + n^{-\frac{2s+2\gamma_0}{2s+2\gamma_0+d_x}}\right)$ \!\!&\!\! $\Omega\left(n^{-\frac{2s+2\gamma_0}{2s+2\gamma_1+d_x}}\right)$ \!\!&\!\! $\widetilde{\Omega}\Big(n^{-\frac{2(s-\Delta)+2\gamma_0}{2(s-\Delta)+2\gamma_1+d'}}\Big)$\! \\[10pt]
\!Full \!\!&\!\! $\widetilde{O}\Big(m^{-\frac{2(s-\gamma_0+\gamma_1)}{2s+2\gamma_1+d_x}} + n^{-\frac{2(s-\gamma_0+\gamma_1)}{2s+2\gamma_0+d_x}\frac{s+\gamma_0}{s+\gamma_1}}\Big)$ \!\!&\!\! $\Omega\left(n^{-\frac{2s}{2s+2\gamma_1+d_x}}\right)$ \!\!&\!\! $\Omega\Big(n^{-\frac{2(s-\Delta)}{2(s-\Delta)+2\gamma_1+d_x}}\Big)$\! \\
\bottomrule
\end{tabular}
\end{center}
\end{table}
}

\paragraph{Our contributions.} We study the \emph{deep feature instrumental variable} (DFIV) algorithm proposed by \cite{Xu21}, where learnable deep neural network (DNN) features are employed to jointly optimize both stages. As a data-adaptive estimator, DFIV exhibits superior empirical performance compared with fixed-feature methods, most notably in cases where the structural function may have both smooth and non-smooth regions \citep{Xu21,ChenFreitasReinforcement22}. In this work, we provide minimax convergence guarantees for DFIV, and formally characterize the conditions under which deep neural features yield a performance advantage over fixed-dictionary approaches. These build on existing statistical analyses of DNNs \citep[e.g.][]{Suzuki19, Hayakawa20}, which study ordinary regression settings, by contrast with our two-stage setting where two dependent DNNs must be jointly learned. Our result are as follows:

\begin{itemize}
\item We prove an upper bound for the risk of the DFIV estimator $\hat{f}_{\str}$ in terms of the number $m,n$ of  Stage 1 and 2 samples, respectively, when $f_{\str}$ lies in a Besov space $B_{p,q}^s(\XX)$. We obtain convergence rates with respect to both the projected pseudometric %\footnote{While some analyses of NPIV only obtain rates in the projected mean squared error \citep{Lewis18, dikkala2020minimax}, the non-projected rates are arguably more interesting as they upper bound the former.}
%under standard NPIV conditions and an assumption on how the conditional expectation operator $T$ preserves the smoothness of functions in Besov space
$\norm{T\hat{f}_{\str} - Tf_{\str}}_{L^2(\PP_\mathcal{Z})}^2$ (Theorem~\ref{thm:main}) and the full non-projected metric $\norm{\hat{f}_{\str} - f_{\str}}_{L^2(\PP_{\XX})}^2$(Theorem~\ref{thm:non}).

\item We obtain information-theoretic lower bounds for NPIV in Besov spaces and demonstrate that DFIV can achieve the minimax optimal rate (Proposition~\ref{thm:lowerproj}, \ref{thm:lowerfull}). Moreover, we show that DFIV can attain the optimal rate with Stage 1 samples $m\approx n$, whereas kernel IV estimators 
require $m/n\to\infty$ \citep{Singh19, anony}.

\item We demonstrate a strict separation between DFIV and any fixed-feature estimator \citep{Chen11,anony} when $p<2$, i.e. the target function has both smooth and spiky/discontinuous regions (Corollary~\ref{thm:corsep}). In particular, we show that fixed-feature methods are suboptimally lower bounded, while DFIV can adaptively learn the best features to achieve the optimal rate under an extended link condition.
\end{itemize}
%Our results are summarized in Table \ref{tab:main}. These not only recover known rates over RKHS or Sobolev spaces when $p=q=2$ and $s>d_x/p$, but also extend to much wider settings. For instance, Besov spaces include discontinuous functions when $s<d_x/p$ and are quadratically nonconvex when $p<2$, for which we also obtain separation results.
A detailed discussion of prior two-stage approaches, and of relevant works addressing convergence of regression with DNNs, is provided in the following subsection. A summary of our obtained rates is given in Table~\ref{tab:main}. The rest of the paper is structured as follows: in Section~\ref{sec:bk}, we outline the DFIV algorithm and define our target and hypothesis classes. The upper bounds for DFIV and comparison to NPIV minimax lower bounds are presented in Section~\ref{sec:3}. The separation between fixed-feature methods and DFIV is established in Section~\ref{sec:linear}. We provide a proof sketch of the key upper bound in Section~\ref{sec:sketch}, where we develop a two-stage oracle bound which can be used to obtain rates for any two-stage IV estimator with misspecified models in both stages. All proofs are deferred to the appendix.

\subsection{Comparison with Existing Works}\label{sec:ComparisonExistingWorks}

\paragraph{Error analysis of NPIV estimators.} Many existing analyses of NPIV study linear (kernel or sieve) methods, where the estimator is constructed by regressing against a known basis expansion in an RKHS or Sobolev ball; some are further shown to attain the minimax optimal rate \citep{ai2003efficient, Newey03, Blundell07, Chen11, horowitz2011applied, Chen17, Singh19,anony}. It is also possible to relax the restrictions on smoothness via e.g. spectral or Tikhonov regularization \citep{Hall05, Carra07, Darolles11}. However, these methods do not readily extend to DNN classes, which are highly nonlinear in their parameters and misspecified for both stages.

Theoretical guarantees for adaptive neural network-based IV estimators are at present very limited. \citet{Xu21} give an initial analysis of DFIV with no specific learning rate, under strong assumptions on $T$ and identifiability. \citet{hartford2017deep, li2024regularized} study a two-stage NPIV estimator with neural networks, however there are two key differences. First, from the algorithmic side, the two works both employ a DNN to learn the conditional density $f_{X|Z}(\cdot)$ in Stage 1. However, our Stage 1 only needs to estimate the conditional mean of the relevant features for Stage 2, which can be much easier depending on the Stage 2 network. Second, \citet{li2024regularized} only derive an upper bound in terms of abstract function class complexity measures. It is not clear whether the derived rate is minimax optimal since neither a concrete evaluation for DNNs nor a matching lower bound is provided.

%however their bound requires strong assumptions such as boundedness of $T^{-1}$ and identifiability of both stages, which generally does not hold for DNNs. Moreover, the bound is given abstractly in terms of Rademacher complexities and does not guarantee optimality or superiority of DFIV.

\paragraph{Estimation ability of DNNs.} For ordinary least-squares regression, it is known that DNNs achieve the minimax optimal learning rate for various target classes and furthermore outperform fixed-feature estimators when the target function possesses low homogeneity \citep{Suzuki19} or directional smoothness \citep{Suzuki21}. These works build on classical separation results between adaptive and fixed-feature estimators \citep{Donoho98,Zhang02,Dung11}. Our results in Section \ref{sec:linear} extend this line of work to two-stage regression. A significant challenge of 2SLS with DNN is how can we control the smoothness of stage 2 DNN, as it is the target of stage 1 regression. This is achieved by using DNNs with smooth (e.g. sigmoid) activations, for which we extend existing approximation results \citep{Bauer19, Langer21, Ryck21} to obtain a new approximation guarantee for Besov functions which is both rate optimal in $L^2$ norm \citep[see][]{Suzuki19} and also converge in Besov norm.

\section{Background and DFIV Algorithm}\label{sec:bk}

\subsection{Instrumental Variable Regression}

Nonparametric instrumental variable (NPIV) regression typically has a treatment $X\in\XX$ and outcome $Y\in\RR$, related via the structural function $f_{\str}:\XX\to\RR$ and an unobserved $\xi$ that affects both $X,Y$:
\begin{equation}\label{eqn:npiv}
Y = f_{\str}(X)+\xi, \quad \E{\xi}=0, \quad \E{\xi|X}\neq 0.
\end{equation}
This implies that $f_{\str}(x)\neq \E{Y|X=x}$ so that ordinary supervised regression methods cannot be used. Instead, we introduce an instrumental variable $Z\in\mathcal{Z}$ known to be uncorrelated with $\xi$, that is $\E{\xi|Z}=0$, and satisfying the requirements (\ref{eq:zAssumptions}). For simplicity we assume that $X,Z$ are bounded and set $\XX=[0,1]^{d_x}$, $\mathcal{Z}=[0,1]^{d_z}$, equipped with the induced probability measures $\PP_{\XX},\PP_\mathcal{Z}$. By defining the projection operator
\begin{equation*}
T: L^2(\PP_{\XX})\to L^2(\PP_\mathcal{Z}), \quad (Tf)(Z) = \E{f(X)|Z},
\end{equation*}
eq.~\eqref{eqn:npiv} can be described as the nonparametric indirect regression (NPIR) model \citep{Chen11}
\begin{equation}\label{eqn:npir}
Y=Tf_{\str}(Z) + \eta, \quad \eta =f_{\str}(X) - Tf_{\str}(Z) + \xi,
\end{equation}
where $\E{\eta|Z}=0$, and hence we seek to solve the inverse problem $Tf_{\str} = \E{Y|Z}$. The problem is generally ill-posed, however; the solution may be ill-behaved or not unique \citep{Nashed74, Carra07}. To analyze this problem, we require a mild regularity of the noise.% which is succinctly expressed via the NPIR model.
\begin{ass}\label{ass:noise}
\begin{enumerate}[leftmargin=*, topsep=0pt, itemsep=0pt, label={(\roman*)}]
    \item\label{ass:noise1} There exists $\sigma_1>0$ such that $\eta|(Z=z)$ is $\sigma_1$-subgaussian for all $z\in\mathcal{Z}$.
    \item\label{ass:noise2} (For lower bound only) There exists $\sigma_0>0$ such that $\Var(\eta|Z=z)\ge \sigma_0^2$ and the KL divergence between $\eta|(Z=z)$, $\mu+\eta|(Z=z)$ is bounded above by $\frac{\mu^2}{2\sigma_0^2}$ for all $z\in\mathcal{Z},\mu\in\RR$.
\end{enumerate}
\end{ass}
This is satisfied for example if $\eta|Z=z$ is $N(0,\sigma^2(z))$-distributed with $\sigma_0\le\sigma(\cdot)\le\sigma_1$, which is a standard assumption in the literature \citep{Bissantz07, Chen11}. %Moreover, Assumptions \ref{ass:noise}\ref{ass:noise1} and \ref{ass:noise}\ref{ass:noise2} are each only required for the upper and lower bounds, respectively.

\paragraph{Besov spaces.} In order to facilitate a learning-theoretic analysis, we suppose that the structural function $f_{\str}$ belongs to a Besov space on $\XX$. Besov spaces are a well-studied  class of functions of generalized smoothness, and include H\"{o}lder spaces, Sobolev spaces, and classes of bounded variation. %By modeling spatial inhomogeneities in smoothness, Besov spaces provide a natural setting to study the adaptive approximation capabilities of DNNs.
\begin{defn}\label{defn:besov}
Let $s>0$ and $0<p,q\leq \infty$. For $f\in L^p(\XX)$ and $r\in\NN$, $r > s\vee s+1-1/p$, the $r$th modulus of smoothness of $f$ is given as
\begin{equation*}
\textstyle w_{r,p}(f,t) := \sup_{\norm{h}_2\leq t}\norm{\Delta_h^r(f)}_p, \quad \Delta_h^r(f)(x)= 1_{\{x,x+rh\in\XX\}} \sum_{j=0}^r \binom{r}{j} (-1)^{r-j}f(x+jh).
\end{equation*}
Then the Besov space with parameters $s,p,q$ is defined as the following subspace of $L^p(\XX)$,
\begin{equation*}
B_{p,q}^s(\XX) = \{f\in L^p(\XX)\mid \norm{f}_{B_{p,q}^s(\XX)} := \norm{f}_{L^p(\XX)} + \abs{f}_{B_{p,q}^s(\XX)}< \infty\},
\end{equation*}
where the Besov seminorm is defined as $\abs{f}_{B_{p,q}^s(\XX)} := (\int_0^\infty t^{-qs-1}w_{r,p}(f,t)^q \rd t)^{1/q}$ if $q<\infty$ and $\sup_{t>0} t^{-s} w_{r,p}(f,t)$ if $q=\infty$.
%\begin{equation*}
%\abs{f}_{B_{p,q}^s(\XX)} := \begin{cases} \left(\int_0^\infty t^{-qs}w_{r,p}(f,t)^q \frac{\rd t}{t}\right)^{1/q}, & q<\infty, \\ \sup_{t>0} t^{-s} w_{r,p}(f,t), & q=\infty.\end{cases}
%\end{equation*}
\end{defn}
The unit ball of $(B_{p,q}^s(\XX), \norm{\cdot}_{B_{p,q}^s})$ is denoted as $\UU(B_{p,q}^s(\XX))$. Moreover, Besov spaces on $\mathcal{Z}$ are defined in the same manner. We have the following classical results \citep{Triebel83}:
\begin{itemize}
\item For $s>0,s\notin\NN$, the H\"{o}lder space $C^s(\XX) = B_{\infty,\infty}^s(\XX)$. For $m\in\NN$, $C^m(\XX) \hookrightarrow B_{\infty,\infty}^m(\XX)$.
\item For $m\in\NN$, the Sobolev space $W_2^m(\XX) = B_{2,2}^m(\XX)$ and $B_{p,1}^m(\XX)\hookrightarrow W_p^m(\XX) \hookrightarrow B_{p,\infty}^m(\XX)$.
\item If $s>d_x/p$ then $B_{p,q}^s(\XX)\hookrightarrow C^0(\XX)$.
\item If $s>d_x(1/p-1/r)_+$ then $B_{p,q}^s(\XX)\hookrightarrow L^r(\XX)$.
\end{itemize}
%$B_{p,q}^s(\XX)$ is a Banach space unless $p$ or $q<1$, when it becomes a quasi-Banach space. 
In particular, Besov spaces with smoothness below the threshold $d_x/p$ contain functions with discontinuities which are more difficult to learn; our theory can account for this regime. We also need to assume $s>d_x(1/p-1/2)_+$ in order to consider $L^2$ error. % Juno: i.e. in order for the target to be L^2

% This condition is non-vacuous if $p<2$, which is the regime where functions possess high spatial inhomogeneity in smoothness and the gap between kernel and adaptive learning methods emerges.

\begin{ass}[Structural function]\label{ass:str}
$f_{\str}\in \UU(B_{p,q}^s(\XX))$ and $\abs{f_{\str}} \leq C$ for some $0<p,q\le\infty$, $s>d_x(1/p-1/2)_+$ and $C>0$. If $s\le d_x/p$, $\PP_{\XX}$ also has Lebesgue density bounded above.
\end{ass}

Our analysis utilizes the B-spline system, which forms a \emph{hierarchical} basis for Besov spaces: the span of B-splines up to resolution $k$ is strictly contained in the span of B-splines at resolution $k+1$.

\begin{defn}[B-spline basis]
The \textit{cardinal B-spline} of order $r\in\NN$ is iteratively defined as the repeated convolution $\iota_r = \iota_{r-1} * \iota_0\in C(\RR)$ where $\iota_0(x) = 1_{[0,1]}(x)$. The \textit{tensor product B-spline} of order $r$ with resolution $k \in\ZZ_{\geq 0}$ and location $\ell \in I_k= \prod_{i=1}^{d_x} \{-r,-r+1,\cdots,2^k\}$ is
\begin{equation*}
\textstyle \omega_{k,\ell}(x)=\prod_{i=1}^{d_x} \iota_r(2^k x_i-\ell_i).
\end{equation*}
%When $k_1=\cdots=k_{d_x}$, we abuse notation and write $\omega_{k,\ell}$ for $k\in\ZZ_{\geq 0}$ in place of $\omega_{(k,\cdots,k),\ell}$.
\end{defn}

\subsection{Two-Stage Least Squares Regression}

A popular method for performing IV analysis is two-stage least squares (2SLS) regression. In 2SLS, the structural function $f_{\str}(x)$ is modeled as a linear combination $u^\top \psi(x)$ of fixed basis functions $\psi(x)$, for instance 
%such as the identity $\mathrm{id}_{\XX}$, 
Hermite polynomials \citep{Newey03} or reproducing kernel Hilbert spaces (RKHS) \citep{Singh19,anony}. The weight vector $u$ is estimated by performing two successive regressions. In Stage 1, the conditional mean embedding (CME) $\EE{X|Z}{\psi(X)}$ \citep{song2009hilbert, park2020measure,klebanov2020rigorous} is approximated by another set of basis functions $V\phi(Z)$, where the coefficient matrix $V$ is computed by solving the vector-valued regression \citep{grunewalder2012conditional,mollenhauer2020nonparametric,li2022optimal,li2024towards}:
\begin{equation}
\textstyle\hat{V} = \argmin_V \EE{X,Z}{\norm{\psi(X)-V\phi(Z)}^2} + \lambda_1\norm{V}^2. \label{eq:cme}
\end{equation}
In Stage 2, we obtain $u$ by regressing $Y$ against $\EE{X|Z}{f_{\str}(X)}\approx u^\top \hat{V}\phi(Z)$, where the Stage 1 CME estimate is utilized:
\begin{equation}
\textstyle\hat{u} = \argmin_u \EE{Y,Z}{\norm{Y-u^\top \hat{V}\phi(Z)}^2} +\lambda_2\norm{u}^2.\label{eq:iv_last}
\end{equation}
Finally, the structural function is estimated as $\hat{f}_{\str}(x) = \hat{u}^\top\psi(x)$. Both stages can be solved in closed form by performing ridge regression, including for infinite (RKHS) feature dictionaries. %However, Stage 2 yields an estimator linear in $Y$, which implies that 2SLS is suboptimal by our results in Section \ref{sec:linear}.

\subsection{Deep Feature Instrumental Variable Regression}

We now recall the \emph{deep feature instrumental variable} (DFIV) regression algorithm \citep{Xu21}, which extends the 2SLS framework to incorporate learnable feature representations $\psi_{\theta_x}(x)$, $\phi_{\theta_z}(z)$ in both stages. The respective hypothesis classes, equipped with the $L^\infty$-norm, are denoted as
\begin{equation*}
\FF_x = \{\psi_{\theta_x}:\XX\to\RR \mid \theta_x\in\Theta_x\}, \quad \FF_z = \{\phi_{\theta_z}:\mathcal{Z}\to\RR \mid \theta_z\in\Theta_z\}.
\end{equation*}
As in \citet{Singh19, anony, Xu21}, we do not presume access to samples from the joint law $(X,Y,Z)$, and instead are provided with $m$ i.i.d. samples $\DD_1 = \{(x_i,z_i)\}_{i=1}^m$ from $(X,Z)$ for Stage 1, and $n$ i.i.d. samples $\DD_2 = \{(\tilde{y}_i,\tilde{z}_i)\}_{i=1}^n$ from $(Y,Z)$ for Stage 2.

\textbf{Stage 1 regression:} Given data $\DD_1$ and fixed Stage 2 parameter $\theta_x$, the conditional expectation $\EE{X|Z}{\psi_{\theta_x}(X)}$ is learned using the network $\phi_{\theta_z}(Z)$ by minimizing the empirical loss,
\begin{equation}\label{eqn:s1}
\hat{\theta}_z = \hat{\theta}_z(\theta_x) = \argmin_{\theta_z\in\Theta_z} \frac{1}{m} \sum_{i=1}^m \left(\psi_{\theta_x}(x_i) - \phi_{\theta_z}(z_i)\right)^2.
\end{equation}
The resulting estimate is also denoted as $\hatE{\psi_{\theta_x}} := \phi_{\hat{\theta}_z(\theta_x)}(\cdot)$ to emphasize that $\hat\theta_z$ is a function of the current Stage 2 input feature parameters $\theta_x$.

\textbf{Stage 2 regression:} Given data $\DD_2$ and the Stage 1 estimate $\phi_{\hat{\theta}_z}$, we solve the following regression where $R:\Theta_x\to\RR_{\ge 0}$ is an optional regularizer:
\begin{equation}\label{eqn:s2}
\hat{\theta}_x = \argmin_{\theta_x\in\Theta_x} \frac{1}{n}\sum_{i=1}^n \left(\tilde{y}_i -\hatE{\psi_{\theta_x}}(\tilde{z}_i)\right)^2 + \lambda R(\theta_x).
\end{equation}
%The above loss generally cannot be directly optimized for $\theta_x$ as $\hat{\theta}_z$ cannot be backpropagated through. We hence alternately iterate Stages 1 and 2 until convergence, at which point the risk minimization \eqref{eqn:s1} with $\theta_x=\hat{\theta}_x$ and \eqref{eqn:s2} will be jointly satisfied. 
The final DFIV for estimate $f_{\str}$ is returned as $\hat{f}_{\str} = \psi_{\hat{\theta}_x}$.\footnote{A minimizer of Stage 1 always exists as we take $\Theta_z$ to be compact and \eqref{eqn:s1} is continuous w.r.t. $\theta_z$. A minimizer of Stage 2 is not guaranteed to exist as the output $\hat{\theta}_z$ of Stage 1 may not depend continuously on $\theta_x$, but we may simply take an approximate minimizer $\epsilon$-close to the infimum and obtain the same results as $\epsilon\to 0$.}

\begin{rmk}
We note that jointly optimizing (\ref{eqn:s1}) and (\ref{eqn:s2}) is challenging as the optimal  $\hat{\theta}_z$ is itself a function of the current $\theta_x$.
The DFIV algorithm as originally proposed in \citet{Xu21} takes $\psi_{\theta_x},\phi_{\theta_z}$ to be vector-valued  mappings, and treats separately the linear maps $u,V$ (eqs.~\eqref{eq:cme} \&~\eqref{eq:iv_last}) via ridge regression. This enables more effective optimization by partially backpropagating $\theta_x$ through the analytical solutions of $u,V$; see Section 3 of \citet{Xu21} for details. From the sample complexity viewpoint, however, it suffices to consider the scalar output $u^\top\psi_{\theta_x}$ as a single neural network and learn its conditional expectation in Stage 1. 
A more sophisticated bilevel optimization approach is described by \citet{Pet24}.
%This simplification is not always equivalent since combining the CME estimates is not the same as learning the conditional expectation of the combined network in one go. 
\end{rmk}

\paragraph{Smooth DNN class.} In our paper, $\psi_{\theta_x}, \phi_{\theta_z}$ are chosen from a class of smooth DNNs, defined with depth $L$, width $W$, sparsity $S$, norm bound $M$ and a $C^r$ activation $\sigma$ (applied elementwise) as
\begin{align*}
\FF_\textup{DNN}(L,W,S,M) := \Big\{ &\mathrm{clip}_{\ubar{C},\bar{C}} \circ (\bW^{(L)}\sigma +b^{(L)})\circ \cdots \circ (\bW^{(1)}\mathrm{id}_{\XX} +b^{(1)}):\XX\to\RR \,\Big|\\
&\bW^{(1)} \in\RR^{W\times d}, \bW^{(\ell)}\in\RR^{W\times W},\bW^{(L)}\in\RR^W, b^{(\ell)}\in\RR^W,b^{(L)}\in\RR,\\
&\textstyle\sum_{\ell=1}^L\, \norm{\bW^{(\ell)}}_0 + \norm{b^{(\ell)}}_0\le S, \max_{\ell\leq L}\norm{\bW^{(\ell)}}_\infty \!\vee \norm{b^{(\ell)}}_\infty \le M \Big\}.
\end{align*}
Here $\mathrm{clip}_{\ubar{C},\bar{C}}: \RR\to [-\bar{C},\bar{C}]$ is any bounded, 1-Lipschitz $C^\infty$ function equal to the identity when $|x|\le \ubar{C}$; we fix any $\bar{C}>\ubar{C}>C$ and omit them from the notation. The use of smooth activations and smooth clipping is in order to ensure $\FF_{\DNN}\subset W_p^r(\XX)\subset B_{p,q}^s(\XX)$ \citep{Suzuki19} and thus to obtain Besov norm guarantees for the DNN estimator. Moreover, we specifically consider sigmoid activations $\sigma(x)=(1+e^{-x})^{-1}$ for ease of analysis, however the results can be extended to any sufficiently smooth activation e.g. SiLU, GELU, Softplus, as well as piecewise polynomial activations such as ReQU; see the discussion in Appendix \ref{app:a}. Some of our results also hold when the ReLU DNN class, defined with activation $\sigma(x) = 0\vee x$ and $\mathrm{clip}_{\bar{C}}(x) = (-\bar{C})\vee (x\wedge \bar{C})$, is used instead.

%% OPTIMIZATION: Stage 1 regression (without regularization and assuming $d_1=d_2$ and $T$ maps the Barron space to itself) is equivalent to the attention landscape objective since we are explicitly optimizing $V$ first, and thus convergence could be argued. For Stage 2, $\hat{\theta}_z$ is fixed during backpropagation. However, we may assume the Stage 1 optimality condition is always satisfied via a timescale separation and thus assume $\hat{\theta}_z$ depends on $\hat{\theta}_x$ (indeed, updating $\theta_z$ more frequently than $\theta_x$ is recommended in p.4 for stability). Then the objective becomes $\approx\norm{Y - u^\top [T\psi_{\theta_x}](Z)}_{\PP_{\mathcal{Z}}}^2$ which is also similar, however I am unsure how to deal with the two linear projections $u,T$.

\section{Theoretical Analysis of DFIV}\label{sec:3}

We now establish the sample complexity of DFIV when $f_{\str}$ lives in the Besov space $B_{p,q}^s(\XX)$ with smoothness $s$. We obtain upper and lower bounds for both the projected error (Section \ref{sec:proj}) and non-projected error with $p \geq 2$ (Section \ref{sec:non}), and obtain precise conditions for when the minimax optimal rate is achieved. We begin by stating our assumptions on the projection operator $T$. %The low smoothness regime $p < 2$ will be studied further in Section~\ref{sec:linear} to establish the superiority of learning deep features.

%Our goal in this section is to derive the sample complexity of DFIV and study the conditions under which the minimax optimal rate is achieved. In Section \ref{sec:conds}, we state our assumptions on $T$ and discuss new considerations involving smoothness. In Section \ref{sec:proj}, we present upper and lower bounds for DFIV with respect to the projected pseudometric. The main non-projected rates and optimality analysis are given in Section \ref{sec:non}.

\subsection{Link and Smoothness Conditions}\label{sec:conds}

The operator $T$ plays a crucial role in determining the sample complexity due to the following factors.

\begin{itemize}
\item The NPIV model \eqref{eqn:npir} shows that Stage 2 essentially learns $f_{\str}$ using data from the projected function $Tf_{\str}$, leading to a loss of information. Hence the rate will worsen depending on how contractive $T$ is in an $L^2$-sense, which is usually quantified by a \emph{link condition} \cite[see e.g.,][]{nair2005regularization,Chen11,anony}, here given as Assumption \ref{ass:link}.
\item The difficulty of Stage 1, where we estimate the conditional expectation $T\psi_{\theta_x}$ for the output $\psi_{\theta_x}$ of Stage 2, depends on the regularity properties of $Tf$ for $f$ in certain function classes. We will control this using a novel smoothness condition (Assumption \ref{ass:smooth} or \ref{ass:alternative}).
\item To obtain the final non-projected rate $\|\hat f_{\str} - f_{\str}\|_{L^2(\PP_{\XX})}$, we must start with the projected rate $\|T\hat f_{\str} - Tf_{\str}\|_{L^2(\PP_{\mathcal{Z}})}$ and then use a \emph{reverse link condition} (Assumption \ref{ass:reverse}) to derive the former.
\end{itemize}

We now state and discuss each assumption in full. Denote by $P_r^{/k} = \spn\{\omega_{k,\ell}\mid \ell\in I_k\}\subset B_{p,q}^s(\XX)$ the linear span of all B-splines of order $r$ up to resolution $k$ and $\Pi_r^{/k}$ the projection operator to $P_r^{/k}$.
\begin{ass}[link condition]\label{ass:link}
There exists $\gamma_1\ge 0$ such that for all $f\in B_{p,q}^s(\XX)$,
\begin{equation}\label{eqn:link}
\norm{T(f-\Pi_r^{/k}f)}_{L^2(\PP_\mathcal{Z})} \lesssim 2^{-\gamma_1 k} \norm{f-\Pi_r^{/k}f}_{L^2(\PP_{\XX})}.
\end{equation}
\end{ass}
That is, $T$ is more contractive at higher resolutions. Assumption \ref{ass:link}, and the reverse link condition Assumption \ref{ass:reverse}, are equivalent to or follow from standard conditions imposed in the IV literature; for example, they follow from the link condition in \citet{Chen11} and \cite{anony} stated in terms of the Hilbert scale generated by a certain operator.\footnote{This can be seen by taking the spectrum $\nu_k=k$ in Assumption 1 of \citet{Chen11} and restricting to the span of the first $N$ basis elements or its orthogonal complement in Assumptions 2, 5 with link function $\phi(t)=t^{\gamma_1/d_x}$.} Note that $\rank \Pi_r^{/k} = \dim P_r^{/k} \asymp 2^{kd_x}$, hence \eqref{eqn:link} corresponds to polynomial (mildly ill-posed) rather than exponential (severely ill-posed) decay w.r.t. the number of basis elements \citep{Blundell07}.

%{\color{blue} Unlike conventional NPIV algorithm where stage 1 \& 2 are only implemented once, DFIV alternates between stage 1 \& 2 learning. In other words, once we obtain stage 2 estimator $\psi_{\hat{\theta}_x}$, we will implement stage 1 to learn $T\psi_{\hat{\theta}_x}$ again. Therefore,}
We now discuss the issue of smoothness. Stage 1 aims to learn the conditional expectation $T\psi_{\hat{\theta}_x}$ for the data-dependent predictor $\psi_{\hat{\theta}_x}\in\FF_x$ with the DNN class $\FF_z$. This is a separate regression problem on $\mathcal{Z}$, so its risk depends on the smoothness of $T\psi_{\hat{\theta}_x}$; a nonsmooth predictor $\psi_{\hat{\theta}_x}$ (possibly due to overfitting) may have an irregular projection and end up incurring a larger error in Stage 1. Thus we would like to understand the smoothness properties of functions residing in the projection of the Stage 2 DNN class $T[\FF_x]$. In particular, an estimate such as the following is desirable.

\begin{customass}{4*}\label{ass:alternative}
$T[\FF_{\DNN}] \subseteq C_T\cdot \UU(B_{p',q'}^{s'}(\mathcal{Z}))$ for each class of sigmoid or ReLU DNNs on $\XX$, for some $0<p',q'\le\infty$, $s'>d_z(1/p'-1/2)_+$ and $C_T>0$ that do not depend on $L,W,S,M$.
\end{customass}

Although Assumption~\ref{ass:alternative} may hold if $T$ is sufficiently mollifying (see the example below), it is quite restrictive as the class $\FF_{\DNN}$ is much larger than the unit ball of any Besov space\footnote{\label{note1}Even when using a smooth DNN class, the Besov norm of a generic DNN can scale polynomially in the specified weight norm bound, which we in turn take to scale polynomially in $m,n$ in our results.} and varies depending on design choice. However, motivated by the observation that the neural network $\psi_{\hat{\theta}_x}$ is approximating $f_{\str} \in \UU(B_{p,q}^s(\XX))$, we can use a much less restrictive assumption:

\begin{ass}[smoothness of $T$]\label{ass:smooth}
$T[\UU(B_{p,q}^s(\XX))] \subseteq C_T\cdot \UU(B_{p',q'}^{s'}(\mathcal{Z}))$ for some $0<p',q'\le\infty$, $s'>d_z(1/p'-1/2)_+$ and $C_T>0$. If $s'\le d_z/p'$, $\PP_{\mathcal{Z}}$ also has Lebesgue density bounded above.
\end{ass}

The above formulation is a natural condition as it quantifies how much $T$ preserves smoothness of the target Besov space. In comparison, the link conditions essentially quantify how much $T$ contracts in an $L^2$-sense, giving us two distinct perspectives on the action of $T$. Assumption~\ref{ass:smooth} also interacts with the link conditions to relate $s'$ back to $s$; see Lemma \ref{thm:sprime} for details.

We give some concrete examples in which Assumption \ref{ass:smooth} is satisfied. Suppose $T$ is smooth in the sense that the conditional density of $X$ given $Z$ exists and satisfies a H\"{o}lder condition w.r.t. $z$,
\begin{equation*}
|f_{X|Z}(x|z_1) - f_{X|Z}(x|z_2)| \leq \eta(x)\norm{z_1-z_2}^\alpha \quad\text{for some}\quad \eta\in L^1(\XX).
\end{equation*}
It follows that $|Tf(z_1)-Tf(z_2)|\leq \norm{\eta}_1\norm{f}_\infty\norm{z_1-z_2}^\alpha$ for any bounded integrable function $f$. Since $C^\alpha(\XX)$ continuously embeds into the Zygmund space $B_{\infty,\infty}^\alpha(\XX)$ \citep[Proposition 4.3.23]{Nickl15}, Assumption \ref{ass:smooth} holds with $s'=\alpha$; in this scenario, Assumption \ref{ass:alternative} also holds. At the opposite extreme, consider the case where the instrument is perfect, $d_x=d_z$ and $T$ restricts to a quasi-isometry between Besov spaces of the same order, the simplest example being $T=\mathrm{id}_{\XX}$. Then Assumption \ref{ass:smooth} holds with $s'=s$, however Assumption \ref{ass:alternative} cannot be true. Thus Assumption \ref{ass:smooth} is a much more general condition encompassing both high and low degrees of smoothing.

\paragraph{Controlling Stage 2 smoothness.} 
In order to replace Assumption \ref{ass:alternative} with Assumption \ref{ass:smooth}, we need to ensure that the Stage 2 predictor $\psi_{\hat{\theta}_x}$ lies in the $\UU(B_{p,q}^s(\XX))$, up to normalization. Using the smooth DNN class ensures $\psi_{\hat{\theta}_x}\in B_{p,q}^s(\XX)$ but does not by itself guarantee a norm bound (see footnote \ref{note1}). A simple way to ensure this is to restrict the domain as follows:
\begin{equation}\label{eqn:restricted}
\Theta_x' = \left\{\theta_x\in\Theta_x \,\Big\vert\, |\psi_{\theta_x}|_{B_{p,q}^s(\XX)} \le C_W\right\}.
\end{equation}
In practice, since the restriction cannot be applied a priori, we can run the DFIV algorithm from random initialization and simply discard solutions which are ill-behaved: we show that the subset \eqref{eqn:restricted} still contains a rate-optimal solution. A more sophisticated method to guarantee smoothness which we also consider in our theoretical framework is to explicitly add the Besov (semi)norm as a penalty to be optimized for in Stage 2,
\begin{equation}\label{eqn:soboreg}
R(\theta_x) = |\psi_{\theta_x}|_{B_{p,q}^s(\XX)}^{\bar{q}} \quad\text{or}\quad R(\theta_x) = \norm{\psi_{\theta_x}} _{B_{p,q}^s(\XX)}^{\bar{q}}
\end{equation}
for any exponent ${\bar{q}}>2$. The seminorm can be easily estimated from its equivalent discretization $|f|_{B_{p,q}^s(\XX)} \asymp (\sum_{k=0}^\infty [2^{ks} w_{r,p}(f,2^{-k})]^q)^{1/q}$ \citep[p.330]{Nickl15}; the $L^p$ component can be ignored since $\psi_{\theta_x}$ is bounded. It is also bounded above by the Sobolev seminorm $|f|_{W_p^r(\XX)} = \sum_{|\alpha|=r} \norm{D^\alpha f}_{L^p(\XX)}$ due to the string of continuous embeddings $W_p^r(\XX) \hookrightarrow B_{p,\infty}^r(\XX) \hookrightarrow B_{p,q}^s(\XX)$
\citep[Proposition 4.3.20]{Nickl15}. This can be numerically computed or backpropagated through $\theta_x$ to any desired accuracy independently of the data by differentiating the network output with respect to its input at mesh points; thus we assume $R$ can be computed exactly for simplicity.

Such smoothness-based or gradient-norm penalties have been considered before. For example, regularizers of the form $\E{\norm{D\psi_{\theta_x}}_2^2}$, $\E{\norm{D\psi_{\theta_x}}_2^4}$ (equivalent to the Besov seminorm penalty when $p=q=2,s=1$) or similar have been successfully used to improve training or generalization of deep networks \citep{Ishaan17, Jure17, Arbel18}. Furthermore, \citet{Rosca20} argue that smoothness regularization leads to benefits in inductive capabilities, robustness, and modeling performance.

\subsection{Projected Upper and Lower Bounds}\label{sec:proj}

We now present the upper bound for the projected mean squared error (MSE) of DFIV. The proof is quite involved and is presented throughout Appendix~\ref{app:b}, with a brief sketch provided in Section~\ref{sec:sketch}. We also require some new theory on rate-optimal estimation with sigmoid DNNs, which is developed in Appendix \ref{app:a}.

\begin{thm}[projected upper bound for DFIV]\label{thm:main}
Under Assumptions \ref{ass:noise}\ref{ass:noise1},\ref{ass:str},\ref{ass:link} and Assumption \ref{ass:smooth} with domain restriction \eqref{eqn:restricted} or regularization \eqref{eqn:soboreg} with $\lambda$ asymptotic to the rate below, or Assumption \ref{ass:alternative} without regularization, by choosing $\FF_x = \FF_{\DNN}(\lceil\log_2 d_x\rceil +1, \poly(m), \poly(m), \poly(m))$ and similarly $\FF_z$ to be smooth DNN classes, it holds that
\begin{equation}\label{eqn:projup}
\EEbig{\DD_1,\DD_2}{\norm{T\hat{f}_{\str} - Tf_{\str}}_{L^2(\PP_\mathcal{Z})}^2} \lesssim m^{-\frac{2s+2\gamma_1}{2s+2\gamma_1+d_x}}\log m + (m\wedge n)^{-\frac{2s'}{2s'+d_z}} \log (m\wedge n).
\end{equation}
\end{thm}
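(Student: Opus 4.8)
The engine of the proof is a two-stage oracle inequality (stated in Section~\ref{sec:sketch}, proved in Appendix~\ref{app:b}) valid for any two-stage IV estimator with misspecified models in both stages. Specialized to DFIV, I would decompose $\EE{\DD_1,\DD_2}{\norm{T\hat f_{\str}-Tf_{\str}}_{L^2(\PP_\mathcal{Z})}^2}$ into (i) a \emph{Stage~2 approximation error} $\inf_{\theta_x\in\Theta_x'}\bigl(\norm{T\psi_{\theta_x}-Tf_{\str}}_{L^2(\PP_\mathcal{Z})}^2+\lambda R(\theta_x)\bigr)$; (ii) a \emph{Stage~2 estimation error} governed by the local Rademacher complexity / critical radius of the realized feature class $\{\hatE{\psi_{\theta_x}}:\theta_x\in\Theta_x'\}$ in $L^2(\PP_\mathcal{Z})$ with $n$ samples; and (iii) a \emph{Stage~1 propagation error} $\sup_{\theta_x\in\Theta_x'}\EE{\DD_1}{\norm{\hatE{\psi_{\theta_x}}-T\psi_{\theta_x}}_{L^2(\PP_\mathcal{Z})}^2}$. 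The NPIR reformulation $\tilde y_i=Tf_{\str}(\tilde z_i)+\eta_i$ with $\E{\eta\mid Z}=0$ and subgaussianity (Assumption~\ref{ass:noise}\ref{ass:noise1}) is what turns Stage~2 into a genuine regression against $Tf_{\str}$; the subtlety is that the regressors $\hatE{\psi_{\theta_x}}$ are $\DD_1$-measurable random functions and only a \emph{misspecified} proxy for $T\psi_{\theta_x}$, so the oracle bound must absorb the cross terms between $\DD_1$-error, $\DD_2$-noise and the approximation gap, exploiting $\DD_1\perp\DD_2$ to annihilate the mean-zero contributions and uniformly controlling the Stage~1 risk over $\theta_x$.

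For (i), I would invoke the new sigmoid-DNN approximation guarantee for Besov functions developed in Appendix~\ref{app:a}: a network of depth $\lceil\log_2 d_x\rceil+1$ with $\poly(m)$ width and sparsity realizes some $\theta_x^\ast$ with $\abs{\psi_{\theta_x^\ast}}_{B_{p,q}^s(\XX)}\lesssim 1$ — hence $\theta_x^\ast\in\Theta_x'$ and $R(\theta_x^\ast)\lesssim 1$ — while achieving the \emph{adaptive} ($N$-term, $\Delta$-free) B-spline rate in $L^2(\PP_{\XX})$, built as a two-resolution construction so that the residual $\psi_{\theta_x^\ast}-f_{\str}$ is, up to negligible error, supported on B-spline resolutions $\ge k$ for $2^k\asymp m^{1/(2s+2\gamma_1+d_x)}$. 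Applying Assumption~\ref{ass:link} resolution-by-resolution to this residual and summing then yields $\norm{T(\psi_{\theta_x^\ast}-f_{\str})}_{L^2(\PP_\mathcal{Z})}^2\lesssim 2^{-2(s+\gamma_1)k}\asymp m^{-\frac{2s+2\gamma_1}{2s+2\gamma_1+d_x}}$, which is the first displayed term and the precise point where the extra $2\gamma_1$ in the exponent — and the avoidance of the spatial-inhomogeneity penalty $\Delta=d_x(1/p-1/2)_+$ — originates. The domain restriction \eqref{eqn:restricted}, the Besov-seminorm penalty \eqref{eqn:soboreg} with $\lambda$ tuned to the rate, and the stronger Assumption~\ref{ass:alternative} without penalty each serve the same role: forcing the selected predictor to have controlled Besov norm, which is exactly what permits invoking Assumption~\ref{ass:smooth}.

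For (iii), the key point is that on $\Theta_x'$ every Stage~2 predictor satisfies $\psi_{\theta_x}\in C\cdot\UU(B_{p,q}^s(\XX))$ up to normalization, so Assumption~\ref{ass:smooth} places the whole family of Stage~1 targets $\{T\psi_{\theta_x}\}$ inside a \emph{fixed} Besov ball $\lesssim\UU(B_{p',q'}^{s'}(\mathcal{Z}))$ whose complexity does not grow with $m,n$. A target-uniform version of the sigmoid-DNN nonparametric-regression bound of Appendix~\ref{app:a} (with $\FF_z$ a $\poly(m)$-parameter smooth DNN class) then gives $\sup_{\theta_x\in\Theta_x'}\EE{\DD_1}{\norm{\hatE{\psi_{\theta_x}}-T\psi_{\theta_x}}_{L^2(\PP_\mathcal{Z})}^2}\lesssim m^{-\frac{2s'}{2s'+d_z}}\log m$, and the same fixed Besov-ball complexity bounds the Stage~2 estimation error (ii) at rate $n^{-\frac{2s'}{2s'+d_z}}\log n$ via a peeling / localization argument. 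Since (ii) and (iii) share the exponent $2s'/(2s'+d_z)$ in $n$ and $m$ respectively, their sum is $\asymp(m\wedge n)^{-\frac{2s'}{2s'+d_z}}\log(m\wedge n)$, the second displayed term. Choosing $\lambda$ asymptotic to the overall rate, optimizing $k$, and taking expectation over $\DD_1,\DD_2$ completes the argument.

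The main obstacle is the oracle inequality together with (i): one must (a) engineer the two-stage bound so that a \emph{misspecified} Stage~1, whose regression target $T\psi_{\theta_x}$ itself moves with the Stage~2 parameter, contributes only the additive term (iii) — which demands careful decoupling of the two datasets and uniform-over-$\theta_x$ control of the Stage~1 error; and (b) prove the new sigmoid-DNN approximation result that is \emph{simultaneously} rate-optimal in $L^2(\PP_{\XX})$ (the adaptive $N$-term B-spline rate, not the linear-projection rate) and bounded in Besov norm, since without the Besov-norm control one cannot feed the predictor into Assumption~\ref{ass:smooth}, and without $L^2$-optimality the link condition would only deliver the weaker exponent $2(s-\Delta)+2\gamma_1$. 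A further delicate point is arguing that the Stage~2 estimation error is governed by the \emph{projected} class $\{T\psi_{\theta_x}\}\subseteq\UU(B_{p',q'}^{s'}(\mathcal{Z}))$ rather than the vastly larger $\poly(m)$-parameter class $\FF_x$ itself.
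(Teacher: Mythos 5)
Your decomposition is the same as the paper's: an oracle inequality for the Stage~2 ERM conditioned on $\DD_1$ (paper's Lemma~\ref{thm:sh}, giving the approximation term plus $\log\mathcal{N}_z/n$), plus a supremal Stage~1 propagation term $\sup_{\theta_x}\norm{T\psi_{\theta_x}-\hatE{\psi_{\theta_x}}}_{L^2(\PP_\mathcal{Z})}^2$, with the two displayed rates traced to exactly the right sources — the link condition applied to a high-resolution residual for the $m^{-(2s+2\gamma_1)/(2s+2\gamma_1+d_x)}$ term, and the shared $N_z$-entropy/approximation trade-off for the $(m\wedge n)^{-2s'/(2s'+d_z)}$ term. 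Your identification of the two essential approximation-theoretic inputs (a sigmoid-DNN construction that is simultaneously $L^2$-rate-optimal in the adaptive $N$-term sense and bounded in Besov norm, so that Assumption~\ref{ass:smooth} applies to the learned predictor) also matches Theorem~\ref{thm:besovappx}.

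The one genuine gap is the mechanism for the step you call (iii). You write that ``a target-uniform version of the sigmoid-DNN nonparametric-regression bound'' yields $\sup_{\theta_x}\EE{\DD_1}{\norm{\hatE{\psi_{\theta_x}}-T\psi_{\theta_x}}_{L^2(\PP_\mathcal{Z})}^2}\lesssim m^{-2s'/(2s'+d_z)}\log m$, but no off-the-shelf uniform regression bound applies here: the naive reduction to a $\delta_x$-cover $(\psi_{\theta_{x,j}})_j$ of $\FF_x$ fails because the Stage~1 map $\theta_x\mapsto\hatE{\psi_{\theta_x}}$ is not stable — $\norm{\psi_{\theta_x}-\psi_{\theta_{x,j}}}_{L^\infty}\le\delta_x$ gives no control on $\norm{\hatE{\psi_{\theta_x}}-\hatE{\psi_{\theta_{x,j}}}}$, and both the regression target $T\psi_{\theta_x}$ and the residual noise $\psi_{\theta_x}(x_i)-T\psi_{\theta_x}(z_i)$ move with $\theta_x$. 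The paper resolves this with a \emph{dynamic}, data-dependent extended cover $\hc$ (and a population analogue $\cstar$) indexing pairs $(j,k)$ that jointly approximate $(\psi_{\theta_x},\hatE{\psi_{\theta_x}})$ in $\FF_x\times\FF_z$, together with Bernstein and subgaussian-maximum bounds over these covers and a self-bounding step that absorbs half of the supremal Stage~1 error back into itself; this is the bulk of Appendix~\ref{app:b}. You correctly flag uniform-over-$\theta_x$ control as the main obstacle, but the proposal as written does not supply the construction that makes it go through.
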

See below for discussion. Rather surprisingly, the rate in Stage 2 samples $n$ depends only on the Stage 1 smoothness $s'$, while the rate in Stage 1 samples $m$ depends on the smoothness of both stages. We remark that $\FF_z$ (and $\FF_x$ if Assumption \ref{ass:alternative} is used) can be replaced with ReLU DNNs since smoothness of the network output is not required. In this case, the network depth must scale logarithmically in $m,n$ and the log factors in \eqref{eqn:projup} are replaced by $\log^3$.

\paragraph{Projected minimax lower bound.}  We now demonstrate the minimax optimality of the above learning rate, under the following key assumption. %As usual, we state our assumptions first. The following is required to lower bound the error in $L^2(\PP_{\XX})$ by the ordinary $L^2$ error over the domain.

%\begin{ass}\label{ass:density}
%$\PP_{\XX}$ has Lebesgue density bounded below; more generally, the Radon-Nikodym derivative $\frac{\rd m_{\XX}}{\rd\!\PP_{\XX}}$ w.r.t. Lebesgue measure $m_{\XX}$ exists and is bounded above, so $\norm{\cdot}_{L^2(\PP_{\XX})} \gtrsim \norm{\cdot}_{L^2(\XX)}$. \end{ass}

\begin{ass}[reverse link condition]\label{ass:reverse}
There exists $\gamma_0\ge\gamma_1$ such that for all $f\in P_r^{/k}$,
\begin{equation*}
2^{-\gamma_0 k} \norm{f}_{L^2(\PP_{\XX})} \lesssim \norm{Tf}_{L^2(\PP_\mathcal{Z})}.
\end{equation*}
\end{ass}

This is equivalent to assuming a polynomial rate for the \textit{sieve measure of ill-posedness}, which depends on the operator $T$ and can be estimated from the data along with $\gamma_1$ with the method in \citep{Blundell07}. We are most interested in the case $\gamma_0=\gamma_1$, which gives a precise characterization of the decay of $T$. This assumption also allows us to connect the exponent $s'$ appearing in \eqref{eqn:projup} to the target smoothness $s$. Comparing the sizes of the standard and projected unit balls, we can show:
\begin{lemma}\label{thm:sprime}
For Assumptions \ref{ass:smooth} and \ref{ass:reverse} to both be satisfied, it must hold that
\begin{equation}\label{eqn:sprime}
s'/d_z\le (s+\gamma_0)/d_x.
\end{equation}
\end{lemma}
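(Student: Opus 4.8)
The plan is to derive the inequality \eqref{eqn:sprime} from a dimension/entropy-counting argument: compare the ``size'' of the image $T[\UU(B_{p,q}^s(\XX))]$ as estimated two different ways. On the one hand, Assumption~\ref{ass:smooth} says this image sits inside a ball of the Besov space $B_{p',q'}^{s'}(\mathcal{Z})$. On the other hand, the reverse link condition Assumption~\ref{ass:reverse} says that $T$ cannot shrink the finite-dimensional B-spline subspaces $P_r^{/k}\subset B_{p,q}^s(\XX)$ by more than the factor $2^{-\gamma_0 k}$ in $L^2$. So $T$ restricted to $P_r^{/k}$ is an (almost) injective linear map whose inverse has operator norm $\lesssim 2^{\gamma_0 k}$, and its image is a $\asymp 2^{kd_x}$-dimensional subspace of $B_{p',q'}^{s'}(\mathcal{Z})$ on which the $B_{p',q'}^{s'}$-norm is controlled.

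Concretely, first I would fix a resolution $k$ and work with the unit ball $\UU(P_r^{/k})$ of $P_r^{/k}$ taken in the $L^2(\PP_{\XX})$-norm; since B-splines up to resolution $k$ span a space equivalent to a wavelet-type space, the $B_{p,q}^s$-norm of an element of $\UU(P_r^{/k})$ is at most $\asymp 2^{sk}$ (this is the standard Bernstein inequality for spline spaces, and it is exactly the kind of comparison the paper alludes to with ``comparing the sizes of the standard and projected unit balls''). Hence $2^{-sk}\,\UU(P_r^{/k})\subseteq \UU(B_{p,q}^s(\XX))$ up to constants, and applying $T$ and then Assumption~\ref{ass:smooth} gives $T[2^{-sk}\,\UU(P_r^{/k})]\subseteq C_T\,\UU(B_{p',q'}^{s'}(\mathcal{Z}))$. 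Now I would measure both sides via metric entropy (covering numbers) in $L^2(\PP_{\mathcal{Z}})$ at a suitable scale. The left side: by the reverse link condition $T$ is bi-Lipschitz from $(P_r^{/k}, 2^{-\gamma_0 k}\|\cdot\|_{L^2(\PP_{\XX})})$ into $L^2(\PP_{\mathcal{Z}})$, so the image contains (a constant multiple of) an $L^2(\PP_{\mathcal{Z}})$-ball in a $D_k\asymp 2^{kd_x}$-dimensional subspace of radius $\asymp 2^{-sk}2^{-\gamma_0 k}$; its $\varepsilon$-entropy is $\gtrsim D_k\log(2^{-(s+\gamma_0)k}/\varepsilon)$. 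The right side: the $\varepsilon$-entropy of $\UU(B_{p',q'}^{s'}(\mathcal{Z}))$ in $L^2(\PP_{\mathcal{Z}})$ is the classical Birman–Solomjak bound $\asymp \varepsilon^{-d_z/s'}$. Matching: the left-side entropy must be $\le$ the right-side entropy for every admissible $\varepsilon$. Choosing $\varepsilon$ a small constant times the radius $2^{-(s+\gamma_0)k}$ makes the left side $\gtrsim 2^{kd_x}$, while the right side is $\lesssim (2^{(s+\gamma_0)k})^{d_z/s'} = 2^{(s+\gamma_0)kd_z/s'}$. Letting $k\to\infty$ forces $d_x \le (s+\gamma_0)d_z/s'$, i.e. $s'/d_z \le (s+\gamma_0)/d_x$, which is \eqref{eqn:sprime}.

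The main obstacle I anticipate is making the ``Bernstein-type'' norm comparison $\|f\|_{B_{p,q}^s}\lesssim 2^{sk}\|f\|_{L^2}$ on $P_r^{/k}$ fully rigorous in the regime $p<2$ and $s\le d_x/p$, since there the embedding $B_{p,q}^s(\XX)\hookrightarrow L^2(\XX)$ is only borderline and one has to be careful about which norm the unit ball is taken in and about the (bounded-density) hypothesis on $\PP_{\XX}$ invoked in Assumption~\ref{ass:str}; one cleanly handles this by passing to the tensor B-spline coefficient sequence description of the Besov norm and using that all $\ell^p$ norms of a $D_k$-dimensional coefficient vector are comparable up to $D_k^{|1/p-1/2|}$-type factors, which only shifts constants and exponents in a way that still yields \eqref{eqn:sprime} in the limit. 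A secondary technical point is ensuring the reverse link condition — stated only for $f\in P_r^{/k}$ — indeed gives a two-sided (bi-Lipschitz) bound on $T|_{P_r^{/k}}$ so that the image genuinely contains a full-dimensional Euclidean ball; this is immediate since $\|Tf\|_{L^2(\PP_{\mathcal{Z}})}\le\|T\|_{\mathrm{op}}\|f\|_{L^2(\PP_{\XX})}$ gives the upper Lipschitz bound for free and Assumption~\ref{ass:reverse} gives the lower one.
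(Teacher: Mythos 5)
Your overall strategy is the same as the paper's: compare the metric entropy of $T[\UU(B_{p,q}^s(\XX))]$ from above via Assumption \ref{ass:smooth} and the classical bound $\log\mathcal{N}(\UU(B_{p',q'}^{s'}(\mathcal{Z})),L^2,\delta)\asymp\delta^{-d_z/s'}$, and from below via a resolution-$k$ B-spline packing whose $L^2(\PP_{\XX})$-separation survives under $T$ at the cost of $2^{-\gamma_0 k}$ by Assumption \ref{ass:reverse}; matching at $\delta\asymp 2^{-(s+\gamma_0)k}$ and letting $k\to\infty$ gives \eqref{eqn:sprime}. The difference is the choice of packing, and this is where there is a genuine gap. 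Your volumetric argument requires the Bernstein-type inclusion $2^{-ks}\,\UU_{L^2}(P_r^{/k})\subseteq C\,\UU(B_{p,q}^s(\XX))$. Via the coefficient-sequence characterization, $\norm{f}_{B_{p,q}^s}\asymp 2^{k(s-d_x/p)}\norm{\beta}_{\ell^p}$ and $\norm{f}_{L^2}\asymp 2^{-kd_x/2}\norm{\beta}_{\ell^2}$ on $P_r^{/k}$, so the inclusion is equivalent to $\norm{\beta}_{\ell^p}\lesssim D_k^{1/p-1/2}\norm{\beta}_{\ell^2}$ with $D_k\asymp 2^{kd_x}$. This holds by H\"older precisely when $p\le 2$; for $p>2$ it fails (take $\beta$ a single coordinate vector), and the honest substitute $\norm{\beta}_{\ell^p}\le\norm{\beta}_{\ell^2}$ forces you to shrink the ball by the extra factor $2^{-kd_x(1/2-1/p)}$. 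That is not a harmless constant: it shifts the radius exponent and your argument then only yields $s'/d_z\le (s+\gamma_0+d_x(1/2-1/p))/d_x$, strictly weaker than \eqref{eqn:sprime}. Your closing remark that the $\ell^p$--$\ell^2$ comparison ``only shifts constants and exponents in a way that still yields \eqref{eqn:sprime} in the limit'' is therefore wrong, and you have also flagged the wrong regime: $p<2$ is exactly where your inclusion is fine, while $p>2$ is where it breaks.

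The fix is what the paper does: replace the full $L^2$-ball by a hypercube packing $f_v=2^{-ks}\epsilon\sum_{\ell}\beta_{v,\ell}\,\omega_{k,\ell}$ with $\beta_{v,\ell}\in\{\pm1\}$ and a Gilbert--Varshamov subset of $2^{c2^{kd_x}}$ codewords. For $\pm1$ coefficient vectors one has $\norm{\beta_v}_{\ell^p}=D_k^{1/p}$ exactly, so $\norm{f_v}_{B_{p,q}^s}\asymp\epsilon$ for \emph{every} $p$, while the Hamming separation still gives $\norm{f_v-f_{v'}}_{L^2}\gtrsim 2^{-ks}\epsilon$; applying Assumption \ref{ass:reverse} to the differences (which lie in $P_r^{/k}$) then yields the same $2^{kd_x}$-point, $2^{-k(s+\gamma_0)}$-separated packing of the image, and the entropy comparison proceeds as you describe. (Two smaller points you should also make explicit: the $L^2(\PP_{\mathcal{Z}})$-separation must be converted to Lebesgue $L^2(\mathcal{Z})$ to invoke the Birman--Solomjak bound, which uses the bounded-density clause of Assumption \ref{ass:smooth}; and because B-splines are not orthogonal, a clean lower bound on the $L^2$-separation of the packing requires an argument like the paper's projection onto $P_r^{/k-1}$ -- your ball-based version avoids this for $p\le2$, which is a genuine simplification in that regime.)
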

The lemma is proved in Appendix~\ref{app:sizecomp}, where we also show that equality can hold if e.g. $d_x=d_z$ and $T$ maps B-splines to (scaled) B-splines. If equality is achieved in \eqref{eqn:sprime}, we say that $T$ has \emph{maximal smoothness}. In this case, the DFIV rate in Table~\ref{tab:main} is retrieved.

Now the minimax lower bound for any estimator for the NPIV problem can be derived as follows. This is a nontrivial extension of the known lower bound in the Sobolev setting \citep{Hall05,Chen11}, as B-splines are neither orthogonal nor independent.
\begin{prop}[projected minimax lower bound]\label{thm:lowerproj}
%Suppose $\PP_{\XX}$ is the uniform distribution on $\XX$.%\footnote{This condition can be relaxed to assuming $\PP_{\XX}$ has Lebesgue density bounded below on $\XX$ if $p\le 2$.} 
There exists a distribution $\PP_{\XX}$ such that under Assumptions \ref{ass:noise}\ref{ass:noise2},\ref{ass:str},\ref{ass:link},\ref{ass:reverse}, it holds that
\begin{equation}\label{eqn:projlower}
\inf_{\hat{f}_{\str}:\textup{NPIV}} \sup_{f_{\str}\in \UU(B_{p,q}^s(\XX))} \Ebig{\norm{T\hat{f}_{\str}-Tf_{\str}}_{L^2(\PP_{\mathcal{Z}})}^2} \gtrsim n^{-\frac{2s+2\gamma_0}{2s+2\gamma_1+d_x}},
\end{equation}
where $\hat{f}_{\str}:\textup{NPIV}$ is taken over all measurable functions of the data $(\DD_1,\DD_2)$.
\end{prop}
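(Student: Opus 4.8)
The plan is to prove~\eqref{eqn:projlower} by reducing estimation to a multiple-hypothesis test and applying Fano's inequality together with the Varshamov--Gilbert bound. We take $\PP_{\XX}$ to be the uniform distribution on $\XX$ — it has Lebesgue density bounded above and below, so it is admissible under Assumption~\ref{ass:str} even when $s\le d_x/p$ — and fix a joint law of $(X,Z,\xi)$ realizing the NPIV model~\eqref{eqn:npir} such that $\eta\mid(Z=z)$ is Gaussian with variance $\sigma_0^2$ (so Assumption~\ref{ass:noise}\ref{ass:noise2} holds with the stated KL bound), with $T$ arranged to satisfy Assumptions~\ref{ass:link} and~\ref{ass:reverse} with the prescribed exponents $\gamma_1\le\gamma_0$; e.g.\ a mild convolution-type smoother diagonalized on a B-spline multiresolution with eigenvalue decay $2^{-\gamma k}$ at scale $k$. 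Since the marginal law of $(X,Z)$ does not depend on $f_{\str}$, the first-stage sample $\DD_1$ is ancillary, and the KL divergence between the data laws of two instances comes entirely from $\DD_2$; an estimator taking $(\DD_1,\DD_2)$ as input is therefore no more powerful than one taking $\DD_2$ alone for the purpose of the testing lower bound.

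Fix a resolution $k$ to be optimized at the end. The crux is to build a large, well-separated family of candidates inside $\UU(B_{p,q}^s(\XX))$. Because the B-splines $\omega_{k,\ell}$ are neither orthogonal to one another nor to coarser scales, I would \emph{not} perturb by individual $\omega_{k,\ell}$, but instead use the compactly supported tensor-product B-spline wavelets $\{\tilde\psi_{k,\ell}\}$ of order $r$, which span the detail space $W_k=P_r^{/(k+1)}\ominus_{L^2(\PP_\XX)}P_r^{/k}$, satisfy $\Pi_r^{/k}\tilde\psi_{k,\ell}=0$, and have support of side $O(2^{-k})$. Restricting to a sub-lattice $\Lambda\subseteq I_k$ of interior shifts with pairwise disjoint supports gives $|\Lambda|\asymp 2^{kd_x}$ mutually orthogonal bumps with $\norm{\tilde\psi_{k,\ell}}_{L^2(\PP_\XX)}\asymp 2^{-kd_x/2}$. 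Set $f_\epsilon=\delta\sum_{\ell\in\Lambda}\epsilon_\ell\,\tilde\psi_{k,\ell}$ for $\epsilon\in\{-1,1\}^\Lambda$. The wavelet characterization of the Besov norm gives $\norm{f_\epsilon}_{B_{p,q}^s(\XX)}\asymp \delta\,2^{ks}$ (all coefficients being of equal magnitude, the $\ell^p$-dependence is only through $|\Lambda|$ and cancels with the scale factor), so choosing $\delta\asymp 2^{-ks}$ with a small constant places every $f_\epsilon$ in $\UU(B_{p,q}^s(\XX))$, and $\abs{f_\epsilon}\lesssim\delta$ pointwise by disjoint supports. By Varshamov--Gilbert, extract $\mathcal E\subseteq\{-1,1\}^\Lambda$ with $\log\abs{\mathcal E}\gtrsim\abs{\Lambda}\asymp 2^{kd_x}$ and pairwise Hamming distance $\gtrsim\abs{\Lambda}$; orthogonality and the uniform density then give $\norm{f_\epsilon-f_{\epsilon'}}_{L^2(\PP_\XX)}^2\asymp\delta^2 2^{-kd_x}\,d_H(\epsilon,\epsilon')\asymp\delta^2$ for distinct $\epsilon,\epsilon'\in\mathcal E$.

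Two estimates then finish the argument. For \emph{separation}: $f_\epsilon-f_{\epsilon'}\in P_r^{/(k+1)}$, so the reverse link condition (Assumption~\ref{ass:reverse}) yields $\norm{T(f_\epsilon-f_{\epsilon'})}_{L^2(\PP_\mathcal Z)}^2\gtrsim 2^{-2\gamma_0 k}\norm{f_\epsilon-f_{\epsilon'}}_{L^2(\PP_\XX)}^2\asymp 2^{-k(2s+2\gamma_0)}$. For \emph{information}: by Assumption~\ref{ass:noise}\ref{ass:noise2} the KL divergence between the $\DD_2$-laws is at most $\tfrac{n}{2\sigma_0^2}\norm{T(f_\epsilon-f_{\epsilon'})}_{L^2(\PP_\mathcal Z)}^2$, and here we crucially use $\Pi_r^{/k}(f_\epsilon-f_{\epsilon'})=0$: applying the forward link condition (Assumption~\ref{ass:link}) at resolution $k$ to the function $f_\epsilon-f_{\epsilon'}$ itself gives $\norm{T(f_\epsilon-f_{\epsilon'})}_{L^2(\PP_\mathcal Z)}^2\lesssim 2^{-2\gamma_1 k}\norm{f_\epsilon-f_{\epsilon'}}_{L^2(\PP_\XX)}^2\lesssim 2^{-k(2s+2\gamma_1)}$, whence $\KL{P_\epsilon}{P_{\epsilon'}}\lesssim n\,2^{-k(2s+2\gamma_1)}$. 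Choosing $2^k\asymp n^{1/(2s+2\gamma_1+d_x)}$ makes this KL bound $\lesssim 2^{kd_x}\lesssim\log\abs{\mathcal E}$, so Fano's inequality gives $\inf_{\hat f_{\str}}\max_{\epsilon\in\mathcal E}\Ebig{\norm{T\hat f_{\str}-Tf_\epsilon}_{L^2(\PP_\mathcal Z)}^2}\gtrsim 2^{-k(2s+2\gamma_0)}\asymp n^{-\frac{2s+2\gamma_0}{2s+2\gamma_1+d_x}}$, which is~\eqref{eqn:projlower}.

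The main obstacle is precisely the one flagged after the statement: the textbook minimax construction packs coefficients in an orthonormal basis, but the B-spline system of this paper is neither orthogonal nor independent across scales. Replacing single B-splines with compactly supported B-spline wavelets spanning $W_k$ resolves this on three fronts at once — sub-lattice thinning produces mutual orthogonality (needed to compute $L^2$ norms and to run Varshamov--Gilbert), the vanishing coarse-scale content $\Pi_r^{/k}g=0$ is exactly what lets the forward link condition contribute the $2^{-2\gamma_1 k}$ factor to the KL bound (using only $\norm{T}_{\mathrm{op}}\le 1$ would replace $\gamma_1$ by $0$ in the denominator and give the wrong exponent), and the wavelet characterization of $\norm{\cdot}_{B_{p,q}^s}$ pins down the admissible amplitude $\delta\asymp 2^{-ks}$. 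The remaining points — exhibiting a valid NPIV instance with the prescribed $(\gamma_0,\gamma_1)$ and Gaussian-type noise, and discarding boundary shifts of the wavelet lattice — cost only constants and should be routine.
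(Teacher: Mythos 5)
Your proposal is correct and follows the same architecture as the paper's proof: reduce to the indirect model in which only the Stage~2 data carry information about $f_{\str}$ (your ancillarity observation for $\DD_1$ plays the role of the paper's NPIR reduction), pack $\asymp 2^{kd_x}$ sign-perturbations at scale $k$ that are orthogonal to a coarser spline space so that the forward link condition (Assumption~\ref{ass:link}) gives the KL bound $\lesssim n\,2^{-k(2s+2\gamma_1)}$ while the reverse link condition (Assumption~\ref{ass:reverse}) gives the separation $\gtrsim 2^{-k(2s+2\gamma_0)}$, then apply Varshamov--Gilbert together with Fano (the paper uses the Yang--Barron variant, which is the same device). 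The one substantive difference is how the non-orthogonality of the B-spline system is handled: the paper sets $\tilde{f}_v = f_v - \Pi_r^{/k-1}f_v$, i.e.\ subtracts from each bump its best coarser-scale approximation $\pi_{k,\ell}$ and controls $\norm{\tilde{f}_v}_{B_{p,q}^s}$ via the B-spline sequence norm equivalence, whereas you invoke compactly supported spline wavelets spanning the detail space and the wavelet characterization of $\norm{\cdot}_{B_{p,q}^s}$. The two devices are interchangeable here---your wavelets are finite linear combinations of scale-$(k+1)$ B-splines, so membership in $P_r^{/(k+1)}$ (needed for Assumption~\ref{ass:reverse}) and the vanishing of $\Pi_r^{/k}$ (needed to apply Assumption~\ref{ass:link} with the factor $2^{-\gamma_1 k}$ rather than merely $\norm{T}_{\op}\le 1$) both go through. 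The only point to handle with care is the wavelet characterization in the quasi-Banach regime $p<1$, which the paper permits and where semi-orthogonal spline wavelets have non-compactly-supported duals; this is easily sidestepped by bounding $\norm{f_\epsilon}_{B_{p,q}^s}$ directly from its B-spline coefficients, exactly as the paper does for $\tilde{f}_v$.
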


The proof is given in Appendix \ref{app:lower} and relies on reduction to the NPIR model to which the Yang-Barron method can be applied; in fact, the lower bound holds for all valid estimators of $Tf_{\str}$. The bound is independent of $m$, hence is most informative in the limit $m\to\infty$. The usual nonparametric rate is retrieved when $\gamma_0=\gamma_1=0$. Comparing with the upper bound \eqref{eqn:projup}, we conclude:

\begin{cor}[projected optimality of DFIV]\label{thm:optimalproj}
If $\gamma_0=\gamma_1$ and $T$ has maximal smoothness, DFIV attains the nearly minimax optimal rate $n^{-\frac{2s+2\gamma_0}{2s+2\gamma_0+d_x}}\log n$ in the projected metric if $m= \Omega(n)$.
\end{cor}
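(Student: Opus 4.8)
The plan is to obtain the corollary as a direct consequence of the upper bound in Theorem~\ref{thm:main}, the equality case of Lemma~\ref{thm:sprime} (which is the definition of maximal smoothness), and the lower bound in Proposition~\ref{thm:lowerproj}. No new technical ingredient is required, only elementary exponent arithmetic together with the observation that $m=\Omega(n)$ collapses both the Stage~1 term and the $m\wedge n$ term in \eqref{eqn:projup} to order $n$.

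First I would pin down the relevant exponents. Substituting $\gamma_1=\gamma_0$ into Proposition~\ref{thm:lowerproj} gives the minimax lower bound $\inf_{\hat{f}_{\str}}\sup_{f_{\str}}\E{\norm{T\hat{f}_{\str}-Tf_{\str}}_{L^2(\PP_\mathcal{Z})}^2}\gtrsim n^{-\frac{2s+2\gamma_0}{2s+2\gamma_0+d_x}}$. For the upper bound, maximal smoothness means equality in Lemma~\ref{thm:sprime}, i.e.\ $s'/d_z=(s+\gamma_0)/d_x$. Dividing numerator and denominator of $\frac{2s'}{2s'+d_z}$ by $d_z$ and substituting yields $\frac{2s'}{2s'+d_z}=\frac{2(s+\gamma_0)/d_x}{2(s+\gamma_0)/d_x+1}=\frac{2s+2\gamma_0}{2s+2\gamma_0+d_x}$, so the Stage~2 exponent in \eqref{eqn:projup} coincides with the lower-bound exponent; and with $\gamma_1=\gamma_0$ the Stage~1 exponent $\frac{2s+2\gamma_1}{2s+2\gamma_1+d_x}$ equals the same value.

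Next I would feed $m=\Omega(n)$ into \eqref{eqn:projup}. Since $m\gtrsim n$, we have $m\wedge n\asymp n$, so the second term is $\lesssim n^{-\frac{2s+2\gamma_0}{2s+2\gamma_0+d_x}}\log n$; and $m^{-\frac{2s+2\gamma_1}{2s+2\gamma_1+d_x}}\lesssim n^{-\frac{2s+2\gamma_0}{2s+2\gamma_0+d_x}}$ because the exponents agree and $m\gtrsim n$. Taking $m$ polynomially bounded in $n$ (in particular $m\asymp n$) gives $\log m\lesssim\log n$, so the first term is also $\lesssim n^{-\frac{2s+2\gamma_0}{2s+2\gamma_0+d_x}}\log n$. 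Adding the two contributions, $\EEbig{\DD_1,\DD_2}{\norm{T\hat{f}_{\str}-Tf_{\str}}_{L^2(\PP_\mathcal{Z})}^2}\lesssim n^{-\frac{2s+2\gamma_0}{2s+2\gamma_0+d_x}}\log n$, which matches Proposition~\ref{thm:lowerproj} up to the logarithmic factor, giving the claimed near-minimax optimality.

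There is no genuine obstacle here, as every ingredient is already proved; the only points requiring a little care are (i) verifying the exponent identity $\frac{2s'}{2s'+d_z}=\frac{2s+2\gamma_0}{2s+2\gamma_0+d_x}$ under maximal smoothness, and (ii) noting that the hypothesis "$m=\Omega(n)$" should be read together with a polynomial upper bound on $m$ (equivalently, one may simply take $m\asymp n$) so that the $\log m$ factor in the Stage~1 term does not inflate the rate beyond $\log n$. It is also worth remarking, to connect with the discussion preceding the corollary, that $m\asymp n$ already suffices — in contrast to kernel IV, where $m/n\to\infty$ is needed — precisely because the Stage~1 exponent is no smaller than the target exponent, so a single polynomial factor of $n$ in $m$ makes the Stage~1 error negligible.
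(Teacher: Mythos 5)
Your proposal is correct and matches the paper's (implicit) argument exactly: the corollary is obtained by substituting $\gamma_1=\gamma_0$ and the maximal-smoothness identity $s'/d_z=(s+\gamma_0)/d_x$ into the upper bound \eqref{eqn:projup} and comparing with Proposition~\ref{thm:lowerproj}. Your caveat (ii) about needing a polynomial upper bound on $m$ is unnecessary, since $t\mapsto t^{-\alpha}\log t$ is eventually decreasing, so $m^{-\alpha}\log m\lesssim n^{-\alpha}\log n$ for any $m=\Omega(n)$; otherwise the argument is exactly right.
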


% Corollary~\ref{thm:optimalproj} establish the conditions for DFIV to achieve the minimax optima projected rate, i.e., $\gamma_0 = \gamma_1$ and $T$ has maximal smoothness. 

We also observe two potential factors of suboptimality. If the inequality \eqref{eqn:sprime} is strict, DFIV is upper bounded by the rate $n^{-\frac{2s'}{2s'+d_z}}> n^{-\frac{2s+2\gamma_0}{2s+2\gamma_0+d_x}}$ which is suboptimal due to the increased difficulty (reduced regularity) of the Stage 1 regression. If $\gamma_0>\gamma_1$, the link conditions are loose, so the lower bound also becomes comparatively weaker. In the latter case, it may also be beneficial to scale $m$ as $\Omega\Big(n^{1\vee\frac{2s+2\gamma_1+d_x}{2s+2\gamma_1}\frac{2s'}{2s'+d_z}}\Big)$ rather than $\Omega(n)$, since the rate in $m$ can become slower than $n$ in \eqref{eqn:projup}.

%\begin{equation*}
%\text{(upper bound)} \quad \rlap{$\overbrace{\phantom{n^{-\frac{2s'}{2s'+d_z}} \gtrsim n^{-\frac{2s+2\gamma_0}{2s+2\gamma_0+d_x}}}}^{\approx\text{ if }T\text{ maximally smooth}}$} n^{-\frac{2s'}{2s'+d_z}} \gtrsim \underbrace{n^{-\frac{2s+2\gamma_0}{2s+2\gamma_0+d_x}} \gtrsim n^{-\frac{2s+2\gamma_0}{2s+2\gamma_1+d_x}}}_{\approx\text{ if link conditions match}} \quad \text{(lower bound).}
%\end{equation*}

\subsection{Full Upper and Lower Bounds}\label{sec:non}

Corollary \ref{thm:optimalproj} establishes minimax optimality of the projected MSE, however since $T^{-1}$ is often unbounded in NPIV, we cannot directly deduce the full MSE from the projected rate. Therefore, a more interesting result is whether DFIV can achieve minimax optimality in the non-projected rate. To this end, we prove the following non-projected upper bound in Appendix \ref{app:nonrates}.

\begin{thm}[full upper bound for DFIV]\label{thm:non} Let $p \geq 2$. Under Assumptions \ref{ass:noise}\ref{ass:noise1},\ref{ass:str},\ref{ass:link},\ref{ass:reverse} and Assumption \ref{ass:smooth} with domain restriction \eqref{eqn:restricted} or regularization \eqref{eqn:soboreg}, by choosing $\lambda$ as in Theorem \ref{thm:main}, $\FF_x = \FF_{\DNN}(\lceil\log_2 d_x\rceil +1, \poly(m), \poly(m), \poly(m))$ and $\FF_z$ to be smooth DNN classes, it holds that
\begin{equation}\label{eqn:nonup}
\EEbig{\DD_1,\DD_2}{\norm{\hat{f}_{\str} - f_{\str}}_{L^2(\PP_{\XX})}^2} \lesssim m^{-\frac{2(s-\gamma_0+\gamma_1)}{2s+2\gamma_1+d_x}}\log m + (m\wedge n)^{-\frac{2s'}{2s'+d_z}\frac{s-\gamma_0+\gamma_1}{s+\gamma_1}} \log(m\wedge n).
\end{equation}
The log factors can be improved to $\log^\frac{s-\gamma_0+\gamma_1}{s+\gamma_1}$.
\end{thm}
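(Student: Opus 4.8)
The plan is to derive Theorem~\ref{thm:non} from the already-established projected bound of Theorem~\ref{thm:main} by an interpolation argument that trades off the reverse link condition against the Besov smoothness of the estimator. The key observation is that the reverse link condition (Assumption~\ref{ass:reverse}) only controls $\norm{f}_{L^2(\PP_{\XX})}$ in terms of $\norm{Tf}_{L^2(\PP_{\mathcal{Z}})}$ for functions $f\in P_r^{/k}$ living in the span of B-splines up to resolution $k$, with a loss factor $2^{\gamma_0 k}$. So I would first decompose the error $\hat{f}_{\str} - f_{\str}$ (or rather a suitable B-spline projection of it) using the hierarchical basis: write $g := \hat{f}_{\str} - f_{\str}$ and split $g = \Pi_r^{/k} g + (g - \Pi_r^{/k} g)$. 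The low-frequency part lies in $P_r^{/k}$ so the reverse link condition applies, giving $\norm{\Pi_r^{/k} g}_{L^2(\PP_{\XX})}^2 \lesssim 2^{2\gamma_0 k}\norm{T\Pi_r^{/k} g}_{L^2(\PP_{\mathcal{Z}})}^2 \lesssim 2^{2\gamma_0 k}(\norm{Tg}_{L^2(\PP_{\mathcal{Z}})}^2 + \norm{T(g-\Pi_r^{/k}g)}_{L^2(\PP_{\mathcal{Z}})}^2)$, where the first term is bounded by the projected rate from Theorem~\ref{thm:main} and the second by the link condition (Assumption~\ref{ass:link}) as $2^{-2\gamma_1 k}\norm{g - \Pi_r^{/k}g}_{L^2(\PP_{\XX})}^2$. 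The high-frequency tail $\norm{g - \Pi_r^{/k}g}_{L^2(\PP_{\XX})}$ is controlled by the Besov approximation error: since $p\ge 2$, both $f_{\str}\in\UU(B_{p,q}^s(\XX))$ and $\hat{f}_{\str}$ (which lies in a bounded Besov ball by the domain restriction \eqref{eqn:restricted} or regularization \eqref{eqn:soboreg}) satisfy $\norm{g - \Pi_r^{/k}g}_{L^2(\PP_{\XX})} \lesssim 2^{-sk}$ by standard B-spline approximation theory for Besov functions.

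Putting these pieces together yields, for every resolution $k$,
\begin{equation*}
\norm{g}_{L^2(\PP_{\XX})}^2 \lesssim 2^{2\gamma_0 k}\,\varepsilon_{\mathrm{proj}}^2 + 2^{2(\gamma_0-\gamma_1)k}\,2^{-2sk} + 2^{-2sk},
\end{equation*}
where $\varepsilon_{\mathrm{proj}}^2$ denotes the projected MSE bound from \eqref{eqn:projup}. Since $\gamma_0\ge\gamma_1\ge 0$ and $s > 0$, the dominant tail term is $2^{2(\gamma_0-\gamma_1)k-2sk} = 2^{-2(s-\gamma_0+\gamma_1)k}$ provided $s > \gamma_0 - \gamma_1$ (which is implicit in the regime of interest; note $s - \gamma_0 + \gamma_1 > 0$ is exactly the exponent appearing in \eqref{eqn:nonup}). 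Optimizing over $k$ — balancing $2^{2\gamma_0 k}\varepsilon_{\mathrm{proj}}^2$ against $2^{-2(s-\gamma_0+\gamma_1)k}$ — gives $2^k \asymp \varepsilon_{\mathrm{proj}}^{-1/(s+\gamma_1)}$ and hence $\norm{g}_{L^2(\PP_{\XX})}^2 \lesssim \varepsilon_{\mathrm{proj}}^{2(s-\gamma_0+\gamma_1)/(s+\gamma_1)}$. Substituting the two terms of $\varepsilon_{\mathrm{proj}}^2 = m^{-\frac{2s+2\gamma_1}{2s+2\gamma_1+d_x}}\log m + (m\wedge n)^{-\frac{2s'}{2s'+d_z}}\log(m\wedge n)$ and simplifying the exponent of the first term — note $\frac{2s+2\gamma_1}{2s+2\gamma_1+d_x}\cdot\frac{s-\gamma_0+\gamma_1}{s+\gamma_1} = \frac{2(s-\gamma_0+\gamma_1)}{2s+2\gamma_1+d_x}$ — recovers exactly \eqref{eqn:nonup}, with the log exponent becoming $\frac{s-\gamma_0+\gamma_1}{s+\gamma_1}$ as claimed. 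Because $\EE{\DD_1,\DD_2}{\cdot}$ of a product-free sum is handled termwise and the approximation/link bounds are deterministic, taking expectations at the end is routine; Jensen's inequality is used to pass the expectation inside the $(\cdot)^{(s-\gamma_0+\gamma_1)/(s+\gamma_1)}$ power since that exponent is at most $1$.

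The main obstacle is the control of $\norm{\hat{f}_{\str} - \Pi_r^{/k}\hat{f}_{\str}}_{L^2(\PP_{\XX})}$: one needs the estimator $\hat{f}_{\str}$ itself to lie in a Besov ball of radius independent of the network size, since a generic smooth DNN can have Besov seminorm growing polynomially in the weight bound (footnote~\ref{note1}). This is precisely why the hypotheses invoke the domain restriction \eqref{eqn:restricted} or the Besov-seminorm regularizer \eqref{eqn:soboreg} — under these, $|\hat{f}_{\str}|_{B_{p,q}^s(\XX)}$ is bounded by $C_W$ (deterministically, for \eqref{eqn:restricted}) or in expectation (for \eqref{eqn:soboreg}, where the penalty forces the seminorm to be comparable to that of the rate-optimal approximant), so the B-spline tail estimate $2^{-sk}$ goes through. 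A secondary subtlety is that the projected error controls $\norm{Tg}_{L^2(\PP_{\mathcal{Z}})}$ but the reverse link condition needs $\norm{T\Pi_r^{/k}g}_{L^2(\PP_{\mathcal{Z}})}$; bridging these requires the triangle inequality together with the link condition applied to $g - \Pi_r^{/k}g$, which is why the link condition (Assumption~\ref{ass:link}) appears among the hypotheses of Theorem~\ref{thm:non} even though the projected bound already used it. The restriction $p\ge 2$ enters only to ensure $B_{p,q}^s(\XX)\hookrightarrow L^2(\PP_{\XX})$ with the clean $2^{-sk}$ approximation rate and to avoid the separation gap $\Delta = d_x(1/p - 1/2)_+$; for $p < 2$ the tail estimate degrades and one would pick up the $\Delta$-shifted exponents seen in the fixed-feature comparisons.
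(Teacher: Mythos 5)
Your proposal is correct and follows essentially the same route as the paper: the paper's proof (Appendix~\ref{app:nonrates}) uses exactly the decomposition sketched in \eqref{eqn:trun}, with $N$-term B-spline approximants $f_N,\hat{f}_N\in P_r^{/K}$ playing the role of your $\Pi_r^{/k}g$, applies the reverse and forward link conditions in the same way, bounds the truncation error by $N^{-s/d_x}(\norm{\hat{f}_{\str}}_{B_{p,q}^s(\XX)}+1)$ using the domain restriction or the regularization argument, and then optimizes the deterministic cutoff $N\asymp\lambda^{-d_x/(2s+2\gamma_1)}$ against the projected rate $\lambda=\varepsilon_{\mathrm{proj}}^2$. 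Your exponent bookkeeping matches the paper's; the only cosmetic difference is that the paper takes expectations with a deterministic cutoff rather than invoking Jensen after a data-dependent optimization, which sidesteps the (minor) issue of the random Besov-norm factor under regularization.
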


\paragraph{Stage 2 smoothness revisited.} The control over the smoothness of $\hat{f}_{\str}$ now plays a dual role: besides bounding the difficulty of Stage 1 through $T\hat{f}_{\str}$, it also determines the strength of the reverse link condition that determines the final non-projected MSE, since Assumption \ref{ass:reverse} is stated in terms of the B-spline basis. To be precise, the $L^2$ risk must be bounded by taking $f_N,\hat{f}_N$ to be the width $N$ approximations of $f_{\str},\hat{f}_{\str}$, respectively, and evaluating the truncation
\begin{equation}\label{eqn:trun}
\norm{\hat{f}_{\str} - f_{\str}}_{L^2(\PP_{\XX})} \lesssim \norm{\hat{f}_{\str} - \hat{f}_N}_{L^2(\PP_{\XX})} + 2^{\gamma_0 k} \norm{T\hat{f}_N - Tf_N}_{L^2(\PP_\mathcal{Z})} + \norm{f_{\str} - f_N}_{L^2(\PP_{\XX})}.
\end{equation}
Then the truncation error $\norm{\hat{f}_{\str} - \hat{f}_N}_{L^2(\PP_{\XX})}$ directly depends on the regularity of $\hat{f}_{\str}$, and
smoothness-based restriction or regularization becomes unavoidable to obtain theoretical guarantees. %Of course, a loss of smoothness often does not directly translate to a deterioration of the link condition in real-world scenarios. While the proposed regularizer is different from those often used in practice, it serves to highlight the misspecification issues arising from user-defined DNNs, and the output layer regularization in \citet{Xu21} can be viewed as playing a similar shape regularization role.

\begin{rmk}
We point out that Theorem~\ref{thm:non} only studies the situation when $p \ge 2$. The regime $p<2$ -- where separation with fixed-feature estimators is achieved -- requires special treatment, and is proved under an extended link condition in Section \ref{sec:linear}; see the discussion therein.
\end{rmk}

We now provide the NPIV lower bound, proved in Appendix \ref{app:lower}.%which can be directly compared to \eqref{eqn:nonup} under Assumption \ref{ass:density}.

\begin{prop}[full minimax lower bound]\label{thm:lowerfull}
There exists a distribution $\PP_{\XX}$ such that under Assumptions \ref{ass:noise}\ref{ass:noise2},\ref{ass:str},\ref{ass:link}, it holds that
\begin{equation*}
\inf_{\hat{f}_{\str}:\textup{NPIV}} \sup_{f_{\str}\in \UU(B_{p,q}^s(\XX))} \Ebig{\norm{\hat{f}_{\str}-f_{\str}}_{L^2(\XX)}^2} \gtrsim n^{-\frac{2s}{2s+2\gamma_1+d_x}}.
\end{equation*}
\end{prop}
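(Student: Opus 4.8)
The plan is to prove this via the standard information-theoretic (Fano / Yang-Barron) method over a finite family of structural functions built from the B-spline hierarchy. The key observation is that the Stage~1 sample $\DD_1=\{(x_i,z_i)\}_{i=1}^m$ carries no information about $f_{\str}$ whatsoever -- the law of $(X,Z)$ does not involve $f_{\str}$ -- so that the entire estimation difficulty is borne by the $n$ Stage~2 samples, which is exactly why the rate is free of $m$. I would therefore condition away $\DD_1$ and work with the NPIR model $Y=Tf_{\str}(Z)+\eta$ of \eqref{eqn:npir}, in which $\DD_2$ depends on the hypothesis only through $Tf_{\str}$.

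First I would fix the hard instance: take $\PP_{\XX}$ and $\PP_{\mathcal{Z}}$ with densities bounded above and below (e.g.\ uniform, which also discharges the extra requirement of Assumption~\ref{ass:str} when $s\le d_x/p$), and choose $T$ to be a smoothing operator (in the simplest case $d_z=d_x$ and $T$ a convolution/mollification on the torus) acting near-diagonally in a spline-wavelet decomposition of $L^2$, with contraction of order $2^{-\gamma_1 k}$ on the resolution-$k$ detail space $W_k=P_r^{/k}\ominus P_r^{/(k-1)}$. Such a $T$ is realized by a genuine conditional expectation operator $f\mapsto\E{f(X)|Z}$, satisfies Assumption~\ref{ass:link} by construction, and together with $\eta\mid Z\sim N(0,\sigma^2(z))$, $\sigma_0\le\sigma(\cdot)\le\sigma_1$, fulfils Assumption~\ref{ass:noise}\ref{ass:noise2}. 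At a resolution $k$ (to be optimized) I pick a set $\Lambda_k$ of $M\asymp 2^{kd_x}$ locations carrying localized, vanishing-moment combinations $\widetilde\omega_{k,\ell}$ of order-$r$ B-splines at resolution $k$, chosen to have essentially disjoint supports and to lie in $W_k$, so that $\Norm{\widetilde\omega_{k,\ell}}_{L^2}\asymp 2^{-kd_x/2}$, $\abs{\widetilde\omega_{k,\ell}}_{B_{p,q}^s}\asymp 2^{k(s-d_x/p)}$, and the link condition acts on them with factor $\asymp 2^{-\gamma_1 k}$. For $\sigma\in\{0,1\}^{\Lambda_k}$ set $f_\sigma=\delta_k\sum_{\ell\in\Lambda_k}\sigma_\ell\,\widetilde\omega_{k,\ell}$ with amplitude $\delta_k\asymp 2^{-ks}$ tuned so that $f_\sigma\in\UU(B_{p,q}^s(\XX))$ and $\abs{f_\sigma}\le C$ uniformly in $\sigma$ (Assumption~\ref{ass:str}). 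Disjointness of supports and the Riesz (stability) property of B-splines give $\Norm{f_\sigma-f_{\sigma'}}_{L^2(\PP_{\XX})}^2\asymp\delta_k^2\,2^{-kd_x}\,d_H(\sigma,\sigma')$, while the link condition gives $\Norm{T(f_\sigma-f_{\sigma'})}_{L^2(\PP_{\mathcal{Z}})}^2\lesssim 2^{-2\gamma_1 k}\Norm{f_\sigma-f_{\sigma'}}_{L^2(\PP_{\XX})}^2$.

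Next I would carry out the reduction to testing. Since $\DD_1$ is ancillary and $\DD_2$ sees $f_\sigma$ only through $Tf_\sigma$, Assumption~\ref{ass:noise}\ref{ass:noise2} yields $\KL{P_\sigma^{\otimes n}}{P_{\sigma'}^{\otimes n}}\le\frac{n}{2\sigma_0^2}\Norm{T(f_\sigma-f_{\sigma'})}_{L^2(\PP_{\mathcal{Z}})}^2\lesssim n\,2^{-2\gamma_1 k}\delta_k^2$. Extracting a Varshamov-Gilbert subfamily of cardinality $2^{\Omega(M)}$ with pairwise Hamming distance $\gtrsim M$, the pairwise $L^2(\PP_{\XX})$-separation is $\gtrsim\delta_k^2\asymp 2^{-2ks}$ and the KL-diameter is $\lesssim n\,2^{-2\gamma_1 k}2^{-2ks}$, so Fano's inequality forces $\inf_{\hat f_{\str}}\sup_\sigma\E{\Norm{\hat f_{\str}-f_\sigma}_{L^2(\PP_{\XX})}^2}\gtrsim 2^{-2ks}$ provided $n\,2^{-2\gamma_1 k}2^{-2ks}\lesssim M\asymp 2^{kd_x}$, i.e.\ $2^k\asymp n^{1/(2s+2\gamma_1+d_x)}$. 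Substituting this choice of $k$ gives the claimed rate $n^{-2s/(2s+2\gamma_1+d_x)}$; an essentially equivalent route applies Assouad's lemma coordinatewise and avoids the Varshamov-Gilbert step.

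The main obstacle -- and the reason this genuinely extends rather than transcribes the Sobolev/wavelet lower bounds of \citet{Hall05,Chen11} -- is that the B-spline system is neither mutually orthogonal nor orthogonal across resolutions, so $L^2$ norms and the action of $T$ can no longer be read off via Parseval. Making the two displayed norm estimates and the cardinality count $M\asymp 2^{kd_x}$ simultaneously valid forces one to build perturbations that truly sit in the detail space $W_k$ (so the resolution-$k$ link condition is active on $f_\sigma-f_{\sigma'}$), that remain well localized (so that $\asymp 2^{kd_x}$ of them fit with disjoint supports), and that are correctly normalized in both $L^2$ and the Besov seminorm -- which is exactly where the Riesz/stability bounds and the vanishing-moment structure of B-splines must be used, together with the verification that the chosen smoothing $T$ is a bona fide conditional expectation operator obeying Assumption~\ref{ass:link}.
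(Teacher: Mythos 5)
Your proposal is correct and follows essentially the same route as the paper: reduce to the NPIR model (the Stage~1 data are ancillary for $f_{\str}$), build a hypercube of $\asymp 2^{kd_x}$ B-spline-based perturbations of amplitude $2^{-ks}$ living in the detail space so that the link condition controls the KL divergence by $n\,2^{-2k(s+\gamma_1)}$, extract a Varshamov--Gilbert subfamily, apply Fano/Yang--Barron, and balance at $2^k\asymp n^{1/(2s+2\gamma_1+d_x)}$. The one substantive point of divergence is how you reconcile ``disjointly supported'' with ``lies in $W_k=P_r^{/k}\ominus P_r^{/(k-1)}$'': you posit compactly supported, vanishing-moment prewavelet combinations (Chui--Wang type) that are simultaneously localized and orthogonal to $P_r^{/(k-1)}$, whereas the paper keeps the raw disjointly supported B-splines $\omega_{k,\ell}$, subtracts the global projection $\Pi_r^{/(k-1)}$, and lower-bounds the pairwise separation by summing \emph{local} best-approximation errors $\norm{\omega_{k,\ell}-\pi_{k,\ell}}_{L^2}\gtrsim 2^{-kd_x/2}$ over the disjoint supports. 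Your device is legitimate but imports spline-wavelet theory whose Besov normalization you would still need to verify; the paper's is more elementary and sidesteps the wavelet construction entirely. Finally, your explicit construction of a smoothing $T$ is unnecessary: both your KL step and the paper's use only the upper bound furnished by Assumption~\ref{ass:link}, so the argument goes through for any $T$ satisfying that assumption, which is the stronger and intended form of the statement.
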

%As expected, the optimal rate becomes worse under faster decay of $T$ (e.g. larger $\gamma_1$ or exponential decay) due to weaker instruments, while the projected MSE \eqref{eqn:projlower} improves; thus the non-projected MSE is a more natural assessment of learning ability.
From this, we conclude the following result, which to our knowledge is the first result along with Corollary~\ref{thm:optimalproj} to establish minimax optimal rates for NPIV with deep neural networks.
\begin{cor}[optimality of DFIV]\label{thm:optimal}
If $p\ge 2$, $\gamma_0=\gamma_1$ and $T$ has maximal smoothness, DFIV attains the nearly minimax optimal rate $n^{-\frac{2s}{2s+2\gamma_1+d_x}}(\log n)^\frac{s}{s+\gamma_1}$ if $m= \Omega(n)$.
\end{cor}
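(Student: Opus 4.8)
The plan is to read off the result by combining the full upper bound of Theorem~\ref{thm:non} with the matching lower bound of Proposition~\ref{thm:lowerfull}, after simplifying the exponents in \eqref{eqn:nonup} under the two standing hypotheses $\gamma_0=\gamma_1$ and maximal smoothness. First I would substitute $\gamma_0=\gamma_1$ into \eqref{eqn:nonup}: the factor $s-\gamma_0+\gamma_1$ collapses to $s$, so the first term carries the exponent $\tfrac{2s}{2s+2\gamma_1+d_x}$, the second term carries the exponent $\tfrac{2s'}{2s'+d_z}\cdot\tfrac{s}{s+\gamma_1}$, and (using the improved log power stated in Theorem~\ref{thm:non}) both log factors become $(\log\cdot)^{s/(s+\gamma_1)}$.

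Next I would invoke maximal smoothness, i.e.\ equality in Lemma~\ref{thm:sprime}, which with $\gamma_0=\gamma_1$ reads $s'/d_z=(s+\gamma_1)/d_x$. Writing $d_z = s'd_x/(s+\gamma_1)$ gives $2s'+d_z = s'\bigl(2(s+\gamma_1)+d_x\bigr)/(s+\gamma_1)$, hence $\tfrac{2s'}{2s'+d_z} = \tfrac{2(s+\gamma_1)}{2(s+\gamma_1)+d_x}$, and multiplying by $\tfrac{s}{s+\gamma_1}$ yields exactly $\alpha := \tfrac{2s}{2s+2\gamma_1+d_x}$. Thus both terms of \eqref{eqn:nonup} share the single polynomial decay exponent $\alpha$. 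Under $m=\Omega(n)$ we have $m\wedge n\asymp n$, so the second term is $\lesssim n^{-\alpha}(\log n)^{s/(s+\gamma_1)}$; moreover $t\mapsto t^{-\alpha}\log t$ is eventually decreasing, so the first term $m^{-\alpha}\log m$ is $\lesssim n^{-\alpha}\log n$ once $m\ge n$, and is absorbed into the second. This gives $\EEbig{\DD_1,\DD_2}{\norm{\hat{f}_{\str}-f_{\str}}_{L^2(\PP_{\XX})}^2}\lesssim n^{-\alpha}(\log n)^{s/(s+\gamma_1)}$. Finally, Proposition~\ref{thm:lowerfull} (valid here since $\gamma_0=\gamma_1$ is a legitimate choice for Assumption~\ref{ass:reverse}, being compatible with Lemma~\ref{thm:sprime} at equality) gives the matching lower bound $\gtrsim n^{-\alpha}$, so upper and lower bounds agree up to the logarithmic factor, which is the claimed near-minimax optimality.

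This corollary is essentially exponent bookkeeping on top of Theorem~\ref{thm:non} and Proposition~\ref{thm:lowerfull}, so there is no genuine analytic obstacle; all the difficulty lies in those two results. The only points warranting care are (i) checking that the improved log exponent of Theorem~\ref{thm:non} is indeed $s/(s+\gamma_1)$ after setting $\gamma_0=\gamma_1$, and (ii) confirming that allowing $m$ to be polynomially (rather than exactly linearly) larger than $n$ does not inflate the log factor — which it does not, by the monotonicity of $t^{-\alpha}\log t$ noted above. The hypothesis $p\ge 2$ is inherited directly from Theorem~\ref{thm:non}.
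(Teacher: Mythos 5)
Your proposal is correct and follows exactly the route the paper intends: substitute $\gamma_0=\gamma_1$ and the maximal-smoothness equality $s'/d_z=(s+\gamma_1)/d_x$ into \eqref{eqn:nonup} so that both terms collapse to the exponent $\tfrac{2s}{2s+2\gamma_1+d_x}$, use the improved log power $\tfrac{s}{s+\gamma_1}$ and $m\wedge n\asymp n$, and match against Proposition~\ref{thm:lowerfull}. The only cosmetic slip is that in the absorption step you momentarily revert to writing the first term as $m^{-\alpha}\log m$ rather than $m^{-\alpha}(\log m)^{s/(s+\gamma_1)}$; with the improved exponent (which you correctly invoked earlier) the same monotonicity argument applies and the stated rate follows.
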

 More generally, the upper bound always scales as $n^{-\frac{2s'}{2s'+d_z}\frac{s-\gamma_0+\gamma_1}{s+\gamma_1}}$ for large enough $m$. If $T$ does not have maximal smoothness, the first exponent will deteriorate due to increased difficulty of Stage 1; if the link conditions are loose ($\gamma_0>\gamma_1$), the second exponent will deteriorate.

\paragraph{Optimal splitting.}
We find that the sample requirement for Stage 1 of DFIV in Corollaries \ref{thm:optimalproj}, \ref{thm:optimal} is only $m = \Omega(n)$. In contrast, for kernel IV, \citet{Singh19} and \cite{anony} require $m>\Omega(n)$ to obtain minimax optimal rates, prescribing an asymmetric splitting if the Stage 1, 2 samples are split from a single dataset. However, this leads to a suboptimal scaling in the total number of samples $m+n$. Our results show that DFIV requires strictly less Stage 1 data, retrieving the same optimal rate in the total number of samples.

\section{Separation with Fixed-feature IV}\label{sec:linear}

It has been empirically observed that DFIV outperforms existing methods especially when the structural function is non-smooth or discontinuous; see the experiments on the conditional average treatment effect and behavior reinforcement learning tasks in \citet{Xu21}. In this section, we study the efficiency of DFIV in the setting where $f_{\str}$ lives in a Besov space with $p < 2$. In this regime, we rigorously establish a separation between DFIV and \emph{linear} estimators including sieve or kernel-based algorithms, proving the superiority of deep neural features over non-adaptive methods.

For ordinary regression in a $d$-dimensional Besov space, the minimax rate over all linear estimators is known to be lower bounded by $n^{-\frac{2(s-\Delta)}{2(s-\Delta)+d}}$ where $\Delta = d(1/p-1/2)_+$ determines the degree of separation from the optimal rate \citep{Donoho98, Zhang02}. This occurs because functions in the regime $p<2$ are highly spatially inhomogeneous, consisting of both jagged (possibly discontinuous when $s<d/p$) and smooth regions. Adaptive models such as DNNs have an inherent advantage over linear estimators by allocating more complexity (i.e. choosing higher-frequency basis elements as needed) in regions of low smoothness to capture spatial variability efficiently \citep{Donoho98, Suzuki19}.

A \emph{linear IV estimator} is formally defined as any estimator of $f_{\str}$ of the following form,
\begin{equation*}
\hat{f}_L(x) = \sum_{i=1}^n u_i(x,(\tilde{z}_i)_{i=1}^n,\DD_1) \tilde{y}_i, \quad u_1,\cdots,u_n:\XX\times\mathcal{Z}^n\times (\XX\times\mathcal{Z})^m \to \RR.
\end{equation*}
We only require linearity w.r.t. the Stage 2 responses, as the lower bound will again hold for any reduction to the NPIR model \eqref{eqn:npir}. fixed-feature methods such as 2SLS, nonparametric 2SLS \citep{Newey03}, Nadaraya-Watson methods \citep{Carra07, Darolles11}, sieve IV \citep{Newey03, Chen17}, kernel IV \citep{Singh19,anony}, and moment-based methods \citep{RuiZhang23} are all examples of linear IV estimators.

We first show that all linear IV estimators incur a degree of suboptimality when the target function possesses low homogeneity. The following lower bound is proved in Appendix \ref{app:linear} by adapting the approach in \citet{Zhang02} for univariate regression to IV regression.

\begin{thm}[lower bound for linear IV estimators]\label{thm:lineariv}
Under Assumptions \ref{ass:noise}\ref{ass:noise2},\ref{ass:str},\ref{ass:link} and assuming $\PP_{\XX}$ has Lebesgue density bounded above, for $\Delta = d_x(1/p-1/2)_+$, it holds that
\begin{equation*}
\inf_{\hat{f}_L:\textup{linear}} \sup_{f_{\str}\in \UU(B_{p,q}^s(\XX))} \Ebig{\norm{\hat{f}_L-f_{\str}}_{L^2(\XX)}^2} \gtrsim n^{-\frac{2(s-\Delta)}{2(s-\Delta) +2\gamma_1+d_x}}.
\end{equation*}
\end{thm}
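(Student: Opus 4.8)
The plan is to reduce the IV lower bound to an ordinary regression lower bound over a carefully chosen finite subfamily of Besov functions, then invoke the classical linear-estimator obstruction of \citet{Zhang02,Donoho98}. First I would fix a resolution level $k$ (to be optimized at the end) and restrict attention to structural functions of the form $f_{\str} = \sum_{\ell} c_\ell\,\omega_{k,\ell}$ living in $P_r^{/k}$, with the coefficient vector $c$ ranging over a hyperrectangle scaled so that the Besov norm is $O(1)$. The key point from the $p<2$ regime is that the $B_{p,q}^s$-unit ball, when intersected with a single resolution shell, is (up to constants) an $\ell_p$-ball of radius $\sim 2^{-k(s+d_x/p - d_x/2)}$ in the $\ell_2$-normalized coefficients — an $\ell_p$-body with $p<2$ is exactly the geometry in which linear estimators are provably suboptimal. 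So the construction mirrors Zhang's: embed a "spike" family where the signal is concentrated on few coordinates, against which any linear rule must hedge uniformly.

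Next I would push this family through the IV structure. Because $f_{\str}\in P_r^{/k}$, the reverse link condition (Assumption~\ref{ass:reverse}) gives $\norm{Tf}_{L^2(\PP_\mathcal{Z})} \gtrsim 2^{-\gamma_1 k}\norm{f}_{L^2(\PP_{\XX})}$ — wait, more precisely I need the link condition (Assumption~\ref{ass:link}) in the other direction to bound the information available to the estimator: $Tf$ for $f$ in this shell has $L^2(\PP_\mathcal{Z})$-norm at most $\lesssim 2^{-\gamma_1 k}\norm{f}_{L^2(\PP_{\XX})}$, so the Stage 2 data $\tilde y_i = (Tf_{\str})(\tilde z_i) + \eta_i$ carries a signal attenuated by $2^{-\gamma_1 k}$ relative to $\norm{f_{\str}}_{L^2}$. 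This is precisely the mechanism that replaces the nonparametric exponent's $d_x$ by $2\gamma_1 + d_x$: the effective per-coordinate signal-to-noise is degraded by $2^{-2\gamma_1 k}$, equivalent to having $\sim 2^{(2\gamma_1+d_x)k}$ "effective parameters." The linearity hypothesis is used only w.r.t. the Stage 2 responses $\tilde y_i$, and since $\hat f_L$ is linear in these, the composed map from the (Gaussian, by Assumption~\ref{ass:noise}\ref{ass:noise2}) noise to $\hat f_L$ is affine, so the standard linear minimax risk formula over an $\ell_p$-body applies verbatim after rescaling the noise level by $2^{\gamma_1 k}$.

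The computation then proceeds exactly as in \citet{Zhang02}: the minimax linear risk over the scaled $\ell_p$-shell with noise variance $\sigma^2 2^{2\gamma_1 k}/n$ per coordinate (there are $\sim 2^{d_x k}$ coordinates) is, after balancing the $\ell_p$-radius $2^{-k(s+d_x/p-d_x/2)}$ against the noise, of order $n^{-\frac{2(s-\Delta)}{2(s-\Delta)+2\gamma_1+d_x}}$ with $\Delta = d_x(1/p-1/2)_+$ arising from the $\ell_p$-vs-$\ell_2$ gap; the optimal $k$ satisfies $2^k \asymp n^{1/(2(s-\Delta)+2\gamma_1+d_x)}$. I would also need to check that $P_r^{/k}$ is genuinely inside a constant multiple of $\UU(B_{p,q}^s(\XX))$ for the chosen coefficient scaling (hierarchical/norm-equivalence properties of B-splines — the paper already uses the discrete characterization $|f|_{B_{p,q}^s}\asymp(\sum_k [2^{ks}w_{r,p}(f,2^{-k})]^q)^{1/q}$) and that boundedness $\abs{f_{\str}}\le C$ can be maintained, which follows since $s > d_x(1/p-1/2)_+$ ensures the shell embeds into $L^\infty$ up to the right constants, or alternatively by clipping. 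The main obstacle I anticipate is not the information-theoretic core but the \emph{geometric bookkeeping with B-splines}: unlike the orthonormal wavelet bases in \citet{Donoho98,Zhang02}, the $\omega_{k,\ell}$ are neither orthogonal nor independent, so the norm equivalences between the coefficient $\ell_p$-geometry and the function-space $B_{p,q}^s$ and $L^2(\PP_{\XX})$ norms hold only up to constants depending on $r$ and on the (bounded, bounded-below-density assumed) measure $\PP_{\XX}$ — carrying these constants through the rescaled linear minimax formula, and ensuring the link condition's B-spline-shell form interacts cleanly with them, is where the argument requires care. A secondary subtlety is that $T$ here is whatever operator realizes the link/reverse-link exponents, so one must exhibit at least one valid $(T,\PP_{\XX},\PP_\mathcal{Z})$ triple (e.g. the diagonal B-spline-to-B-spline example alluded to after Lemma~\ref{thm:sprime}) for which the construction is consistent.
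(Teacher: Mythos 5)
Your high-level plan (single-resolution spike family, link condition attenuating the signal by $2^{-\gamma_1 k}$, bias--variance dichotomy for linear rules, and the final balance $2^k\asymp n^{1/(2(s-\Delta)+2\gamma_1+d_x)}$) matches the paper's, and your exponent arithmetic lands on the right rate. But there is a genuine gap at the step you dismiss as routine: the claim that ``the standard linear minimax risk formula over an $\ell_p$-body applies verbatim after rescaling the noise level by $2^{\gamma_1 k}$.'' There is no such verbatim reduction here. The Donoho--Liu--MacGibbon/Zhang $\ell_p$-body theory lives in the Gaussian sequence model with independent coordinate observations; your data are $\tilde y_i = \sum_\ell c_\ell\, T\omega_{k,\ell}(\tilde z_i)+\eta_i$ at \emph{random} design points, and the functions $T\omega_{k,\ell}$ are neither orthogonal in $L^2(\PP_{\mathcal{Z}})$ nor bounded below in norm --- Assumption~\ref{ass:link} supplies only an \emph{upper} bound on $\norm{T\omega_{k,\ell}}_{L^2(\PP_{\mathcal{Z}})}$, and $T$ may (nearly) annihilate whole directions. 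So there is no isometry carrying the problem to independent noisy coefficient observations with a well-defined ``per-coordinate noise level,'' and this is a structural obstruction, not the constant-chasing ``bookkeeping'' you flag at the end. Also, Assumption~\ref{ass:noise}\ref{ass:noise2} does not posit Gaussian noise; only the variance lower bound $\Var(\eta\mid Z)\ge\sigma_0^2$ is available (which suffices for linear estimators, but your framing leans on Gaussian minimax theory). Finally, your worry about ``exhibiting one valid triple $(T,\PP_{\XX},\PP_{\mathcal{Z}})$'' is misdirected: the theorem asserts the bound for \emph{every} triple satisfying the hypotheses, so the argument must use only the stated upper bound on $T$.

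What the paper does instead, and what your proof would need, is a direct random-design argument. First, a Bernstein bound shows that on a high-probability event $\mathcal{E}$ one has $\sup_{\ell}\frac1n\sum_i T\omega_{k,\ell}(\tilde z_i)^2\le 2\norm{T\omega_{k,\ell}}_{L^2(\PP_{\mathcal{Z}})}^2\lesssim 2^{-k(2\gamma_1+d_x)}$ (this is where $n\gtrsim 2^{k(2\gamma_1+d_x+\epsilon)}$ enters). Second, writing the risk as bias plus the variance term $\sigma_0^2\int q(x)\,\rd x$ with $q(x)=\sum_i\int_{\mathcal{E}}u_i(x,\tilde z)^2\,\rd\PP_{\tilde z}$, a pigeonhole over the $2^{kd_x}$ dyadic cubes locates a cube on which $q$ is small on average. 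Third, placing a \emph{single} scaled spike $f=2^{-k(s-d_x/p)}\epsilon\,\omega_{k,\ell^*-\ell_0}$ in that cube and applying Cauchy--Schwarz, $\bigl|\sum_i Tf(\tilde z_i)u_i(x,\tilde z)\bigr|\le(\sum_i Tf(\tilde z_i)^2)^{1/2}(\sum_i u_i(x,\tilde z)^2)^{1/2}$, yields the dichotomy: either $\mathcal{R}_L\gtrsim 2^{k(2\gamma_1+d_x)}/n$ (variance) or $\mathcal{R}_L\gtrsim 2^{-2k(s-\Delta)}$ (bias), and balancing gives the claimed rate. This uses only linearity in $\tilde y$, the link upper bound, and the variance lower bound --- exactly the information actually available. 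Your proposal needs this (or an equally rigorous substitute for the sequence-model reduction) to close.
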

This is not directly comparable with DFIV, however, as we assumed $p\ge 2$ in Theorem \ref{thm:non} so as to not incur any looseness when evaluating \eqref{eqn:trun} with the reverse link condition. More precisely, $f_{\str},\hat{f}_{\str}$ had to be approximated using B-splines up to a fixed resolution cutoff, which lower bounds the truncation error by the best $N$-term linear approximation error or \emph{Kolmogorov width}, which is suboptimal when $p<2$ \citep{Vybiral08}. Nonetheless, DNNs are capable of choosing from a much wider pool of basis elements to achieve the optimal \emph{nonlinear} approximation rate $N^{-s/d_x}$ for functions of low spatial homogeneity \citep{Suzuki19}. To account for such adaptive selection, we now extend Assumption \ref{ass:reverse} to varying linear subsets of a higher resolution horizon.

\begin{ass}[extended reverse link condition]\label{ass:extended}
Fix $C^* > \frac{\Delta d_x}{s-\Delta} + 1$ and let $S$ be any subset of size $O(2^{kd_x})$ of the set of B-splines up to resolution $\lceil C^*k\rceil$. Then $2^{-\gamma_0 k} \norm{f}_{L^2(\PP_{\XX})} \lesssim \norm{Tf}_{L^2(\PP_\mathcal{Z})}$ holds for all $f\in\mathrm{span}\,S$.
\end{ass}
The intuition is that $T$ is more contractive for functions with higher complexity (i.e. requiring many basis elements to construct), and hence the tighter link constant $2^{-\gamma_0 k}$ holds for any subset with size similar to $P_r^{/k}$, instead of $2^{-\gamma_0 C^* k}$ guaranteed for the whole space $P_r^{/C^*k}$. Such a dimensional separation is in fact easily satisfied: for example, suppose $T$ is the projection in $\RR^D$ to the orthogonal complement of $\mathrm{span}\{v\}$ for a unit vector $v$ such that $|v_1|,\cdots,|v_D|\ge \frac{c}{\sqrt{D}}$ for some $c\in (0,1]$. Then for any size $D'$ subset $A\subset\{1,\cdots,D\}$ and unit vector $w\in\mathrm{span}\{e_j\mid j\in A\}$,
\begin{equation*}
\textstyle \norm{Tw}_2 = 1 - (\sum_{j\in A}w_j v_j)^2 \ge 1 - \sum_{j\in A}v_j^2 = \sum_{j\notin A}v_j^2 \ge c(1-D'/D).
\end{equation*}
Hence as long as $D'=o(D)$ (indeed $D' = D^{1/C^*}$ in Assumption \ref{ass:extended}), it is possible that $T$ is `minimally contractive' on all subspaces $P$ spanned by $D'$ basis elements ($\inf_{w\in P,w\neq 0}\norm{Tw}_2/\norm{w}_2 \approx 1$) even as $D,D'\to\infty$, but `maximally contractive' on the entire space ($\inf_{w\neq 0}\norm{Tw}_2/\norm{w}_2 =0$). %This can occur even considering that B-splines are not completely independent as $\dim P_r^{/k}\asymp 2^{kd_x}$.

With this modification, we conclude the following strict separation in Appendix \ref{app:nonrates}. A notable consequence is that the assumption of maximal smoothness of $T$ can be weakened to a range of $s'$ depending on the gap $\Delta$.

\begin{cor}\label{thm:corsep}
Under the setting of Theorem \ref{thm:non} and also Assumption \ref{ass:extended}, the upper bound \eqref{eqn:nonup} and Corollary \ref{thm:optimal} hold for $0<p<2$.
\end{cor}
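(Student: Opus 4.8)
\textbf{Proof proposal for Corollary~\ref{thm:corsep}.}

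The plan is to revisit the proof of Theorem~\ref{thm:non} and identify precisely where the restriction $p\ge 2$ was used, then show that Assumption~\ref{ass:extended} lets us bypass that obstruction for $0<p<2$. Recall that the non-projected rate was obtained by the truncation bound \eqref{eqn:trun}: we approximate both $f_{\str}$ and $\hat f_{\str}$ by B-spline expansions $f_N,\hat f_N$ of resolution $k$ with $N\asymp 2^{kd_x}$ terms, control the middle term $2^{\gamma_0 k}\norm{T\hat f_N - Tf_N}_{L^2(\PP_\mathcal{Z})}$ by the projected rate of Theorem~\ref{thm:main} (which does \emph{not} require $p\ge 2$), and bound the outer terms $\norm{f_{\str}-f_N}_{L^2(\PP_{\XX})}$ and $\norm{\hat f_{\str}-\hat f_N}_{L^2(\PP_{\XX})}$ by the B-spline approximation error. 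When $p\ge 2$ the error of the best resolution-$k$ (i.e. \emph{linear}) B-spline approximation to a $B^s_{p,q}$ function is $\asymp 2^{-ks}$, matching the nonlinear rate; but when $p<2$ this linear truncation only gives $2^{-k(s-\Delta)}$, which is the source of the suboptimal exponent $s-\Delta$ seen in the fixed-feature lower bounds. The key realization is that $\hat f_{\str}=\psi_{\hat\theta_x}$ is a DNN, and by the adaptive (nonlinear) Besov approximation theory underlying \citet{Suzuki19}, the smooth DNN class realizing $\hat f_{\str}$ can be $N$-term approximated at the \emph{nonlinear} rate $N^{-s/d_x}=2^{-ks}$ using B-splines chosen from a much higher resolution horizon --- specifically up to resolution $\lceil C^* k\rceil$ --- provided $C^*$ is taken large enough relative to $\Delta d_x/(s-\Delta)$, exactly the threshold in Assumption~\ref{ass:extended}. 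The same applies to $f_{\str}$ itself since it lies in $\UU(B^s_{p,q}(\XX))$.

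Concretely, the steps are as follows. First, re-derive the decomposition \eqref{eqn:trun} but with $f_N,\hat f_N$ now taken to be the \emph{best $N$-term nonlinear} B-spline approximations, whose support lies in $P_r^{/\lceil C^*k\rceil}$ (this uses the hierarchical/nested structure of the B-spline system and the characterization of Besov spaces via B-spline coefficients; cf. \citet{Nickl15}, and for the DNN side the approximation construction in Appendix~\ref{app:a}). This yields $\norm{f_{\str}-f_N}_{L^2}, \norm{\hat f_{\str}-\hat f_N}_{L^2}\lesssim 2^{-ks}$ up to a Besov-norm factor, which is controlled since the domain restriction \eqref{eqn:restricted} or regularization \eqref{eqn:soboreg} forces $\abs{\hat f_{\str}}_{B^s_{p,q}(\XX)}=O(1)$. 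Second, since $f_N-\hat f_N$ is a difference of two $N\asymp 2^{kd_x}$-term expansions, its support is a subset $S$ of the resolution-$\lceil C^* k\rceil$ B-splines of size $O(2^{kd_x})$, so Assumption~\ref{ass:extended} applies verbatim to give $\norm{T\hat f_N - Tf_N}_{L^2(\PP_\mathcal{Z})} \gtrsim 2^{-\gamma_0 k}\norm{\hat f_N - f_N}_{L^2(\PP_{\XX})}$, i.e. the reverse link inequality $2^{\gamma_0 k}\norm{T\hat f_N - Tf_N}_{L^2(\PP_\mathcal{Z})}\gtrsim \norm{\hat f_N - f_N}_{L^2(\PP_{\XX})}$ used in the middle term of \eqref{eqn:trun} goes through with the \emph{same} constant $\gamma_0$ as before. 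Third, plug in the projected upper bound \eqref{eqn:projup} for $\norm{T\hat f_N - Tf_N}_{L^2(\PP_\mathcal{Z})}$ (after absorbing the $T$-projection of the truncation errors, which are $O(2^{-ks})$ and hence negligible relative to $2^{-\gamma_0 k}$ times the projected rate), optimize the resolution $k$ as in the proof of Theorem~\ref{thm:non}, and observe that because the outer approximation terms are now $2^{-ks}$ rather than $2^{-k(s-\Delta)}$, the optimization returns exactly the exponents in \eqref{eqn:nonup} with no $\Delta$ penalty. Finally, the statement that maximal smoothness can be weakened to a range of $s'$: in the resulting rate $(m\wedge n)^{-\frac{2s'}{2s'+d_z}\frac{s-\gamma_0+\gamma_1}{s+\gamma_1}}$, optimality (matching Proposition~\ref{thm:lowerfull}) only requires $s'/d_z$ to be large enough that this exponent dominates $\frac{2s}{2s+2\gamma_1+d_x}$, which by Lemma~\ref{thm:sprime} and a short computation holds for all $s'$ in an interval whose lower endpoint depends on $\Delta$ rather than requiring equality in \eqref{eqn:sprime}.

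The main obstacle I anticipate is the second step: verifying that the DNN estimator $\hat f_{\str}$ genuinely admits a nonlinear $N$-term B-spline approximation at rate $2^{-ks}$ \emph{with support confined to resolution $\lceil C^*k\rceil$}, with $C^*$ controlled purely by $s,\Delta,d_x$ and not by the (polynomially-in-$m$ growing) width/sparsity/norm of the network. This requires care because the DNN is not a priori a bounded-norm Besov function realized by a sparse B-spline expansion --- one must go through the approximation-theoretic route of \citet{Suzuki19} / \citet{Donoho98}: a $B^s_{p,q}$ function of unit norm has a near-best $N$-term B-spline approximation using coefficients from resolutions $0$ up to roughly $\frac{d_x}{d_x(s-\Delta)}\log_2 N\asymp C^* k$, and one must check that the DNN, being within the $L^2$-optimal approximation error of such a function (indeed of $f_{\str}$) and itself of controlled Besov seminorm, inherits the same horizon bound up to the slack absorbed into the choice of $C^*>\frac{\Delta d_x}{s-\Delta}+1$. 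Making the bookkeeping between "which function is being approximated by B-splines up to which resolution" fully rigorous --- and ensuring the size-$O(2^{kd_x})$ bound on the active B-spline set $S$ is preserved under taking the difference $f_N-\hat f_N$ and feeding it to Assumption~\ref{ass:extended} --- is where the real work lies; the rest is a re-run of the Theorem~\ref{thm:non} optimization with improved outer terms.
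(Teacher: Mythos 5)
Your proposal is correct and follows essentially the same route as the paper: the actual proof replaces the linear truncations by the adaptive $N$-term B-spline approximations of Lemma \ref{thm:splinedecomp} (whose resolution horizon $\lceil C^*K\rceil$ depends only on $s,\Delta,d_x$), applies Assumption \ref{ass:extended} to the at-most-$2N$-term difference $\hat{f}_N-f_N$, and reruns the Theorem \ref{thm:non} optimization with the improved outer terms $N^{-s/d_x}$. The obstacle you flag is resolved exactly as you suggest --- the domain restriction \eqref{eqn:restricted} or regularization \eqref{eqn:soboreg} bounds $\norm{\hat{f}_{\str}}_{B_{p,q}^s(\XX)}$, so Lemma \ref{thm:splinedecomp} applies to $\hat{f}_{\str}$ with a horizon independent of the network's width, sparsity, or weight norms.
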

Hence comparing with Theorem \ref{thm:lineariv}, DFIV is faster than any linear IV estimator if $\gamma_0=\gamma_1$ and
\begin{equation*}
\frac{(s-\Delta)d_x}{(s-\Delta)d_x +\Delta(2\gamma_0+d_x)} \frac{s+\gamma_0}{d_x} < \frac{s'}{d_z}\le \frac{s+\gamma_0}{d_x} \quad\text{cf. \eqref{eqn:sprime}}.
\end{equation*}

\begin{rmk}
We also show a separation result for the projected rates in Appendix \ref{app:cube}. In this case, Assumption \ref{ass:extended} is not needed since Theorem \ref{thm:main} is valid for all $p>0$, being independent of the reverse link condition. On the other hand, the lower bound requires an additional assumption that $\mathcal{Z}$ is sufficiently `spatially covered' by the projection operator so as to be difficult for spatially non-adaptive estimators. We then prove the projected minimax error for any linear IV estimator is at least $\widetilde{\Omega}\Big(n^{-\frac{2(s-\Delta)+2\gamma_0}{(2(s-\Delta)+2\gamma_1+d_z)\vee(2\gamma_1+d_x)}}\Big)$ (Theorem \ref{thm:cube}), which is worse than the DFIV upper bound \eqref{eqn:projup} if e.g. $\gamma_0=\gamma_1$, $T$ has maximal smoothness and $d_z> (d_x-2(s-\Delta)) \vee \frac{s-\Delta +\gamma_0}{s+\gamma_0}d_x$.
\end{rmk}

\section{Proof Sketch of DFIV Upper Bound}\label{sec:sketch}

In this section, we give a brief outline of our proof of the upper bound for DFIV in Theorem~\ref{thm:main} which is one of our key contributions. The full proof is presented throughout Appendices \ref{app:a}, \ref{app:b} and the non-projected MSE upper bounds (Theorem~\ref{thm:non} and Corollary~\ref{thm:corsep}) are derived using this result in Appendix \ref{app:c}.

First note that in the population limit of \eqref{eqn:s1}, Stage 1 is essentially minimizing the loss
\begin{equation*}
\EEbig{X,Z}{\norm{\psi_{\theta_x}(X)-\phi_{\theta_z}(Z)}^2} = \EEbig{X,Z}{\norm{\psi_{\theta_x}(X)-T\psi_{\theta_x}(Z)}^2} + \norm{T\psi_{\theta_x}-\phi_{\theta_z}}_{L^2(\PP_{\mathcal{Z}})}^2,
\end{equation*}
and so factoring out the projection error, we may view $\norm{T\psi_{\theta_x} - \hatE{\psi_{\theta_x}}}_{L^2(\PP_\mathcal{Z})}$ as the `effective' Stage 1 estimation error. Denote the $\delta_x,\delta_z$-covering numbers of the hypothesis classes $\FF_x,\FF_z$ as $\mathcal{N}_x =\mathcal{N}(\FF_x,\norm{\cdot}_{L^\infty(\XX)}, \delta_x)$ and $\mathcal{N}_z =\mathcal{N}(\FF_z,\norm{\cdot}_{L^\infty(\mathcal{Z})}, \delta_z)$, respectively. We begin by showing the following oracle inequality from the definition of $\hat{\theta}_x$ as the empirical risk minimizer of \eqref{eqn:s2} conditioned on $\DD_1$,
\begin{align*}
&\EEbig{\DD_2}{\norm{Tf_{\str} - \hatE{\psi_{\hat{\theta}_x}}}_{L^2(\PP_\mathcal{Z})}^2} \lesssim \inf_{\theta_x\in\Theta_x} \norm{Tf_{\str} - \hatE{\psi_{\theta_x}}}_{L^2(\PP_\mathcal{Z})}^2 + \frac{\log\mathcal{N}_z}{n} +\delta_z,
\end{align*}
where the second term can be further bounded by
\begin{align*}
2\inf_{\theta_x\in\Theta_x}\norm{Tf_{\str} - T\psi_{\theta_x}}_{L^2(\PP_\mathcal{Z})}^2 + 2\sup_{\theta_x\in\Theta_x}\norm{T\psi_{\theta_x} - \hatE{\psi_{\theta_x}}}_{L^2(\PP_\mathcal{Z})}^2.
\end{align*}
With some manipulation, the projected error can be bounded as
\begin{align*}
&\norm{Tf_{\str} - T\hat{f}_{\str}}_{L^2(\PP_\mathcal{Z})}^2\le \norm{Tf_{\str} - \hatE{\psi_{\hat{\theta}_x}}}_{L^2(\PP_\mathcal{Z})}^2 + 2\norm{\hatE{\psi_{\hat{\theta}_x}} - T\hat{f}_{\str}}_{L^2(\PP_\mathcal{Z})}^2\\
&\lesssim \inf_{\theta_x\in\Theta_x}\norm{Tf_{\str} - T\psi_{\theta_x}}_{L^2(\PP_\mathcal{Z})}^2 + \sup_{\theta_x\in\Theta_x} \norm{T\psi_{\theta_x} - \hatE{\psi_{\theta_x}}}_{L^2(\PP_\mathcal{Z})}^2 +\frac{\log\mathcal{N}_z}{n} +\delta_z.
\end{align*}
Thus it becomes necessary to bound the expected \emph{supremum} of the effective Stage 1 estimation error, $\sup\norm{T\psi_{\theta_x} - \hatE{\psi_{\theta_x}}}_{L^2(\PP_\mathcal{Z})}$ over the hypothesis space, which is highly nontrivial. We aim to achieve this by controlling both the supremum of the corresponding empirical process
\begin{equation*}
\sup_{\theta_x\in\Theta_x} \frac{1}{m} \sum_{i=1}^m \left(T\psi_{\theta_x}(z_i) - \hatE{\psi_{\theta_x}}(z_i)\right)^2,
\end{equation*}
and the supremum of their difference. Naively these quantities are evaluated by reducing to a finite cover $(\psi_{x,j})_{j\le\mathcal{N}_x}$ of $\FF_x$. This does not work for DFIV, however, since the Stage 1 estimator $\hatE{\cdot}$ can be ill-behaved in general; approximating $\psi_{\theta_x}$ by $\psi_{\theta_{x,j}}$ does not guarantee that $\hatE{\psi_{\theta_{x,j}}}$ is a good approximation of $\hatE{\psi_{\theta_x}}$. Instead, we construct an extended \emph{dynamic} or data-dependent cover $\hc$, which maps to a subset of the product cover for $\FF_x\times\FF_z$, that approximates all possible combinations of $\psi_{\theta_x}$ and $\hatE{\psi_{\theta_x}}$. A similar construction $\cstar$ is given for the population version of Stage 1 risk minimization. By converting to the supremum over $\hc,\cstar$ and applying classical concentration bounds, we obtain the following general bound:
\begin{align*}
\EEbig{\DD_1,\DD_2}{\norm{Tf_{\str} - T\hat{f}_{\str}}_{L^2(\PP_\mathcal{Z})}^2\nonumber} &\lesssim \inf_{\theta_x\in\Theta_x}\norm{Tf_{\str} - T\psi_{\theta_x}}_{L^2(\PP_\mathcal{Z})}^2 +\frac{\log\mathcal{N}_x}{m} +\delta_x\\
&\qquad +\sup_{\theta_x\in\Theta_x}\left[\inf_{\theta_z\in\Theta_z} \norm{T\psi_{\theta_x} - \phi_{\theta_z}}_{L^2(\PP_\mathcal{Z})}^2\right] +\frac{\log\mathcal{N}_z}{m\wedge n} + \delta_z,
\end{align*}
in terms of the Stage 2 and Stage 1 (supremal) approximation errors and covering entropies. In particular, the Stage 1 approximation error for each $\theta_x\in\Theta_x$ is determined by the smoothness of $T\psi_{\theta_x}$, which can be bounded by Assumption~\ref{ass:alternative} or Assumption~\ref{ass:smooth} combined with domain restriction \eqref{eqn:restricted}. If instead regularization is used, we can repeat the argument including the regularization term to also obtain a smoothness bound.

It remains to choose the appropriate DNN classes and explicitly evaluate the terms above. The approximation rates for DNNs equal the optimal nonlinear rate; this is known for ReLU DNNs \citep{Suzuki19}, but we prove this result for sigmoid DNNs together with a stronger Besov norm convergence guarantee in Theorem \ref{thm:besovappx}, which allows us to make the above smoothness argument rigorous. Taking the DNN class variables to balance the approximation and entropy terms, we conclude the projected upper bound \eqref{eqn:projup}. \qed

\section{Conclusion}

In this paper, we developed a novel estimation error analysis for two-stage IV regression with neural features. We proved that the DFIV algorithm achieves the minimax optimal rate when the structural function lies in a Besov space and further demonstrated that DFIV can outperform fixed-feature IV estimators by adaptively learning functions with low spatial homogeneity. Furthermore, we showed that a balanced number of Stage 1 and 2 samples suffices to attain optimal performance. These results provide a rigorous foundation for the advantages of DFIV in terms of both adaptivity and sample efficiency, paving the way for further exploration of neural features for causal inference.

\section*{Acknowledgments}

Juno Kim was partially supported by JST CREST (JPMJCR2015). Taiji Suzuki was partially supported by JSPS KAKENHI (24K02905) and JST CREST (JPMJCR2115). Dimitri Meunier, Arthur Gretton and Zhu Li were supported by the Gatsby Charitable Foundation. We thank Zonghao Chen and Zikai Shen for their helpful comments and discussions.

\bibliography{arxivDFIV}
\bibliographystyle{arxivDFIV}

\newpage
\renewcommand{\contentsname}{Table of Contents}
{
\hypersetup{linkcolor=black}
\tableofcontents
}

\newpage
\appendix
\section*{\Large Appendix}

\section{Table of Symbols}
\vspace{-4ex}
\begin{table}[H]
\centering
\caption{List of symbols, given roughly in order of appearance in the main text. Some important symbols from the appendix are also included, while auxiliary symbols are omitted.}
\label{tab:symbols}
\begin{longtable}{@{}p{2.5cm}p{10cm}@{}}
\toprule
\textbf{Symbol} & \textbf{Description} \\[1pt]
\hline\\[-8pt]
$X,Y,Z$ & endogeneous variable, outcome, instrumental variable\\
$\xi,\eta$ & confounder, NPIR model error\\ 
$f_{\str}$ & structure function\\
$\hat{f}_{\str}$ & DFIV estimator\\
$C$ & upper bound for $f_{\str}$\\
$d_x,d_z$ & dimension of $X\in\XX,Z\in\mathcal{Z}$\\
$\PP_{\XX},\PP_{\mathcal{Z}}$ & marginal distribution of $X,Z$\\
$T$ & NPIV projection operator\\
$\sigma_0,\sigma_1$ & regularity constants of $\eta$\\
$B_{p,q}^s(\mathcal{X})$ & Besov space on $\XX$ with smoothness parameters $s,p,q$ \\
$C^s(\XX)$ & H\"{o}lder space of smoothness $s$\\
$W_p^r(\XX)$ & Sobolev space of order $r$ in $L^p$-norm\\
$\UU(\cdot)$ & unit ball of (quasi-)Banach space\\
$\iota_r$ & cardinal B-spline of order $r$\\
$\omega_{k,\ell}$ & B-spline basis element with resolution $k$, location $\ell$\\
$u,V$ & parameters of 2SLS algorithm\\
$\phi_{\theta_z},\psi_{\theta_x}$ & Stage 1,2 DNNs with parameters in $\Theta_z,\Theta_x$\\
$\FF_z,\FF_x$ & Stage 1,2 DNN function classes\\
$\DD_1,\DD_2$ & Stage 1,2 datasets\\
$m,n$ & number of Stage 1, 2 samples $(x_i,z_i), (\tilde{y}_i,\tilde{z}_i)$\\
$\hatE{\cdot}$ & Stage 1 estimator\\
$R$ & Stage 2 regularizer with regularization strength $\lambda$\\
$\sigma$ & smooth activation function\\
$L,W,S,M$ & parameters of DNN class\\
$\ubar{C},\bar{C}$ & thresholds of clipping operation\\
$\gamma_0,\gamma_1$ & exponents in link and reverse link condition\\
$P_r^{/k}$ & linear span of resolution $k$ B-splines\\
$\Pi_r^{/k}$ & projection to $P_r^{/k}$\\
$\Delta$ & degree of separation\\
$\hat{f}_L$ & linear IV estimator\\
 \\
$\mathcal{N}_z,\mathcal{N}_x$ & $\delta_z,\delta_x$-covering numbers of $\FF_z,\FF_x$ in $L^\infty$-norm\\
$\theta_{z,k},\theta_{x,j}$ & covers corresponding to $\mathcal{N}_z,\mathcal{N}_x$\\
$\hc$ & dynamic extended cover\\
$\cstar$ & population extended cover\\
$\check{\omega}$ & smooth DNN approximation of $\omega_{0,0}$\\
$\beta_{k,\ell}$ & coefficient sequence of B-spline expansion\\
$\norm{\cdot}_{b_{p,q}^s}$ & sequence quasi-norm with smoothness parameters $s,p,q$\\
$f_N$ & $N$-basis truncation of $f_{\str}$\\
$\check{f}_N$ & smooth DNN approximation of $f_N$\\
$\Pi$ & Stage 1 population error function\\
$\hp$ & Stage 1 empirical error function\\
$\mathcal{R}_L$ & linear minimax optimal rate\\
\bottomrule
\end{longtable}
\end{table}

\section{Deep Sigmoid Neural Networks}\label{app:a}

In this section we prove key results on the approximation of Besov functions by deep sigmoid neural networks. The results are obtained by carefully applying the Sobolev norm bounds developed in \citet{Ryck21} to the B-spline system. The main difference is that instead of relying on local polynomial approximation, we exploit the fact that B-splines are piecewise polynomials to give a sparse construction similar to \citet{Suzuki19}. This allows us to achieve rate optimality in the sense of best approximation width (Theorem \ref{thm:besovappx}) as well as small covering number (Lemma \ref{thm:entropy}). Moreover, the proof will serve to demonstrate how to extend the construction to any $C^r$ activation with a decaying part (e.g. either of $\lim_{x\to\pm\infty} \sigma(x)$ exists) by manipulating Taylor expansions to approximate polynomials as in \citet{Mhaskar92, Ryck21, Langer21}, or by constructing piecewise polynomials exactly in the case of ReQU.

\subsection{Preliminaries}

We first present some preliminary results. Denote the dimension of the domain as $d=d_x$ for brevity. It is easy to see that the cardinal B-spline $\iota_r$ is a piecewise polynomial of degree $r$ supported on $[0,r+1]$ and $\iota_r\in W_\infty^r(\RR)$, $\im\iota_r \subseteq [0,1]$. We make use of the following expansion \citep{Mhaskar92}:
\begin{equation}\label{eqn:piecewise}
\iota_r(x) = \sum_{j=0}^{r+1} a_j (x-j)_+^r = \sum_{k=0}^{r+1} 1_{[k,k+1)}(x) \sum_{j=0}^{k} a_j (x-j)^r, \quad \text{where}\quad a_j = \frac{(-1)^j}{r!} \binom{r+1}{j}.
\end{equation}
Note that the indicator $1_{[0,1)}(x)$ can be well approximated by the shallow sigmoid network
\begin{equation*}
\delta_B(x):=\sigma(Bx) - \sigma(B(x-1))
\end{equation*}
for some $B> 1$, whose translates form a smooth partition of unity over $\RR$.

\begin{lemma}[smooth decay of $\delta_B$]\label{thm:deltadecay}
It holds that:
\begin{enumerate}
    \item\label{thm:deltadecayone} $\norm{\delta_B(x+1)+\delta_B(x)+\delta_B(x-1) - 1}_{W_\infty^r([0,1])} \lesssim e^{-B}$.
    \item\label{thm:deltadecaytwo} For all $0\le k,\ell\le r+1$ with $|k-\ell|\ge 2$ we have $\norm{\delta_B(x-k)}_{W_\infty^r([\ell,\ell+1))}\lesssim B^r e^{-B}$.
\end{enumerate}
Here, constants depending only on $r$ are hidden.
\end{lemma}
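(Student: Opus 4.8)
The plan is to analyze the shallow sigmoid network $\delta_B(x) = \sigma(Bx) - \sigma(B(x-1))$ directly using the exponential decay of the sigmoid tails. Recall $\sigma(x) = (1+e^{-x})^{-1}$ satisfies $|\sigma(x) - 1_{x>0}| \le e^{-|x|}$ and, crucially, all derivatives $\sigma^{(j)}$ decay like $e^{-|x|}$ away from the origin: for every fixed $j\ge 1$ there is $c_j$ with $|\sigma^{(j)}(x)| \le c_j e^{-|x|}$ for all $x$. By the chain rule, $\frac{d^j}{dx^j}\sigma(Bx) = B^j \sigma^{(j)}(Bx)$, so on any region where $|Bx|\ge B$ we get $|\frac{d^j}{dx^j}\sigma(Bx)| \le c_j B^j e^{-B}$, and similarly for the shifted term.

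For part \ref{thm:deltadecayone}: set $g(x) = \delta_B(x+1) + \delta_B(x) + \delta_B(x-1)$. Expanding, $g(x) = \sigma(B(x+1)) - \sigma(B(x-2))$ by telescoping. For $x\in[0,1]$ we have $B(x+1) \ge B$ and $B(x-2) \le -B$, so $|g(x) - 1| = |\sigma(B(x+1)) - 1| + |\sigma(B(x-2)) - 0| \le 2e^{-B}$ (using $1-\sigma(t) = \sigma(-t) \le e^{-t}$ for $t\ge 0$ and $\sigma(t)\le e^{t}$ for $t\le 0$). For the derivatives $1\le j\le r$, $g^{(j)}(x) = B^j[\sigma^{(j)}(B(x+1)) - \sigma^{(j)}(B(x-2))]$, and again both arguments have absolute value $\ge B$ on $[0,1]$, giving $|g^{(j)}(x)| \le 2c_j B^j e^{-B}$. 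Since $j\le r$ is bounded and $B^r e^{-B} \lesssim_r e^{-B/2} \lesssim e^{-B}$ up to an $r$-dependent constant — actually one should be slightly careful: $B^r e^{-B}$ is not bounded by $e^{-B}$ but the statement hides $r$-dependent constants and one can absorb the polynomial factor by noting $B^r e^{-B} \le C_r e^{-B/2}$; if the paper genuinely wants the bound $\lesssim e^{-B}$ then one instead writes the tails as $e^{-B}$ times the values at argument of size $\ge B$, and bounds $B^j e^{-B} \le B^j e^{-B}$ directly, accepting a $B^r$ prefactor absorbed into the hidden constant only if $B$ ranges over a bounded-from-below but not bounded set — here I would just track $\norm{\cdot}_{W_\infty^r}\lesssim B^r e^{-B}$ and note $B^r e^{-B}\to 0$, matching how the lemma is used; but to match the stated $e^{-B}$ for part (1), observe the highest-order term actually only needs $x\in[0,1]$ so arguments are $\ge B$ in size and one can write $B^j e^{-B} = (B^j e^{-B/2})e^{-B/2}$ with the first factor bounded — I will present it with the honest $B^r e^{-B}$ and remark the hidden constant, or equivalently replace $B$ by $B/2$ throughout which changes nothing asymptotically.

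For part \ref{thm:deltadecaytwo}: fix $k,\ell$ with $|k-\ell|\ge 2$ and $x\in[\ell,\ell+1)$. Then $\delta_B(x-k) = \sigma(B(x-k)) - \sigma(B(x-k-1))$, and both $x-k$ and $x-k-1$ lie outside $(-1,1)$: indeed if $k\le \ell-2$ then $x - k \ge \ell - k \ge 2$ so both arguments are $\ge 1$; if $k\ge \ell+2$ then $x-k < \ell+1-k \le -1$ so both arguments are $\le -1$. Hence $|B(x-k)|, |B(x-k-1)| \ge B$, and for $0\le j\le r$, $\frac{d^j}{dx^j}\delta_B(x-k) = B^j[\sigma^{(j)}(B(x-k)) - \sigma^{(j)}(B(x-k-1))]$ has absolute value $\le 2c_j B^j e^{-B} \lesssim_r B^r e^{-B}$, taking the supremum over $j\le r$ and $x\in[\ell,\ell+1)$ to conclude $\norm{\delta_B(x-k)}_{W_\infty^r([\ell,\ell+1))} \lesssim B^r e^{-B}$.

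The only real obstacle is the bookkeeping of the derivative-decay estimates for $\sigma$: one must first record the elementary fact that $\sigma^{(j)}$ is a polynomial in $\sigma$ (via $\sigma' = \sigma(1-\sigma)$) and hence is bounded, odd/even-symmetric up to the factor $(1-2\sigma)$, and decays exponentially — specifically $|\sigma^{(j)}(t)| \le C_j e^{-|t|}$. This is a short induction: $\sigma^{(1)} = \sigma(1-\sigma)$ and $1-\sigma(t) = \sigma(-t) \le \min(1, e^{-|t|}\cdot e^{\max(t,0)})$... cleaner is: $\sigma(1-\sigma) \le \min(\sigma, 1-\sigma) \le e^{-|t|}$ since $\sigma(t)\le e^{t}$ and $1-\sigma(t) = \sigma(-t) \le e^{-t}$, so the smaller of the two is $\le e^{-|t|}$; then each further derivative multiplies by a bounded factor and differentiates, preserving exponential decay by Leibniz. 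Once this lemma on $\sigma$ is in hand, both parts follow by the substitution-and-chain-rule computation above with no further difficulty.
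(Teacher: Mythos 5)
Your proof is correct and follows essentially the same route as the paper's: telescope the three translates to $\sigma(B(x+1))-\sigma(B(x-2))$, then combine the exponential tail decay $|\sigma^{(j)}(t)|\le C_j e^{-|t|}$ (which the paper imports from De Ryck et al.\ as $|\sigma^{(r)}(x)|\le r^{r+1}(e^x\wedge e^{-x})$ rather than re-deriving it via $\sigma'=\sigma(1-\sigma)$) with the chain-rule factor $B^j$, handling part (2) by the same two-sided case analysis on the sign of $x-k$. Your caveat about the $B^r$ prefactor in part (1) is well taken---the paper's own proof silently drops the $B^j$ chain-rule factors there---but this is harmless downstream, since the bound is only used additively alongside a $B^{-1/2}$ term and $B^r e^{-B}$ can always be absorbed into $C_r e^{-B/2}$.
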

\begin{proof}
(1) The $r$th order derivative of $\sigma$ satisfies $|\sigma^{(r)}(x)|\le r^{r+1}(e^x\wedge e^{-x})$ \citep[Lemma A.4]{Ryck21}, so that
\begin{align*}
&\norm{\delta_B(x+1)+\delta_B(x)+\delta_B(x-1) - 1}_{W_\infty^r([0,1])}\\
&\qquad\qquad\qquad\qquad = \norm{\sigma(B(x+1)) - \sigma(B(x-2))-1}_{W_\infty^r([0,1])}\\
&\qquad\qquad\qquad\qquad\le \norm{\sigma(-B(x+1))}_{W_\infty^r([0,1])} + \norm{\sigma(B(x-2))}_{W_\infty^r([0,1])}\\
&\qquad\qquad\qquad\qquad\le \frac{2}{1+e^B} + 2r^{r+1}e^{-B} \lesssim e^{-B}.
\end{align*}
(2) For any pair $k,\ell$ with $k-\ell\ge 2$ it holds that
\begin{align*}
\norm{\delta_B(x-k)}_{W_\infty^r([\ell,\ell+1))} &\le \norm{\sigma(Bx)-\sigma(B(x-1))}_{W_\infty^r([\ell-k,\ell-k+1))}\\
&\le \sigma(B(\ell-k+1)) + B^r \norm{\sigma^{(r)}(Bx)-\sigma^{(r)}(B(x-1))}_{L^\infty([\ell-k,\ell-k+1))}\\
&\le \frac{1}{1+e^B} + 2r^{r+1}B^r e^{-B}\lesssim B^r e^{-B}.
\end{align*}
The same bound holds when $k-\ell\le -2$ by symmetry.
\end{proof}
We also require a finer control over the boundary decay of $\delta_B$:
\begin{lemma}\label{thm:bdydecay}
It holds for sufficiently large $B$ that $\norm{\delta_B(x+1) x^r}_{W_p^r([0,1])}\lesssim B^{-1/2}$.
\end{lemma}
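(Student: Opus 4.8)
The plan is to expand $g(x):=\delta_B(x+1)\,x^r$ by the Leibniz rule and bound each $L^p([0,1])$-norm of $g^{(j)}$, $0\le j\le r$, separately. The crucial preliminary observation is that on $[0,1]$ one should rewrite $\delta_B(x+1)=\sigma(B(x+1))-\sigma(Bx)=\sigma(-Bx)-\sigma(-B(x+1))$, which exposes the exponential decay: $\sigma$ itself does not decay, but $1-\sigma=\sigma(-\,\cdot\,)$ does on the relevant side, so $|\delta_B(x+1)|\le\sigma(-Bx)\le e^{-Bx}$ uniformly for $x\in[0,1]$. Differentiating annihilates the constant, so for $i\ge 1$ we have $(\delta_B(\,\cdot+1))^{(i)}(x)=B^i[\sigma^{(i)}(B(x+1))-\sigma^{(i)}(Bx)]$, and feeding in the bound $|\sigma^{(i)}(t)|\le i^{i+1}e^{-|t|}$ from \citep[Lemma A.4]{Ryck21} (the same estimate already used above), together with $\sigma^{(i)}(B(x+1))\lesssim e^{-B(x+1)}\le e^{-Bx}$, gives $|(\delta_B(\,\cdot+1))^{(i)}(x)|\lesssim B^i e^{-Bx}$ on $[0,1]$ for every $i\ge 0$. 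The polynomial weight contributes only $(x^r)^{(k)}=\tfrac{r!}{(r-k)!}x^{r-k}\le r!\,x^{r-k}$ on $[0,1]$.

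First I would combine these into $|g^{(j)}(x)|\lesssim\sum_{i=0}^{j}B^i x^{r-j+i}e^{-Bx}=x^{r-j}e^{-Bx}\sum_{i=0}^{j}(Bx)^i$, and since $t\mapsto\sum_{i=0}^{r}t^i e^{-t/2}$ is bounded on $[0,\infty)$, this collapses to $|g^{(j)}(x)|\lesssim x^{r-j}e^{-Bx/2}$ with implied constant depending only on $r$. Then the elementary estimate $\int_0^1 x^{a}e^{-cx}\rd x\le c^{-a-1}\Gamma(a+1)$ ($a\ge 0$, $c>0$) applied with $a=p(r-j)$, $c=pB/2$ yields $\|g^{(j)}\|_{L^p([0,1])}\lesssim B^{-(r-j)-1/p}$. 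Summing over $j$: the terms $j<r$ carry a spare power $x^{r-j}$ with $r-j\ge 1$ and contribute only $\lesssim B^{-1-1/p}$, so the norm is dominated by $j=r$, which gives $\|g\|_{W_p^r([0,1])}\lesssim B^{-1/p}\le B^{-1/2}$ in the regime $p\le 2$ relevant to our B-spline construction — exactly as expected, the bound being governed by the single boundary term $r!\,\delta_B(x+1)$ that survives in $g^{(r)}$ unweighted, whose $L^p$-norm is precisely of order $B^{-1/p}$.

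The main obstacle, though a mild one, is the bookkeeping that shows the weight $x^r$ genuinely suppresses every contribution except the lowest-order boundary one: each derivative that hits $\delta_B(\,\cdot+1)$ in $g^{(j)}$ produces a factor $B$ but is exactly balanced by one power of $x$ shed by the polynomial, so the products $B^i x^{r-j+i}e^{-Bx}=(Bx)^i\cdot x^{r-j}e^{-Bx}$ never blow up, and the leftover weight $x^{r-j}$ is what converts $e^{-Bx}$-integration into the extra gain $B^{-(r-j)}$ for $j<r$. One must also keep track that all the $\sigma^{(i)}(B(x+1))$-type terms fit inside the same envelope $e^{-Bx}$ on $[0,1]$, and note that "$B$ sufficiently large'' is needed only to make $e^{-B}$ negligible against $e^{-Bx}$ near the right endpoint and to fix the implied constants uniformly in $B$.
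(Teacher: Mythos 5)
Your proof is correct, and it is essentially the same strategy as the paper's (write $\delta_B(x+1)=\sigma(-Bx)-\sigma(-B(x+1))$ to expose the decaying branch of the sigmoid, apply Leibniz, and let the weight $x^r$ absorb the factors of $B$ produced by differentiating $\delta_B$), but your execution is cleaner and sharper. The paper splits off the $\sigma(-B(x+1))x^r$ piece in $W_\infty^r$ and handles $\sigma(-Bx)x^r$ by cutting the domain at $x=1/\sqrt{B}$ to get $\norm{x^je^{-Bx}}_{L^\infty([0,1])}\le B^{-j/2}$, whereas your uniform envelope $|g^{(j)}(x)|\lesssim x^{r-j}e^{-Bx/2}$ followed by the Gamma-integral gives the sharp rate $\norm{g^{(j)}}_{L^p([0,1])}\lesssim B^{-(r-j)-1/p}$, hence $\norm{g}_{W_p^r([0,1])}\lesssim B^{-1/p}$. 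You are right to flag that this yields the stated $B^{-1/2}$ only for $p\le 2$; in fact this is not a weakness of your argument but of the statement. The unweighted boundary term $r!\,\delta_B(x+1)$ in $g^{(r)}$ equals $r!(\sigma(B)-\tfrac12)\approx r!/2$ at $x=0$ and only decays on a window of width $\asymp B^{-1}$, with no cancellation from the other Leibniz terms there, so $\norm{g^{(r)}}_{L^p([0,1])}\asymp B^{-1/p}$ and the claimed $B^{-1/2}$ cannot hold for $p>2$ (and fails outright for $p=\infty$). The paper's own derivation reaches $B^{-1/2}$ for this term only by bounding $\int_0^1\sigma(-Bx)^p\,\rd x$ — i.e.\ $\norm{\sigma(-Bx)}_{L^p}^p$ rather than the norm itself — so your $B^{-1/p}$ is the honest rate; it still suffices for the downstream use in Proposition \ref{thm:splineappx}, where one merely needs a bound that is polynomially small in $B^{-1}$ (choose $B\asymp(\epsilon'')^{-(2\vee p)}$ instead of $(\epsilon'')^{-2}$).
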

\begin{proof}
We first note for all $j=1,\cdots,r$ that
\begin{align*}
\norm{x^je^{-Bx}}_{L^\infty([0,1])} &\le \norm{x^j}_{L^\infty([0,1/\sqrt{B}])} \vee \norm{e^{-Bx}}_{L^\infty([1/\sqrt{B},1])}\le B^{-j/2}\vee e^{-\sqrt{B}} \le B^{-j/2}
\end{align*}
for large $B$. We have that
\begin{align*}
&\norm{\delta_B(x+1) x^r}_{W_p^r([0,1])}\\ 
&\le \norm{(1-\sigma(B(x+1))) x^r}_{W_\infty^r([0,1])} + \norm{(1-\sigma(Bx)) x^r}_{W_p^r([0,1])}\\
& = \norm{\sigma(-B(x+1)) x^r}_{W_\infty^r([0,1])} + \norm{\sigma(-Bx) x^r}_{W_p^r([0,1])}.
\end{align*}
Then by the general Leibniz rule and the bound for $|\sigma^{(j)}|$ above, the first term is bounded as
\begin{align*}
&\Norm{\sigma(-B(x+1)) x^r}_{W_\infty^r([0,1])}\\
&\le \Norm{x^r e^{-B(x+1)}}_{L^\infty([0,1])} + \sum_{j=0}^r \binom{r}{j}\frac{r! B^j}{j!} \Norm{\sigma^{(j)}(-B(x+1))x^j}_{L^\infty([0,1])}\\
&\le e^{-B} + r^{r+1}\sum_{j=0}^r \binom{r}{j}\frac{r! B^j}{j!} \Norm{x^j e^{-B(x+1)}}_{L^\infty([0,1])} \lesssim e^{-B},
\end{align*}
and the second term is bounded as
\begin{align*}
&\Norm{\sigma(-Bx) x^r}_{W_p^r([0,1])}\\
&\le \Norm{x^r e^{-Bx}}_{L^\infty([0,1])} + r!\Norm{\sigma(-Bx)}_{L^p([0,1])} + \sum_{j=1}^r \binom{r}{j}\frac{r! B^j}{j!} \Norm{x^j e^{-Bx}}_{L^\infty([0,1])}\\
&\lesssim B^{-r/2} + \Norm{\sigma(-Bx)}_{L^p([0,1])} + (B^{-1/2}+\cdots+B^{-r/2}).
\end{align*}
For the remaining term above we further have
\begin{align*}
\Norm{\sigma(-Bx)}_{L^p([0,1])} =  \int_0^{1/\sqrt{B}} \frac{\rd x}{(1+e^{Bx})^p} + \int_{1/\sqrt{B}}^1 \frac{\rd x}{(1+e^{Bx})^p} \le \frac{1}{2^p\sqrt{B}} + \frac{1}{(1+e^{\sqrt{B}})^p},
\end{align*}
and hence $\Norm{\sigma(-Bx) x^r}_{W_p^r([0,1])}\lesssim B^{-1/2}$.
\end{proof}
The following lemma will also be used to control Sobolev norms.
\begin{lemma}[\citet{Ryck21}, Lemma A.6 and A.7]\label{thm:sobolev}
Let $d\in\NN$, $r\in\ZZ_{\ge 0}$ and $\Omega\subset\RR^d$.
\begin{enumerate}
    \item For $f,g\in W_\infty^r(\Omega)$ it holds that $\norm{fg}_{W_\infty^r(\Omega)} \le 2^r \norm{f}_{W_\infty^r(\Omega)}\norm{g}_{W_\infty^r(\Omega)}$.
\item Let $d'\in\NN$ and $\Omega'\subset\RR^{d'}$. For $f\in C^r(\Omega',\RR)$, $g\in C^r(\Omega,\Omega')$ it holds that
\begin{equation*}
\norm{f\circ g}_{W_\infty^r(\Omega)} \le 16(e^2 r^4 d'd^2)^r \norm{f}_{W_\infty^r(\Omega')} \max_{i=1,\cdots, d'} \norm{g_i}_{W_\infty^r(\Omega)}^r.
\end{equation*}
\end{enumerate}
\end{lemma}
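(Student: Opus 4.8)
The plan is to handle the two bounds separately; both reduce to the multivariate Leibniz and chain rules, and the real work lies in controlling the combinatorial constants.

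For part (1) I would start from the multivariate Leibniz rule: for any multi-index $\alpha$ with $|\alpha|\le r$ one has $D^\alpha(fg)=\sum_{\beta\le\alpha}\binom{\alpha}{\beta}(D^\beta f)(D^{\alpha-\beta}g)$. Since $|\beta|,|\alpha-\beta|\le r$, each factor is dominated by the relevant $W_\infty^r$ norm, so $\norm{D^\alpha(fg)}_{L^\infty(\Omega)}\le\big(\sum_{\beta\le\alpha}\binom{\alpha}{\beta}\big)\norm{f}_{W_\infty^r(\Omega)}\norm{g}_{W_\infty^r(\Omega)}$, and the elementary identity $\sum_{\beta\le\alpha}\binom{\alpha}{\beta}=\prod_{i=1}^d 2^{\alpha_i}=2^{|\alpha|}\le 2^r$ finishes it after maximizing over $|\alpha|\le r$. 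This step is entirely routine.

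For part (2) I would expand an arbitrary order-$r$ derivative as a composition $D^\alpha=\partial_{i_1}\cdots\partial_{i_r}$ of first-order derivatives and apply the chain rule repeatedly; the combinatorial (set-partition) form of the Fa\`a di Bruno formula then gives $D^\alpha(f\circ g)=\sum_{\pi}\sum_{j}(D^{|\pi|}f\circ g)\prod_{B\in\pi}\partial_B g_{j_B}$, a sum over set partitions $\pi$ of $\{1,\dots,r\}$, over labelings $j$ of the blocks by $\{1,\dots,d'\}$, with $D^{|\pi|}f$ an order-$|\pi|$ partial of $f$ and $\partial_B$ the mixed partial of order $|B|$. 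I would then bound each piece: $D^{|\pi|}f$ has order $\le r$, so it is at most $\norm{f}_{W_\infty^r(\Omega')}$; each factor $\partial_B g_{j_B}$ has order $|B|\le r$, hence at most $\max_i\norm{g_i}_{W_\infty^r(\Omega)}=:G$; and there are $|\pi|\le r$ such factors, so the product is at most $G^{r}$ provided $G\ge1$ (which may be assumed, or enforced by replacing $G$ with $\max\{1,G\}$). What remains is bookkeeping: the number of partitions together with the labelings is a Bell-type quantity of size $(\mathrm{poly}(r,d'))^r$, and the multinomial coefficients that occur are bounded by $\alpha!\le r^r$; collecting these and tracking the $d$-dependent overhead from differentiating a $d$-variable composition, exactly as in \citet{Ryck21}, yields the stated constant $16(e^2 r^4 d'd^2)^r$.

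The main obstacle is precisely this constant-tracking in part (2). A naive induction on $r$ that differentiates once and invokes part (1) at each level loses a factor $2^{r-1}$ per step and only yields a $2^{O(r^2)}$ constant, far worse than claimed; the point of working directly with the set-partition expansion is that the blow-up is then merely a Bell number times $(d')^r$, i.e.\ $(\mathrm{poly}(r,d'))^r$, as required. The only other subtlety — the implicit normalization $G\ge 1$ behind writing the $g$-dependence as $\max_i\norm{g_i}_{W_\infty^r}^r$ rather than with a $\max\{1,\cdot\}$ — is harmless here, since in every application of this lemma in the paper the inner maps $g$ are affine or polynomial with large coefficients, so $G\ge 1$ automatically.
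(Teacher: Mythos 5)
This lemma is imported verbatim from \citet{Ryck21} and the paper offers no proof of it, so there is no in-paper argument to compare against; I can only assess your proposal on its own terms. Part (1) is complete and correct: the multivariate Leibniz rule plus $\sum_{\beta\le\alpha}\binom{\alpha}{\beta}=2^{|\alpha|}\le 2^r$ is exactly the standard argument. For part (2) you have identified the right tool --- the set-partition form of Fa\`a di Bruno --- and your diagnosis of why a naive ``differentiate once and invoke part (1)'' induction gives only a $2^{O(r^2)}$ constant is accurate. However, the one step that actually requires work, namely producing the specific constant $16(e^2r^4d'd^2)^r$ from the Bell-number count, the $(d')^{|\pi|}$ block labelings, and the $d$-dependent overhead of the outer multi-index, is not carried out: you close that gap by appealing to ``exactly as in \citet{Ryck21}'', which is circular if the aim is an independent proof. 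As written, part (2) is a correct strategy with the quantitative bookkeeping left undone.

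Your side remark about normalization is a genuine and worthwhile catch rather than a harmless subtlety. As transcribed in the paper the inequality is literally false when $G:=\max_i\norm{g_i}_{W_\infty^r(\Omega)}<1$: take $r=2$, $d=d'=1$, $f=\mathrm{id}$, $g(x)=\epsilon x$ on $[0,1]$, so the left side is $\epsilon$ while the right side is $O(\epsilon^2)$. The statement in \citet{Ryck21} carries a $\max\{1,\norm{g_i}_{W_\infty^r}\}^r$ which the paper's transcription drops. You are right that this is immaterial for every application in the paper (the inner maps are affine with large weights, or the functions $\delta_B$, $\zeta$, all with $W_\infty^r$ norm at least $1$), but the lemma as stated needs that correction.
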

The next two results provide the basic building blocks of our construction.
\begin{lemma}[\citet{Ryck21}, Lemma 3.2]\label{thm:mono}
Let $r\in\ZZ_{\ge 0}$ and $M>0$. For every $\epsilon>0$, there exists a shallow sigmoid network $\zeta:[-M,M]\to\RR$ with width at most $\lfloor\frac{3r}{2}\rfloor+2$ and weights at most $\epsilon^{-r/2}\poly(M)$ such that
\begin{equation*}
\Norm{\zeta(x) - x^r}_{W_\infty^r([-M,M])} \le\epsilon.
\end{equation*}
\end{lemma}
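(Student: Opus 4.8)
The plan is to realize $\zeta$ as a finite combination of rescaled and translated sigmoids that extracts the monomial $x^r$ via a divided--difference argument in the \emph{scale} variable, in the spirit of \citet{Ryck21}. First I would expand $\sigma$ about a point $a$, $\sigma(a+y)=\sum_{k\ge 0}\tfrac{\sigma^{(k)}(a)}{k!}y^k$ for $|y|$ small, and record the structural fact that $\sigma-\tfrac12$ is odd, so $\sigma^{(2j)}(0)=0$ for $j\ge 1$. For a small scale parameter $h>0$ and coefficients $\alpha_i$, nodes $s_i$ and shifts $a_i$ to be fixed independently of $h$, consider
\[
\zeta(x)=\zeta_h(x):=\frac{1}{h^r}\sum_i \alpha_i\,\sigma(h s_i x + a_i)
=\sum_{k\ge 0}\Big(\sum_i \alpha_i \tfrac{\sigma^{(k)}(a_i)}{k!}\,s_i^{\,k}\Big)\,h^{k-r}x^k ,
\]
a shallow sigmoid network whose number of units equals the number of distinct summands.

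Next I would impose the $r+1$ linear conditions on $(\alpha_i)$ that the coefficient of $x^r$ be $1$ and that the coefficients of $x^0,\dots,x^{r-1}$ vanish; this kills the divergent $h^{k-r}$, $k<r$, contributions and leaves $\zeta_h(x)=x^r+\sum_{k>r}b_k\,h^{k-r}x^k$. Here the odd symmetry of $\sigma$ is exploited: taking part of the stencil centered at $a_i=0$ automatically annihilates the even--$k$ conditions for $k<r$, roughly halving the number of required units, and choosing the remaining nodes in sign--paired (symmetric) fashion forces the leading $k=r+1$ error coefficient to vanish, upgrading the remainder from $O(h)$ to $O(h^2)$. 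A careful tally of the units needed to (i) retain $x^r$, (ii) cancel the surviving lower--order terms, and (iii) symmetrize away the $O(h)$ error yields the width bound $\lfloor 3r/2\rfloor+2$; this bookkeeping is the part I expect to be most delicate.

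For the Sobolev estimate I would differentiate termwise: for $0\le j\le r$,
\[
\frac{d^j}{dx^j}\big(\zeta_h(x)-x^r\big)=\sum_{k>r}b_k\,h^{k-r}\,\frac{k!}{(k-j)!}\,x^{k-j},
\]
and bound the supremum over $[-M,M]$ by a convergent geometric--type tail in $hM$, provided $h$ is small enough that $h|s_i|M$ stays bounded. Using the derivative bounds $|\sigma^{(k)}(y)|\lesssim_k (e^y\wedge e^{-y})$ (as in the proof of Lemma~\ref{thm:deltadecay}, cf.\ \citet{Ryck21}) to control the $b_k$, this gives $\norm{\zeta_h-x^r}_{W_\infty^r([-M,M])}\le C(r)\,\poly(M)\,h^2$. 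Setting $h\asymp \sqrt{\epsilon}/\poly(M)$ makes this at most $\epsilon$; the outer weights are $\alpha_i h^{-r}\asymp h^{-r}\asymp \epsilon^{-r/2}\poly(M)$ and the inner weights $h s_i$ are $O(1)$, so the stated weight bound holds.

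The main obstacle is reconciling three competing demands with one stencil: the tight width $\lfloor 3r/2\rfloor+2$ (which forces use of the parity of $\sigma$ rather than a naive $(r+1)$- or $2r$-point Vandermonde), the sharp $\epsilon^{-r/2}$ (rather than $\epsilon^{-r}$) weight scaling (which requires a symmetric stencil for $O(h^2)$ error), and control of \emph{all} derivatives up to order $r$ rather than just the sup norm. Once the stencil is fixed, the error and weight estimates are routine power--series tail bounds.
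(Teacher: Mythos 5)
Your proposal is sound and reconstructs essentially the argument behind the cited result: the paper itself offers no proof of this lemma, importing it directly from \citet{Ryck21}, whose proof is exactly a symmetric finite-difference stencil in the scale variable, exploiting the oddness of $\sigma-\tfrac12$ to halve the neuron count and to upgrade the remainder to $O(h^2)$ (whence $h\asymp\sqrt{\epsilon}$ and weights $\epsilon^{-r/2}\poly(M)$), with the Sobolev bound obtained by termwise differentiation of the power-series tail. The only caveat is the one you already flag --- the width accounting for even $r$, where the parity trick must be combined with shifted odd-power approximants to reach $\lfloor 3r/2\rfloor+2$ --- but this is bookkeeping, not a gap in the method.
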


\begin{lemma}[\citet{Ryck21}, Corollary 3.7]\label{thm:mult}
Let $d\in\NN$, $r\in\ZZ_{\ge 0}$ and $M>0$. For every $\epsilon>0$, there exists a sigmoid network $\pi_d:[-M,M]^d\to\RR$ with depth $\lceil \log_2 d\rceil$, width $3d$ and weights at most $\epsilon^{-1/2}\poly(M)$ such that
\begin{equation*}
\textstyle\Norm{\pi_d(x_1,\cdots,x_d) - \prod_{i=1}^d x_i}_{W_\infty^r([-M,M]^d)} \le\epsilon.
\end{equation*}
\end{lemma}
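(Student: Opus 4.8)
The plan is to realize $\prod_{i=1}^d x_i$ as a perfectly balanced binary tree of \emph{pairwise} products, approximate each pairwise product by a single shallow sigmoid block, and then control how the per-node error compounds up the $\lceil\log_2 d\rceil$ levels of the tree, all in the $W_\infty^r$ norm. (This is essentially Corollary~3.7 of \citet{Ryck21}; the structure of the argument is as follows.)

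\textbf{Step 1: a binary-multiplication block.} First I would build a block $\pi_2$ approximating $(a,b)\mapsto ab$. By polarization, $ab=\tfrac14[(a+b)^2-(a-b)^2]$, so it suffices to approximate the univariate square map $t\mapsto t^2$: a rescaled neuron $t\mapsto\sigma(c+ht)$ with its constant and linear parts subtracted off (in the spirit of Lemma~\ref{thm:mono}) yields $\hat s$ with $\norm{\hat s(t)-t^2}_{W_\infty^r}\le\delta$ on any bounded interval, with $O(1)$ neurons and weights bounded by $\delta^{-1/2}$ times a polynomial in the interval length. Feeding $a\pm b$ into two copies of $\hat s$ and taking the scaled difference gives a one-hidden-layer $\pi_2$ with $\norm{\pi_2(a,b)-ab}_{W_\infty^r}\lesssim\delta$. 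By Lemma~\ref{thm:sobolev}(1)--(2), $\pi_2$ is also Lipschitz in its (function-valued) arguments: if $f,g,\tilde f,\tilde g$ are bounded by $R$ in $W_\infty^r$ then $\norm{\pi_2(f,g)-\pi_2(\tilde f,\tilde g)}_{W_\infty^r}\le c(r)\,R^{r}\big(\norm{f-\tilde f}_{W_\infty^r}+\norm{g-\tilde g}_{W_\infty^r}\big)$, and this Lipschitz factor is what drives the recursion below.

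\textbf{Step 2: assembling the tree.} Set $\ell=\lceil\log_2 d\rceil$ and pad the input to $2^\ell$ coordinates with hardwired $1$'s (these multiplications are harmless). Layer $j$ of $\pi_d$ applies $2^{\ell-j}$ independent copies of $\pi_2$ to adjacent pairs of the layer-$(j-1)$ outputs; the depth is then $\ell$, and by counting neurons per layer (the first is the widest) the width is $\le 3d$. A layer-$j$ node targets a partial product $\prod_{i\in S}x_i$, $|S|\le 2^j$, which---being multilinear---is bounded by $\max(1,M^d)=\poly(M)$ in $W_\infty^r$ on $[-M,M]^d$; hence the underlying $\hat s$ need only be accurate on an interval of length $\poly(M)$, so the weight factor inherited from Step~1 is $\delta^{-1/2}\poly(M)$.

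\textbf{Step 3: error propagation, and the main obstacle.} Let $e_j$ bound the $W_\infty^r$ distance of any layer-$j$ output from its target. Then $e_0=0$ and the triangle inequality with Step~1 gives $e_j\le\delta+A\,e_{j-1}$, where $A=A(r,d,M)=c(r)\,\poly(M)$ is the Lipschitz constant over the relevant ranges; hence $e_\ell\le\delta\,(A^\ell-1)/(A-1)\le\delta A^\ell$, and taking $\delta\asymp\epsilon/A^\ell$---a constant (in $r,d,M$) times $\epsilon$---forces $e_\ell\le\epsilon$. Substituting back into Step~1 gives total weights $\delta^{-1/2}\poly(M)=\epsilon^{-1/2}\poly(M)$, with depth $\lceil\log_2 d\rceil$ and width $\le 3d$ independent of $\epsilon$. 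The delicate point---where a careless estimate fails---is precisely this recursion: the tree has only logarithmic depth, but the pairwise products live on ranges $[-M^{2^j},M^{2^j}]$ growing doubly exponentially in $j$. One must (i) keep $d$ fixed so that $M^{2^\ell}\le M^d$ is an honest polynomial in $M$; (ii) use the \emph{sharp} composition constant of Lemma~\ref{thm:sobolev}(2) rather than a crude Lipschitz bound, so that $A^\ell$ remains polynomial in $M$; and (iii) check that only the univariate square---never a high-degree monomial---is ever approximated, which is what pins the $\epsilon$-exponent at $-\tfrac12$. Once these constants are aligned, the remaining estimates are routine.
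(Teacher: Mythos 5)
First, note that the paper does not prove this lemma at all: it is imported verbatim as Corollary~3.7 of \citet{Ryck21}, so your reconstruction is being compared against the cited source rather than against anything in the appendix. Your high-level architecture — pad to $2^{\lceil\log_2 d\rceil}$ inputs, realize the product as a balanced binary tree of pairwise-multiplication blocks, and propagate $W_\infty^r$ errors up the $\lceil\log_2 d\rceil$ levels using the product and composition estimates of Lemma~\ref{thm:sobolev} — is exactly the strategy of the cited corollary, and your Step~3 recursion (per-node error plus a Lipschitz-type amplification of the previous level's error, with the observation that partial products stay in $[-M^d,M^d]$ so all constants remain $\poly(M)$ for fixed $d$) is sound in outline.

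The genuine gap is in Step~1, and it is precisely the step that determines the two constants the lemma actually asserts. You build the pairwise block from the polarization identity and an approximate square $\hat s(t)\approx t^2$, claiming weights $\delta^{-1/2}\poly(M)$ "in the spirit of Lemma~\ref{thm:mono}." But Lemma~\ref{thm:mono} gives weights $\epsilon^{-r/2}\poly(M)$ for the degree-$r$ monomial, i.e.\ $\delta^{-1}$ for the square — and this is not an artifact of the statement: any second-order finite-difference construction $\hat s(t)=\bigl(\sigma(c+ht)-2\sigma(c)+\sigma(c-ht)\bigr)/(h^2\sigma''(c))$ has error $O(h^2)$ and an outer coefficient of order $h^{-2}\asymp\delta^{-1}$. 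So your route yields weights $\epsilon^{-1}\poly(M)$, not the stated $\epsilon^{-1/2}\poly(M)$. Likewise, two copies of the width-$\lfloor 3r/2\rfloor+2$ network of Lemma~\ref{thm:mono} per pair gives first-layer width well above the stated $3d$; the $3d$ count requires the dedicated width-$3$ two-number multiplication block of \citet{Ryck21} (their Lemma~3.5), which achieves $O(\epsilon^{-1/2})$ weights by a construction specific to the bilinear form and is not obtainable by routing through the generic monomial lemma as you do. A secondary, fixable point: "Lipschitz in function-valued $W_\infty^r$ arguments" does not literally follow from Lemma~\ref{thm:sobolev}(1)--(2), which bound products and compositions but not differences of compositions; the standard repair is to split $\norm{\pi_2(\hat a,\hat b)-ab}_{W_\infty^r}\le\norm{(\pi_2-\mathrm{mult})\circ(\hat a,\hat b)}_{W_\infty^r}+\norm{\hat a\hat b-ab}_{W_\infty^r}$ and handle the first term with the composition lemma and the second with the product lemma. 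For the downstream use in this paper (where only $\poly(\epsilon^{-1})$ weight growth matters) none of this is harmful, but as a proof of the lemma as stated, the exponent $-1/2$ and the width $3d$ are not established by your argument.
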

Here, $\poly(\cdot)$ notation ignores multiplicative constants depending only on $r,d$.

\subsection{Smooth Approximation of B-Splines}

Recall that the base tensor product B-spline is defined as $\omega_{0,0}(x) = \prod_{i=1}^{d}\iota_r(x_i)$. This subsection is devoted to proving the following smooth approximation result:

\begin{prop}\label{thm:splineappx}
For all $\epsilon>0$ sufficiently small, there exists a sigmoid neural network $\check{\omega}$ with depth $\lceil\log_2 d\rceil+1$, width $\lfloor\frac{3r}{2}\rfloor d+4d$ and weights at most $\poly(\epsilon^{-1})$ such that
\begin{equation*}
\norm{\check{\omega}-\omega_{0,0}}_{W_p^r(\RR)} + \norm{\check{\omega}-\omega_{0,0}}_{L^\infty(\RR)} \lesssim \epsilon.
\end{equation*}
\end{prop}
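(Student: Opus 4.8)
The plan is to build $\check\omega$ by composing a sparse sigmoid approximation of the univariate cardinal B-spline $\iota_r$ in each coordinate with the approximate multiplication network $\pi_d$ from Lemma~\ref{thm:mult}. First I would construct a shallow sigmoid network $\check\iota$ with $\norm{\check\iota - \iota_r}_{W_p^r([0,r+1])} + \norm{\check\iota - \iota_r}_{L^\infty(\RR)}\lesssim\epsilon$. The idea is to start from the piecewise-polynomial expansion \eqref{eqn:piecewise}, $\iota_r(x) = \sum_{k=0}^{r+1} 1_{[k,k+1)}(x)\sum_{j=0}^k a_j(x-j)^r$, replace each indicator $1_{[k,k+1)}$ by the smooth bump $\delta_B(x-k) = \sigma(B(x-k)) - \sigma(B(x-k-1))$, and replace each monomial $(x-j)^r$ by the shallow network $\zeta$ of Lemma~\ref{thm:mono}; the products $\delta_B(x-k)\cdot\zeta(x-j)$ are then realized (approximately) via $\pi_2$. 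The total width is $O(r)$ per layer and depth is a small constant, and all weights are $\poly(\epsilon^{-1})\cdot\poly(B)$.

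Next I would bound the error of $\check\iota$ using the $W_\infty^r$ algebra and composition estimates of Lemma~\ref{thm:sobolev}. On each interval $[\ell,\ell+1)$, $\iota_r$ equals $\sum_{j=0}^\ell a_j(x-j)^r$, while $\check\iota$ equals $\sum_k \delta_B(x-k)\,\zeta_{k,\cdot}(x)$ up to the $\pi_2$ error. The difference splits into three contributions: (i) the monomial approximation error from $\zeta$, controlled by $\epsilon$ via Lemma~\ref{thm:mono}; (ii) the multiplication error from $\pi_2$, controlled by $\epsilon$ via Lemma~\ref{thm:mult}; and (iii) the partition-of-unity defect $\norm{\delta_B(x-\ell+1) + \delta_B(x-\ell) + \delta_B(x-\ell-1) - 1}_{W_\infty^r([\ell,\ell+1))}$ together with the cross-terms $\delta_B(x-k)$ for $|k-\ell|\ge 2$, which are $\lesssim e^{-B}$ and $\lesssim B^r e^{-B}$ respectively by Lemma~\ref{thm:deltadecay}. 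The only delicate point is the $W_p^r$ (rather than $W_\infty^r$) bound near the endpoints $x = 0$ and $x = r+1$ of $\supp\iota_r$, where $\iota_r$ and its derivatives may fail to vanish but the smooth bump $\delta_B$ decays only polynomially fast in the $W_p^r$ sense; this is exactly the content of Lemma~\ref{thm:bdydecay}, which gives $\norm{\delta_B(x+1)x^r}_{W_p^r([0,1])}\lesssim B^{-1/2}$. Choosing $B \asymp \epsilon^{-2}$ (so that $B^{-1/2}\lesssim\epsilon$ and $B^re^{-B}\lesssim\epsilon$) makes all three contributions $\lesssim\epsilon$, while keeping the weights $\poly(\epsilon^{-1})$.

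Finally I would pass from the univariate $\check\iota$ to the tensor product. Set $\check\omega(x) = \pi_d\big(\check\iota(x_1),\dots,\check\iota(x_d)\big)$, where $\pi_d$ is the depth-$\lceil\log_2 d\rceil$ multiplication network of Lemma~\ref{thm:mult} applied with accuracy $\epsilon$ on a box $[-M,M]^d$ with $M = \sup|\iota_r| + 1 = 2$ (using $\im\iota_r\subseteq[0,1]$ and that $\check\iota$ stays within $O(\epsilon)$ of this range). Then
\begin{equation*}
\norm{\check\omega - \omega_{0,0}}_{W_p^r(\RR^d)} \le \norm{\pi_d(\check\iota(x_1),\dots,\check\iota(x_d)) - \textstyle\prod_i \check\iota(x_i)}_{W_p^r} + \norm{\textstyle\prod_i\check\iota(x_i) - \prod_i\iota_r(x_i)}_{W_p^r},
\end{equation*}
where the first term is $\lesssim\epsilon$ by Lemma~\ref{thm:mult} and the second is bounded by a telescoping argument, writing $\prod_i\check\iota(x_i) - \prod_i\iota_r(x_i) = \sum_{i=1}^d \big(\prod_{j<i}\iota_r(x_j)\big)(\check\iota(x_i)-\iota_r(x_i))\big(\prod_{j>i}\check\iota(x_j)\big)$ and applying the $W_\infty^r$ product bound of Lemma~\ref{thm:sobolev}(1) factorwise; each summand is $\lesssim\norm{\check\iota - \iota_r}_{W_p^r}\lesssim\epsilon$, and there are $d$ of them (absorbed into the hidden constant). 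The $L^\infty$ bound follows the same way and is easier. The depth is $\lceil\log_2 d\rceil + 1$ (one layer for $\check\iota$ above the $\lceil\log_2 d\rceil$ layers of $\pi_d$), the width is $O(rd)$ as claimed, and the weights remain $\poly(\epsilon^{-1})$ after the reparametrization $B\asymp\epsilon^{-2}$. The main obstacle is the $W_p^r$ boundary control in step two: away from the breakpoints everything is smooth and the $W_\infty^r$ machinery applies directly, but at the two ends of $\supp\iota_r$ one must trade the slow $W_p^r$-decay of the sigmoid tails (Lemma~\ref{thm:bdydecay}) against the choice of $B$, and verify that this choice is still compatible with $\poly(\epsilon^{-1})$ weight bounds and with the $e^{-B}$-type estimates from Lemma~\ref{thm:deltadecay}.
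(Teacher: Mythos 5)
Your proposal is correct and uses the same ingredients as the paper --- the expansion \eqref{eqn:piecewise}, the smooth indicators $\delta_B$, the monomial networks $\zeta$ of Lemma~\ref{thm:mono}, the product networks of Lemma~\ref{thm:mult}, and Lemmas~\ref{thm:deltadecay}, \ref{thm:bdydecay}, \ref{thm:sobolev} --- but it organizes the construction differently. The paper never forms a univariate approximant: it feeds all $2d$ raw factors $\delta_B(x_i-k_i),\zeta(x_i-j_i)$ into a single $\pi_{2d}$ and sums over all multi-indices $(j,k)$, so the indicator/boundary error must be analyzed directly in $d$ dimensions by expanding the product of the three neighboring $\delta_B$'s coordinate-by-coordinate and isolating boundary cross-terms over subsets $S\subseteq\{1,\dots,d\}$. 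Your route --- build $\check\iota$ once in one dimension, then apply $\pi_d$ and telescope across coordinates --- pushes all of that work into the univariate case, which is cleaner; the multiplication-tree depths coincide since $\lceil\log_2 2\rceil+\lceil\log_2 d\rceil=\lceil\log_2 2d\rceil$, so you pay nothing in architecture.

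The one point you must make explicit for the tensorization to go through is the norm bookkeeping. Because of the knot/boundary terms handled by Lemma~\ref{thm:bdydecay}, you only get $\norm{\check\iota-\iota_r}_{W_p^r}\lesssim\epsilon$; the $W_\infty^r$ error is merely $O(1)$ (the $r$th derivative of $\delta_B(x+1)x^r$ contains the term $r!\,\sigma(-Bx)$, which does not vanish as $B\to\infty$). So in the telescoping sum exactly one factor per summand is small, and only in $W_p^r$; the remaining factors must be controlled via $\norm{\check\iota}_{W_\infty^r}\le\norm{\iota_r}_{W_\infty^r}+O(1)$, and you need the mixed estimate $\norm{fg}_{W_p^r}\lesssim\norm{f}_{W_p^r}\norm{g}_{W_\infty^r}$ (Leibniz), not Lemma~\ref{thm:sobolev}(1) verbatim. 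Likewise the composition step $\norm{(\pi_d-\prod)\circ\check\iota^{\otimes d}}_{W_\infty^r}$ via Lemma~\ref{thm:sobolev}(2) needs the $O(1)$ bound on $\norm{\check\iota}_{W_\infty^r}$, which holds only after checking that the $B^r$ growth of $\norm{\delta_B}_{W_\infty^r}$ is cancelled in the combinations $\delta_B(x-\ell\pm1)(x-\ell)^r$ and that the $\pi_2$ and $\zeta$ errors (of size $B^{O(r^2)}\epsilon'$ and $B^{O(r)}\epsilon$ in $W_\infty^r$) are made small by the choice of $\epsilon,\epsilon'$ relative to $B\asymp\epsilon^{-2}$. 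With those checks, which mirror the paper's own estimates \eqref{eqn:multerror}--\eqref{eqn:piecesecondbound}, your argument is complete.
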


\begin{proof}
Fix $\epsilon,\epsilon'>0$ and let $\zeta$ be the network given by Lemma \ref{thm:mono} with error $\epsilon$ and $M$ replaced by $2r+3$. From the construction in \citet{Ryck21} we see that $\norm{\zeta}_{L^\infty(\RR)}\lesssim \epsilon^{-r/2}$. Also, let $\pi_{2d}$ be the network given by Lemma \ref{thm:mono} with error $\epsilon'$ and $M=\norm{\zeta}_{L^\infty(\RR)}$, so that the weights of $\pi_{2d}$ are bounded as $(\epsilon')^{-1/2}\poly(\epsilon^{-1})$.

From the decomposition \eqref{eqn:piecewise}, we aim to approximate the multivariate polynomial $\prod_{i=1}^d(x_i-j_i)^r$ on the interval $\prod_{i=1}^d[k_i,k_i+1)$ by the networks
\begin{align*}
\check{\pi}_{j,k}(x):=\pi_{2d}(\delta_B(x_1-k_1),\cdots,\delta_B(x_d-k_d), \zeta(x_1-j_1),\cdots,\zeta(x_d-j_d))
\end{align*}
and the tensor product B-spline $\omega_{0,0}(x) = \prod_{i=1}^d \iota_r(x_i)$ by the network
\begin{align}\label{eqn:splinednn}
\check{\omega}(x) := \sum_{k_\bullet=0}^{r+1} \sum_{j_\bullet=0}^{k_\bullet} a_{j_1}\cdots a_{j_d} \check{\pi}_{j,k}(x) = \sum_{k_1,\cdots,k_d=0}^{r+1} \sum_{j_1,\cdots,j_d=0}^{k_1,\cdots,k_d} a_{j_1}\cdots a_{j_d} \check{\pi}_{j,k}(x).
\end{align}
Note that we have used the symbol $\bullet$ above to indicate a subscript to be iterated over dimensions $1,\cdots,d$ in a consistent manner. By taking $\ubar{C}>1$, the clip operation will not affect the output due to the $L^\infty$ bound, so it is ignored.

We proceed to evaluate the error $\norm{\check{\omega}-\omega_{0,0}}_{W_p^r(\RR^d)}$ by breaking down into several steps.

\paragraph{Multiplicative approximation error.} It is easily seen that $\norm{\delta_B}_{W_\infty^r(\RR)}\le 2B^r\norm{\sigma}_{W_\infty^r(\RR)}$ and
\begin{equation*}
\norm{\zeta}_{W_\infty^r([-2r-3,r+2])} \le \norm{\zeta(x) -x^r+x^r}_{W_\infty^r([-2r-3,r+2])} \le \epsilon+(2r+3)^r+r!.
\end{equation*}
Moreover, $|a_j|\le 2$. Thus we can bound the error arising from the multiplication network $\pi_{2d}$ (here using the stronger $W_\infty^r$ norm) as
\begin{align}
&\Norm{\check{\omega}(x) - \sum_{k_\bullet=0}^{r+1} \sum_{j_\bullet=0}^{k_\bullet} \prod_{i=1}^d a_{j_i} \delta_B(x_i-k_i)\zeta(x_i-j_i)}_{W_\infty^r([-r-2,r+2]^d)}\nonumber\\
&\le \sum_{k_\bullet=0}^{r+1} \sum_{j_\bullet=0}^{k_\bullet} 2^d\Norm{\check{\pi}_{j,k}(x) - \prod_{i=1}^d \delta_B(x_i-k_i)\zeta(x_i-j_i)}_{W_\infty^r([-r-2,r+2]^d)}\nonumber\\
&\lesssim \sum_{k_\bullet=0}^{r+1} \sum_{j_\bullet=0}^{k_\bullet} 2^d \Norm{\pi_{2d}(x_1,\cdots,x_{2d}) - \prod_{i=1}^{2d}x_i}_{W_\infty^r([-M,M]^d)}\nonumber\\
&\qquad\times \max_{i=1,\cdots,d}\left\{\norm{\delta_B(x_i-k_i)}_{W_\infty^r(\RR)}, \norm{\zeta(x_i-j_i)}_{W_\infty^r([-r-2,r+2])}\right\}^r\nonumber\\
&\lesssim 2^d(r+2)^{2d} \max\{2B^r\norm{\sigma}_{W_\infty^r(\RR)}, \epsilon+(2r+3)^r+r!\}^r\epsilon'\nonumber\\
&\lesssim B^{r^2}\epsilon'\label{eqn:multerror}
\end{align}
by applying Lemma \ref{thm:sobolev}.

\paragraph{Monomial approximation error.} Next, it holds for all $0\le j_1,\cdots,j_d\le r+1$ that
\begin{align*}
&\Norm{\prod_{i=1}^d\zeta(x_i-j_i) - \prod_{i=1}^d(x_i-j_i)^r}_{W_\infty^r([-r-2,r+2]^d)}\\
&\le \sum_{\ell=1}^d\Norm{\prod_{i=1}^\ell (x_i-j_i)^r \prod_{i=\ell+1}^d\zeta(x_i-j_i) - \prod_{i=1}^{\ell-1} (x_i-j_i)^r \prod_{i=\ell}^d\zeta(x_i-j_i)}_{W_\infty^r([-r-2,r+2]^d)}\\
&\le \sum_{\ell=1}^d\Norm{((x_i-j_i)^r-\zeta(x_i-j_i))\prod_{i=1}^{\ell-1} (x_i-j_i)^r \prod_{i=\ell+1}^d\zeta(x_i-j_i)}_{W_\infty^r([-r-2,r+2]^d)}\\
&\lesssim \sum_{\ell=1}^d \norm{x^r-\zeta(x)}_{W_\infty^r([-2r-3,r+2])} \norm{x^r}_{W_\infty^r([-2r-3,r+2])}^{\ell-1} \norm{\zeta}_{W_\infty^r([-2r-3,r+2])}^{d-\ell}\lesssim \epsilon
\end{align*}
and hence the error due to approximating monomials by $\zeta$ is bounded as
\begin{align}
&\Norm{\sum_{k_\bullet=0}^{r+1} \sum_{j_\bullet=0}^{k_\bullet} \prod_{i=1}^d a_{j_i} \delta_B(x_i-k_i)\left(\prod_{i=1}^d\zeta(x_i-j_i) - \prod_{i=1}^d(x_i-j_i)^r\right)}_{W_\infty^r([-r-2,r+2]^d)}\nonumber\\
&\lesssim \sum_{k_\bullet=0}^{r+1} \sum_{j_\bullet=0}^{k_\bullet}\norm{\delta_B}_{W_\infty^r(\RR)}^d \Norm{\prod_{i=1}^d\zeta(x_i-j_i) - \prod_{i=1}^d(x_i-j_i)^r}_{W_\infty^r([-r-2,r+2]^d)}\nonumber\\
&\lesssim B^{rd}\epsilon.\label{eqn:monoerror}
\end{align}
\paragraph{Indicator approximation error.} For the approximation error of the indicators, one needs to be more careful since $1_{[0,1)} - \delta_B$ is nonsmooth near boundary points. Nonetheless, the difference of the piecewise polynomials \eqref{eqn:piecewise} at each knot $k=0,\cdots,r+1$ is always of the form $(x-k)^r$ which will smooth out the error terms. Restricting to each unit interval $J_\ell = \prod_{i=1}^d[\ell_i,\ell_i+1)$ inside the box $[-r-2,r+2]^d$, that is for $-1\le\ell_1,\cdots,\ell_d\le r+1$, gives that
\begin{align}
%&\Norm{\sum_{k_\bullet=0}^{r+1} \sum_{j_\bullet=0}^{k_\bullet} \prod_{i=1}^d a_{j_i} (\delta_B(x_i-k_i) - 1_{[k_i,k_i+1)}(x_i)) (x_i-j_i)^r}_{W_p^r(J_\ell)}\nonumber\\
& \Norm{\sum_{k_\bullet=0}^{r+1} \sum_{j_\bullet=0}^{k_\bullet} \prod_{i=1}^d a_{j_i} \delta_B(x_i-k_i) (x_i-j_i)^r - \sum_{j_\bullet=0}^{\ell_\bullet} \prod_{i=1}^d a_{j_i} (x_i-j_i)^r}_{W_p^r(J_\ell)}\nonumber\\
&\le \Norm{\sum_{k_\bullet: |k_\bullet-\ell_\bullet|\le 1} \sum_{j_\bullet=0}^{k_\bullet} \prod_{i=1}^d a_{j_i} \delta_B(x_i-k_i) (x_i-j_i)^r - \sum_{j_\bullet=0}^{\ell_\bullet} \prod_{i=1}^d a_{j_i} (x_i-j_i)^r}_{W_p^r(J_\ell)}\label{eqn:piecefirst}\\
&\qquad + \Norm{\sum_{k_\bullet:\exists i', |k_{i'}-\ell_{i'}|\ge 2} \sum_{j_\bullet=0}^{k_\bullet} \prod_{i=1}^d a_{j_i} \delta_B(x_i-k_i) (x_i-j_i)^r}_{W_p^r(J_\ell)}.\label{eqn:piecesecond}
\end{align}
Here, we have split the sum over $k_1,\cdots,k_d$ into terms such that $|k_i-\ell_i|\le 1$ for all $i$, and the remaining terms which contain at least one `out-of-bounds' index $k_{i'}$ with $|k_{i'}-\ell_{i'}|\ge 2$. Also note that we are now using the $W_p^r$ norm on $J_\ell$ in order to control the boundary remainder terms, although we will again upper bound by the $W_\infty^r$ norm when necessary.

To bound \eqref{eqn:piecefirst}, we rearrange the existing terms so that the sum of three neighboring translates of $\delta_B$ suffices to smoothly approximate the true indicator up to boundary terms. Indeed,
\begin{align*}
&\sum_{k_\bullet: |k_\bullet-\ell_\bullet|\le 1} \sum_{j_\bullet=0}^{k_\bullet} \prod_{i=1}^d a_{j_i} \delta_B(x_i-k_i) (x_i-j_i)^r\\
&=\prod_{i=1}^d \sum_{k_i: |k_i-\ell_i|\le 1} \sum_{j_i=0}^{k_i} a_{j_i} \delta_B(x_i-k_i) (x_i-j_i)^r\\
&=\prod_{i=1}^d \sum_{j_i=0}^{\ell_i}\bigg( a_{j_i} (\delta_B(x_i-\ell_i+1) + \delta_B(x_i-\ell_i) + \delta_B(x_i-\ell_i-1)) (x_i-j_i)^r\\
&\qquad - a_{\ell_i}\delta_B(x_i-\ell_i+1) (x_i-\ell_i)^r + a_{\ell_i+1}\delta_B(x_i-\ell_i-1) (x_i-\ell_i-1)^r\bigg).
\end{align*}
Expanding the above product and separating all boundary terms gives
\begin{align}
&\Norm{\sum_{k_\bullet: |k_\bullet-\ell_\bullet|\le 1} \sum_{j_\bullet=0}^{k_\bullet} \prod_{i=1}^d a_{j_i} \delta_B(x_i-k_i) (x_i-j_i)^r - \sum_{j_\bullet=0}^{\ell_\bullet} \prod_{i=1}^d a_{j_i} (x_i-j_i)^r}_{W_p^r(J_\ell)}\nonumber\\
&\le \Norm{\sum_{j_\bullet=0}^{\ell_\bullet} \prod_{i=1}^d a_{j_i}(x_i-j_i)^r \!\left(\prod_{i=1}^d(\delta_B(x_i-\ell_i+1) + \delta_B(x_i-\ell_i) + \delta_B(x_i-\ell_i-1))-1\right)}_{W_\infty^r(J_\ell)}\nonumber\\
&\qquad + \sum_{\substack{S\subseteq \{1,\cdots,d\}\\S\neq\varnothing}} \Norm{\prod_{i\in S} \left(a_{\ell_i+1}\delta_B(x_i-\ell_i-1) (x_i-\ell_i-1)^r - a_{\ell_i}\delta_B(x_i-\ell_i+1) (x_i-\ell_i)^r\right)}_{W_p^r(J_\ell)}\nonumber\\
&\qquad\times \Norm{\prod_{i\notin S} a_{j_i} (\delta_B(x_i-\ell_i+1) + \delta_B(x_i-\ell_i) + \delta_B(x_i-\ell_i-1)) (x_i-j_i)^r}_{W_\infty^r(J_\ell)}\nonumber\\
&\lesssim \Norm{\prod_{i=1}^d(\delta_B(x_i-\ell_i+1) + \delta_B(x_i-\ell_i) + \delta_B(x_i-\ell_i-1))-1}_{W_\infty^r(J_\ell)}\nonumber\\
&\qquad + \sum_{i=1}^d \Norm{\delta_B(x_i-\ell_i+1) (x_i-\ell_i)^r}_{W_p^r([\ell_i,\ell_i+1))} + \Norm{\delta_B(x_i-\ell_i-1) (x_i-\ell_i-1)^r}_{W_p^r([\ell_i,\ell_i+1))}\nonumber\\
&\lesssim \sum_{j=1}^d \prod_{i=1}^{j-1} \Norm{\delta_B(x_i-\ell_i+1) + \delta_B(x_i-\ell_i) + \delta_B(x_i-\ell_i-1)}_{W_\infty^r(J_\ell)}\nonumber\\
&\qquad\times \Norm{\delta_B(x_j-\ell_j+1) + \delta_B(x_j-\ell_j) + \delta_B(x_j-\ell_j-1) - 1}_{W_\infty^r([\ell_j,\ell_j+1))}\nonumber\\
&\qquad + \sum_{i=1}^d \Norm{\delta_B(x_i-\ell_i+1) (x_i-\ell_i)^r}_{W_p^r([\ell_i,\ell_i+1))} + \Norm{\delta_B(x_i-\ell_i-1) (x_i-\ell_i-1)^r}_{W_p^r([\ell_i,\ell_i+1))}\nonumber\\
&\lesssim e^{-B} + B^{-1/2}.\label{eqn:inderror}
\end{align}
To isolate the boundary terms, we have used that $\norm{fg}_{W_p^r(\Omega)} \lesssim \norm{f}_{W_p^r(\Omega)}\norm{g}_{W_\infty^r(\Omega)}$ which is easily checked from the general Leibniz rule. The remaining terms can be bounded from above in a straightforward manner. Applying Lemma \ref{thm:deltadecay}\ref{thm:deltadecayone} and Lemma \ref{thm:bdydecay} gives the bound \eqref{eqn:inderror}.

Finally, we use Lemma \ref{thm:deltadecay}\ref{thm:deltadecaytwo} to bound \eqref{eqn:piecesecond} as
\begin{align}
&\Norm{\sum_{k_\bullet:\exists i', |k_{i'}-\ell_{i'}|\ge 2} \sum_{j_\bullet=0}^{k_\bullet} \prod_{i=1}^d a_{j_i} \delta_B(x_i-k_i) (x_i-j_i)^r}_{W_\infty^r(J_\ell)}\nonumber\\
&\lesssim \sum_{k_\bullet=0}^{r+1} \sum_{j_\bullet=0}^{k_\bullet} 2^d\norm{\delta_B}_{W_\infty^r(\RR)}^{d-1} \norm{x^r}_{W_\infty^r([-2r-3,r+2])}^d \sup_{k,\ell:|k-\ell|\ge 2} \norm{\delta_B(x-k)}_{W_\infty^r([\ell,\ell+1))}\nonumber\\
&\lesssim B^{rd}e^{-B}.\label{eqn:piecesecondbound}
\end{align}
\paragraph{Putting things together.} Combining the bounds \eqref{eqn:multerror}, \eqref{eqn:monoerror}, \eqref{eqn:inderror} and \eqref{eqn:piecesecondbound}, we have shown that the construction \eqref{eqn:splinednn} approximates $\omega_{0,0}$ on the box $[-r-2,r+2]^d$ as
\begin{align*}
&\Norm{\check{\omega} - \omega_{0,0}}_{W_p^r([-r-2,r+2]^d)}\\
&=\Norm{\check{\omega} - \sum_{\ell_\bullet=0}^{r+1} 1_{J_\ell}(x)\sum_{j_\bullet=0}^{\ell_\bullet} \prod_{i=1}^d a_{j_i} (x_i-j_i)^r}_{W_\infty^r([-r-2,r+2]^d)}\\
&\le \Norm{\check{\omega}(x) - \sum_{k_\bullet=0}^{r+1} \sum_{j_\bullet=0}^{k_\bullet} \prod_{i=1}^d a_{j_i} \delta_B(x_i-k_i)\zeta(x_i-j_i)}_{W_\infty^r([-r-2,r+2]^d)}\\
&\qquad + \Norm{\sum_{k_\bullet=0}^{r+1} \sum_{j_\bullet=0}^{k_\bullet} \prod_{i=1}^d a_{j_i} \delta_B(x_i-k_i)\left(\prod_{i=1}^d\zeta(x_i-j_i) - \prod_{i=1}^d(x_i-j_i)^r\right)}_{W_\infty^r([-r-2,r+2]^d)}\\
&\qquad +\sum_{\ell_\bullet=-1}^{r+1} \Norm{\sum_{k_\bullet=0}^{r+1} \sum_{j_\bullet=0}^{k_\bullet} \prod_{i=1}^d a_{j_i} \delta_B(x_i-k_i) (x_i-j_i)^r - \sum_{j_\bullet=0}^{\ell_\bullet} \prod_{i=1}^d a_{j_i} (x_i-j_i)^r}_{W_p^r(J_\ell)}\\
&\lesssim B^{r^2}\epsilon' + B^{rd}\epsilon + B^{-1/2}+B^{rd} e^{-B}.
\end{align*}
It remains to bound the approximation error outside $[-r-2,r+2]^d$. We may decompose the domain into a union of sets of the form $(\RR\setminus[-r-2,r+2])\times\RR^{d-1}$ and use Lemma \ref{thm:deltadecay}\ref{thm:deltadecaytwo} to bound the decay of the corresponding $\delta_B$ component, and an argument similar to \eqref{eqn:multerror} yields
\begin{align*}
&\Norm{\check{\omega} - \omega_{0,0}}_{W_\infty^r(\RR^d\setminus[-r-2,r+2]^d)}\\
&= \Norm{\sum_{k_\bullet=0}^{r+1} \sum_{j_\bullet=0}^{k_\bullet} a_{j_1}\cdots a_{j_d} \check{\pi}_{j,k}(x)}_{W_\infty^r(\RR^d\setminus[-r-2,r+2]^d)}\\
&\lesssim \sum_{k_\bullet=0}^{r+1} \sum_{j_\bullet=0}^{k_\bullet} \Norm{\check{\pi}_{j,k}(x) - \prod_{i=1}^d \delta_B(x_i-k_i)\zeta(x_i-j_i)}_{W_\infty^r(\RR^d)}\\
&\qquad + \sum_{k_\bullet=0}^{r+1}\sum_{j_\bullet=0}^{k_\bullet}\Norm{\prod_{i=1}^d \delta_B(x_i-k_i)\zeta(x_i-j_i)}_{W_\infty^r(\RR^d\setminus[-r-2,r+2]^d)}\\
&\lesssim\sum_{k_\bullet=0}^{r+1} \sum_{j_\bullet=0}^{k_\bullet} \Norm{\pi_{2d}(x_1,\cdots,x_{2d}) - \prod_{i=1}^{2d}x_i}_{W_\infty^r([-M,M]^d)} \max_{i=1,\cdots,d}\left\{\norm{\delta_B}_{W_\infty^r(\RR)}, \norm{\zeta}_{W_\infty^r(\RR)}\right\}^r \\
&\qquad + \sum_{k_\bullet=0}^{r+1}\sum_{j_\bullet=0}^{k_\bullet} \norm{\delta_B}_{W_\infty^r(\RR\setminus[-1,2])} \norm{\delta_B}_{W_\infty^r(\RR)}^{d-1} \norm{\zeta}_{W_\infty^r(\RR)}^d\\
&\lesssim (B^r\vee\norm{\zeta}_{W_\infty^r(\RR)})^r\epsilon' + B^r e^{-B}(B^r)^{d-1} \norm{\zeta}_{W_\infty^r(\RR)}^d.
\end{align*}
Moreover from the weight bound in Lemma \ref{thm:mono} we see that $\norm{\zeta}_{W_\infty^r(\RR)}\lesssim (\epsilon^{-r/2})^{2r}=\epsilon^{-r^2}$ since $\zeta$ has depth 2. Adding the resulting bounds finally yields
\begin{equation*}
\Norm{\check{\omega} - \omega_{0,0}}_{W_p^r(\RR^d)} \lesssim (B^{r^2}\vee \epsilon^{-r^3}) \epsilon' + B^{rd}\epsilon + B^{-1/2} + B^{rd}e^{-B} \epsilon^{-r^2d}.
\end{equation*}
Hence for any $\epsilon''>0$ small enough, we can ensure the above error is bounded as $O(\epsilon'')$ by taking $B\asymp (\epsilon'')^{-2}$, $\epsilon\asymp (\epsilon'')^{2rd+1}$ and $\epsilon'\asymp (\epsilon'')^{2r^4d+r^3+1}$. In addition, we can check that the resulting network $\check{\omega}$ has depth $\lceil\log_2 d\rceil+1$, width $\lfloor\frac{3r}{2}\rfloor d+4d$ and weights at most polynomial in $B,\epsilon,\epsilon'$, thus polynomial in $\epsilon''$.

For the $L^\infty$ approximation error, one can go over the proof and check that the same bounds apply, discarding the bounds for all higher-order derivatives. Finally, the results hold even if $0<p<1$ by replacing applications of the triangle inequality by the quasi-norm inequality.
\end{proof}

\subsection{Error Rates in Besov Space}\label{app:a3}

For a sequence of coefficients $\beta=(\beta_{k,\ell})_{k\ge 0,\ell\in I_k}$ define the quasi-norm
\begin{equation*}
\norm{\beta}_{b_{p,q}^s} :=\left(\sum_{k=0}^\infty \left[2^{k(s-d/p)} \bigg(\sum_{\ell\in I_k} |\beta_{k,\ell}|^p\bigg)^{1/p}\right]^q\right)^{1/q}
\end{equation*}
with the appropriate modifications when $p$ or $q=\infty$. To approximate an arbitrary function in a Besov space $B_{p,q}^s(\XX)$, we make use of the following adaptive recovery result based on B-spline decomposition.

\begin{lemma}[\citet{Dung11, Dung13}]\label{thm:splinedecomp}
Suppose $0< p,q,u\le\infty$ and $\Delta<s<r\wedge(r-1+1/p)$ where $\Delta = d(1/p-1/u)_+$. For any $f\in B_{p,q}^s(\XX)$ and sufficiently large $N$ there exists $f_N$ which satisfies
\begin{equation*}
\norm{f-f_N}_{L^u(\XX)}\lesssim N^{-s/d} \norm{f}_{B_{p,q}^s(\XX)}
\end{equation*}
and is of the form
\begin{align*}
f_N = \sum_{k=0}^K\sum_{\ell\in I_k} \beta_{k,\ell}\omega_{k,\ell} + \sum_{k=K+1}^{K^*}\sum_{i=1}^{n_k} \beta_{k,\ell_i}\omega_{k,\ell_i},
\end{align*}
where $(\ell_i)_{i=1}^{n_k}\subset I_k$, $K=\lceil \lambda_1\log N\rceil$, $K^*=\lceil\nu^{-1}\log\lambda_2 N\rceil+K+1$, $n_k = \lceil 2^{-\nu(k-K)}\lambda_2 N\rceil$, with $\nu=\frac{s-\Delta}{2\Delta}$ and $\lambda_1,\lambda_2$ chosen independently of $N$ so that $\sum_{k=1}^K |I_k|+\sum_{k=K+1}^{K^*}n_k\le N$. If $\Delta=0$ then $K^*=K$. Moreover, the sequence of coefficients can be chosen to satisfy $\norm{\beta}_{b_{p,q}^s} \lesssim \norm{f}_{B_{p,q}^s(\XX)}$.
\end{lemma}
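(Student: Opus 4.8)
The plan is to reduce the statement to the B-spline characterization of Besov spaces and then perform a resolution-adapted greedy (thresholding) approximation, as in \citet{Dung11, Dung13}. First I would recall that in the admissible range $\Delta<s<r\wedge(r-1+1/p)$ the tensor B-spline system $\{\omega_{k,\ell}\}_{k\ge0,\ell\in I_k}$ is a stable generating system for $B_{p,q}^s(\XX)$: applying the standard quasi-interpolant operators onto $P_r^{/k}$ (with the boundary modifications already built into $I_k$) and telescoping over resolutions yields a representation $f=\sum_{k\ge0}\sum_{\ell\in I_k}\beta_{k,\ell}\omega_{k,\ell}$ whose coefficients may be chosen so that $\norm{\beta}_{b_{p,q}^s}\lesssim\norm{f}_{B_{p,q}^s(\XX)}$ (indeed a two-sided equivalence holds). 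This is the heart of the classical DeVore--Popov machinery, valid over the whole quasi-Banach range $0<p,q\le\infty$ with the usual sup modifications when $p$ or $q$ equals $\infty$.

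Given such a sequence $\beta$, I would construct $f_N$ by: (i) retaining \emph{all} coefficients at coarse resolutions $k\le K$; (ii) for each middle resolution $K<k\le K^*$, retaining the $n_k$ indices $\ell_i\in I_k$ carrying the largest magnitudes $|\beta_{k,\ell}|$; and (iii) discarding all resolutions above $K^*$. Taking $K=\lceil\lambda_1\log N\rceil$, $n_k=\lceil2^{-\nu(k-K)}\lambda_2 N\rceil$ with $\nu=\frac{s-\Delta}{2\Delta}$, and $K^*$ as in the statement, the number of retained terms is $\sum_{k=0}^K|I_k|\asymp2^{Kd}$ plus $\sum_{k>K}n_k$, the latter a convergent geometric series; choosing $\lambda_1$ small enough that $2^{Kd}\lesssim N$ and $\lambda_2$ small enough that the geometric sum is $\lesssim N$ makes the total at most $N$, as required. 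In the degenerate case $\Delta=0$ (i.e. $p\ge u$) no thresholding is needed and one sets $K^*=K$, recovering ordinary linear B-spline approximation.

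For the error I would split $f-f_N$ into the high-resolution tail ($k>K^*$) and the discarded coefficients ($K<k\le K^*$), and estimate each resolution level separately using two facts: the single-level bound $\norm{\sum_\ell\beta_{k,\ell}\omega_{k,\ell}}_{L^u(\XX)}\lesssim2^{-kd/u}\|(\beta_{k,\ell})_\ell\|_{\ell^u}$, which follows from $\norm{\omega_{k,\ell}}_{L^u}\asymp2^{-kd/u}$ and the bounded overlap of level-$k$ splines; and Stechkin's lemma $\big(\sum_{i>n}\gamma_i^u\big)^{1/u}\le n^{-(1/p-1/u)}\norm{\gamma}_{\ell^p}$ (for $p<u$) applied to the non-increasing rearrangement of $(|\beta_{k,\ell}|)_\ell$. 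Writing $b_k=2^{k(s-d/p)}\|(\beta_{k,\ell})_\ell\|_{\ell^p}$, so that $\norm{(b_k)}_{\ell^q}=\norm{\beta}_{b_{p,q}^s}$, and using $1/p-1/u=\Delta/d$, the level-$k$ tail contributes $\lesssim2^{-k(s-\Delta)}b_k$ and the level-$k$ discarded part contributes $\lesssim2^{-k(s-\Delta)}n_k^{-\Delta/d}b_k$. Substituting $n_k\asymp2^{-\nu(k-K)}N$ turns the second into $\lesssim2^{-k(s-\Delta)+\nu(k-K)\Delta/d}N^{-\Delta/d}b_k$, whose $k$-exponent is strictly negative precisely because $\nu=\frac{s-\Delta}{2\Delta}$ gives $-(s-\Delta)+\nu\Delta/d<0$; summing over $k$ against $(b_k)\in\ell^q$ by a geometric-series/Hölder argument (with the obvious modifications when $q\le1$ or $q=\infty$) collapses each sum to its leading term, leaving a bound of order $2^{-cK}N^{-\Delta/d}\norm{f}_{B_{p,q}^s(\XX)}+2^{-K^*(s-\Delta)}\norm{f}_{B_{p,q}^s(\XX)}$. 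Matching exponents through the choice of $\lambda_1,\lambda_2$ yields $\norm{f-f_N}_{L^u(\XX)}\lesssim N^{-s/d}\norm{f}_{B_{p,q}^s(\XX)}$.

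The main obstacle is not any single inequality above but the two-sided B-spline characterization with quantitative control of $\norm{\beta}_{b_{p,q}^s}$ over the full quasi-Banach range and with the boundary-adapted index sets $I_k$ — this is exactly where the non-orthogonality and overlap of B-splines must be dealt with, and it is the part we import wholesale from \citet{Dung11, Dung13}. A secondary source of friction is the bookkeeping needed to meet the exact budget $\sum_{k=1}^K|I_k|+\sum_{k=K+1}^{K^*}n_k\le N$ while simultaneously achieving the sharp rate $N^{-s/d}$; this is what pins down the precise values of $\nu$, $\lambda_1$, and $\lambda_2$.
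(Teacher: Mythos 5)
The paper does not prove this lemma --- it is imported verbatim from \citet{Dung11, Dung13} --- so there is no internal proof to compare against. Your reconstruction is a faithful account of the standard argument from those references: the DeVore--Popov coefficient-norm equivalence, the ``keep all coarse levels, threshold the middle levels, drop the tail'' construction, the single-level overlap bound together with Stechkin's lemma, and the exponent bookkeeping (in particular, $-(s-\Delta)+\nu\Delta/d=-(s-\Delta)(1-\tfrac{1}{2d})<0$ and the matching $2^{K}\asymp N^{1/d}$, $2^{-K^*(s-\Delta)}\lesssim N^{-s/d}$) all check out. You also correctly identify the one ingredient that cannot be rederived in a few lines --- the two-sided $b_{p,q}^s$ characterization over the full quasi-Banach range with boundary-adapted index sets --- as the part properly credited to the cited works, which is exactly how the paper treats the entire lemma.
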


Together with Proposition \ref{thm:splineappx}, this implies the following approximation result.

\begin{thm}\label{thm:besovappx}
Suppose $0< p,q\le\infty$, $d/p<s<r\wedge(r-1+1/p)$. For all $f\in\UU(B_{p,q}^s(\XX))$ with $|f|\le C$ and sufficiently large $N$, there exists a sigmoid neural network $\check{f}_N\in\FF_{\DNN}(L,W,S,M)$ where
\begin{align*}
\textstyle L = \lceil\log_2 d\rceil+1, \quad W_0=\lfloor\frac{3r}{2}d\rfloor+d,\quad W=NW_0, \quad S=LW_0^2+NW_0, \quad M=\poly(N)
\end{align*}
such that $\norm{\check{f}_N -f}_{L^\infty(\XX)}\lesssim N^{-s/d}$ and $\norm{\check{f}_N}_{B_{p,q}^s(\XX)}$ is bounded. If $d(1/p-1/2)_+<s\le d/p$, the same result holds with $L^\infty(\XX)$ replaced by $L^2(\XX)$.
\end{thm}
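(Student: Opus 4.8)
The plan is to build $\check f_N$ as a ``neuralised'' version of the adaptive B-spline expansion of $f$ produced by Lemma~\ref{thm:splinedecomp}: replace each dilated, translated B-spline in that expansion by the sigmoid sub-network of Proposition~\ref{thm:splineappx}, then control the Besov norm of the result by comparing it term-by-term against the genuine spline expansion. \textbf{Step 1 (spline expansion).} Apply Lemma~\ref{thm:splinedecomp} with $u=\infty$, so that $\Delta = d(1/p)_+ = d/p < s$, which is exactly our standing hypothesis; the remaining hypothesis $s < r\wedge(r-1+1/p)$ also holds. This yields $f_N = \sum_{(k,\ell)\in\Lambda_N}\beta_{k,\ell}\omega_{k,\ell}$ with $|\Lambda_N|\le N$, involving only resolutions $k\le K^* = O(\log N)$ and at most $\lesssim N$ locations at each resolution, such that $\norm{f-f_N}_{L^\infty(\XX)}\lesssim N^{-s/d}\norm{f}_{B_{p,q}^s}\lesssim N^{-s/d}$ and $\norm{\beta}_{b_{p,q}^s}\lesssim\norm{f}_{B_{p,q}^s}\lesssim 1$.

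\textbf{Step 2 (network construction).} Since $\omega_{k,\ell}(x)=\omega_{0,0}(2^k x-\ell)$, we approximate $\omega_{k,\ell}$ by $\check\omega(2^k\cdot-\ell)$, with $\check\omega$ the network from Proposition~\ref{thm:splineappx} at accuracy $\epsilon$; the dilation $2^k$ and shift $\ell$ (both of size $\le 2^{K^*}=\poly(N)$) are folded into the input affine layer. We set $\check f_N := \sum_{(k,\ell)\in\Lambda_N}\beta_{k,\ell}\,\check\omega(2^k\cdot-\ell)$; arranging the $\le N$ copies in parallel with a linear readout realising this combination gives a network in $\FF_{\DNN}(L,W,S,M)$ with the stated $L,W_0,W,S$ and $M=\poly(\epsilon^{-1})\vee\poly(N)$ (here $|\beta_{k,\ell}|\le\norm{\beta}_{b_{p,q}^s}\lesssim 1$ because $s>d/p$). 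Fix $\epsilon = N^{-C}$ with $C$ a large constant chosen below, so $M=\poly(N)$. \textbf{Step 3 (error bounds).} Write $\check f_N - f_N = \sum_{(k,\ell)}\beta_{k,\ell}g_{k,\ell}$ with $g_{k,\ell}:=g(2^k\cdot-\ell)$ and $g:=\check\omega-\omega_{0,0}$, where by Proposition~\ref{thm:splineappx}, $\norm{g}_{W_p^r(\RR^{d})}+\norm{g}_{L^\infty(\RR^{d})}\lesssim\epsilon$. A change of variables gives $\norm{g_{k,\ell}}_{W_p^r}\lesssim 2^{k(r-d/p)}\epsilon$ and $\norm{g_{k,\ell}}_{L^\infty}\le\epsilon$. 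From $\norm{\beta}_{b_{p,q}^s}\lesssim1$ we get $\big(\sum_\ell|\beta_{k,\ell}|^p\big)^{1/p}\lesssim 2^{-k(s-d/p)}$ for each $k$, and combined with $\#\{\ell:\beta_{k,\ell}\neq0\}\lesssim N$ this yields $\sum_\ell|\beta_{k,\ell}|\lesssim N^{(1-1/p)_+}2^{-k(s-d/p)}$; hence $\sum_{(k,\ell)}|\beta_{k,\ell}|2^{k(r-d/p)}\lesssim N^{(1-1/p)_+}\sum_{k\le K^*}2^{k(r-s)}\lesssim\poly(N)$ (using $r>s$, $K^*=O(\log N)$), and likewise $\sum_{(k,\ell)}|\beta_{k,\ell}|\lesssim\poly(N)$. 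Therefore $\norm{\check f_N-f_N}_{W_p^r(\XX)}\lesssim\poly(N)\,\epsilon$ and $\norm{\check f_N-f_N}_{L^\infty(\XX)}\lesssim\poly(N)\,\epsilon$; taking $C$ large makes both $\lesssim N^{-s/d}$ (and the first $\lesssim 1$). The $L^\infty$ approximation claim then follows from the triangle inequality with Step~1. For the Besov norm, the embeddings $W_p^r(\XX)\hookrightarrow B_{p,\infty}^r(\XX)\hookrightarrow B_{p,q}^s(\XX)$ (valid since $s<r$) give $\norm{\check f_N-f_N}_{B_{p,q}^s(\XX)}\lesssim\norm{\check f_N-f_N}_{W_p^r(\XX)}\lesssim 1$, while the B-spline synthesis inequality $\norm{\sum\beta_{k,\ell}\omega_{k,\ell}}_{B_{p,q}^s(\XX)}\lesssim\norm{\beta}_{b_{p,q}^s}$ gives $\norm{f_N}_{B_{p,q}^s(\XX)}\lesssim1$; adding the two bounds controls $\norm{\check f_N}_{B_{p,q}^s(\XX)}$ as required.

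For the regime $d(1/p-1/2)_+<s\le d/p$, run the identical argument with $u=2$ in Lemma~\ref{thm:splinedecomp} (so $\Delta=d(1/p-1/2)_+<s$) and replace every $L^\infty$ estimate by an $L^2$ one, using $\norm{g_{k,\ell}}_{L^2}=2^{-kd/2}\norm{g}_{L^2(\RR^{d})}$ together with the bound $\norm{g}_{L^2(\RR^{d})}\lesssim\poly(\epsilon^{-1})\,\epsilon$ that follows from Proposition~\ref{thm:splineappx} and the exponential tail decay of the building block $\delta_B$. The genuinely new difficulty, absent from the $L^2$-only analysis of \citet{Suzuki19}, is precisely the Besov \emph{norm} bound: since a generic DNN can have Besov norm scaling polynomially in its weight bound, we must exploit the specific sparse, B-spline-mimicking construction, and the delicate point is that $\check\omega$ is only $W_p^r$-close to the compactly supported $\omega_{0,0}$ rather than equal to it, so the perturbation $\check f_N-f_N$ has to be controlled in $B_{p,q}^s$ on its own; this works only because its $W_p^r$-norm grows at most polynomially in $N$ while $\epsilon$ may be driven down polynomially in $N$ within the $\poly(N)$ weight budget. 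A secondary technical point is that the synthesis inequality for the B-spline system must be invoked over the full quasi-Banach range $0<p,q\le\infty$, with the usual modifications of the triangle inequality when $p<1$ or $q<1$.
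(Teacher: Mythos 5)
Your proposal is correct and follows essentially the same route as the paper's proof: take the adaptive B-spline expansion from Lemma~\ref{thm:splinedecomp}, replace each dilated/translated spline by the network of Proposition~\ref{thm:splineappx} arranged in parallel with the coefficients as output weights, bound $\norm{\check f_N-f_N}_{W_p^r}+\norm{\check f_N-f_N}_{L^\infty}$ by $\poly(N)\epsilon$ using $\norm{\beta}_{b_{p,q}^s}\lesssim 1$ and $K^*=O(\log N)$, choose $\epsilon$ polynomially small in $N$, and deduce the Besov bound from the $W_p^r\hookrightarrow B_{p,q}^s$ embedding. The only (harmless) deviations are that you bound $\norm{f_N}_{B_{p,q}^s}$ directly via the synthesis inequality rather than via $\norm{f-f_N}_{B_{p,q}^s}\to 0$ as the paper does, and that your detour through $\norm{g}_{L^2(\RR^d)}$ in the low-smoothness regime is unnecessary since the $L^\infty(\XX)$ control of $\check f_N-f_N$ already suffices there.
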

\begin{proof}
Consider the $N$-term approximation $f_N$ given by Lemma \ref{thm:splinedecomp}. For each B-spline $\omega_{k,\ell}$ in its sum, we construct the network $\check{\omega}_{k,\ell}$ by taking the network $\check{\omega}$ of Proposition \ref{thm:splineappx} and scaling its input as $x_i\mapsto 2^kx_i-\ell_i$; note that the scaling only changes the input weights by a factor of at most $2^k\le 2^{K^*}\lesssim\poly(N)$. It follows that
\begin{align*}
\norm{\check{\omega}_{k,\ell}-\omega_{k,\ell}}_{W_p^r(\RR)}\lesssim 2^{kr}\norm{\check{\omega}-\omega_{0,0}}_{W_p^r(\RR)}\lesssim 2^{kr}\epsilon \quad\text{and}\quad \norm{\check{\omega}_{k,\ell}-\omega_{k,\ell}}_{L^\infty(\RR)} \lesssim \epsilon.
\end{align*}
Now consider the network $\check{f}_N$ obtained by laying each $\check{\omega}_{k,\ell}$ in parallel and scaling the output weights by $\beta_{k,\ell}$, so that (ignoring the clip)
\begin{align*}
\check{f}_N = \sum_{k=0}^K\sum_{\ell\in I_k} \beta_{k,\ell}\check{\omega}_{k,\ell} + \sum_{k=K+1}^{K^*}\sum_{i=1}^{n_k} \beta_{k,\ell_i}\check{\omega}_{k,\ell_i}.
\end{align*}
Then we have
\begin{align*}
&\norm{\check{f}_N - f_N}_{W_p^r(\XX)} + \norm{\check{f}_N - f_N}_{L^\infty(\XX)}\\
&\le \sum_{k=0}^K\sum_{\ell\in I_k} \beta_{k,\ell} \left(\norm{\check{\omega}_{k,\ell}-\omega_{k,\ell}}_{W_p^r(\RR)} +\norm{\check{\omega}_{k,\ell}-\omega_{k,\ell}}_{L^\infty(\RR)}\right)\\
&\qquad +\sum_{k=K+1}^{K^*}\sum_{i=1}^{n_k} \beta_{k,\ell_i} \left(\norm{\check{\omega}_{k,\ell_i}-\omega_{k,\ell_i}}_{W_p^r(\RR)} +\norm{\check{\omega}_{k,\ell_i}-\omega_{k,\ell_i}}_{L^\infty(\RR)}\right)\\
&\le \sum_{k=0}^K\sum_{\ell\in I_k} |\beta_{k,\ell}| 2^{kr}\epsilon + \sum_{k=K+1}^{K^*}\sum_{i=1}^{n_k} |\beta_{k,\ell}| 2^{kr}\epsilon\\
&\le \sum_{k=0}^{K^*} \left(\sum_{\ell\in I_k}|\beta_{k,\ell}|^p\right)^{1/p} |I_k|^{1-1/p} 2^{kr}\epsilon\\
&\le \norm{\beta}_{b_{p,q}^s} \sum_{k=0}^{K^*} 2^{k(d/p-s)} 2^{kd(1-1/p)} 2^{kr}\epsilon\\
&\lesssim 2^{K^*(d+r-s)}\epsilon = \poly(N)\epsilon.
\end{align*}
Therefore by taking $\epsilon$ (and hence all weights) polynomial in $N$, we can ensure
\begin{equation*}
\norm{\check{f}_N - f_N}_{W_p^r(\XX)} + \norm{\check{f}_N - f_N}_{L^\infty(\XX)} \lesssim N^{-s/d},
\end{equation*}
and thus $\norm{\check{f}_N - f}_{L^\infty(\XX)}\lesssim N^{-s/d}$ as desired. In particular, for sufficiently large $N$ it follows that $\norm{\check{f}_N}_{L^\infty(\XX)}\lesssim C+N^{-s/d}<\ubar{C}$ so that including the subsequent clip operation does not affect $\check{f}_N$. Moreover, since the B-spline expansion of $f-f_N$ has coefficient zero for all $\omega_{k,\ell}$ with resolution $k\le K$, it follows that
\begin{align*}
\norm{f-f_N}_{B_{p,q}^s(\XX)} \lesssim \left(\sum_{k=K+1}^\infty \left[2^{k(s-d/p)} \bigg(\sum_{\ell\in I_k} |\beta_{k,\ell}|^p\bigg)^{1/p}\right]^q\right)^{1/q} \to 0
\end{align*}
as $N\to\infty$, $K=\lceil\lambda_1\log N\rceil\to\infty$. Hence
\begin{equation*}
\abs{\norm{\check{f}_N}_{B_{p,q}^s(\XX)}-\norm{f}_{B_{p,q}^s(\XX)}}\le \norm{\check{f}_N - f_N}_{W_p^r(\XX)} + \norm{f_N - f}_{B_{p,q}^s(\XX)}\to 0
\end{equation*}
and $\norm{\check{f}_N}_{B_{p,q}^s(\XX)}$ is uniformly bounded.
\end{proof}

The $\delta$-covering number $\mathcal{N}(\mathcal{S},\rho,\delta)$ of a metric space $(\mathcal{S},\rho)$ is defined as the minimal number of balls with radius $\delta$ needed to cover $\mathcal{S}$. The covering number of $\FF_{\DNN}(L,W,S,M)$ in $L^\infty$-norm can be bounded similarly to the ReLU DNN class \citep[Lemma 3]{Suzuki19} with a slightly better bound.

%When $\rho$ is the $L^\infty$-norm, we omit $\rho$ from the notation. The $\epsilon$\emph{-covering entropy} or \emph{metric entropy} is given as $\VV(\FF,\rho,\epsilon) := \log\mathcal{N}(\FF,\rho,\epsilon)$. The $\epsilon$-packing number $\calM(\epsilon,\mathcal{C},\rho)$ is given as the maximal cardinality of a $\epsilon$-separated set $\{c_1, \dots, c_M \} \subseteq \calC$ such that $\rho(c_i,c_j) \geq \delta$ for all $i\neq j$.

\begin{lemma}[covering number of sigmoid DNN class]\label{thm:entropy}
If the norm bound $M\ge 4$, it holds that
\begin{equation*}
\log\mathcal{N}(\FF_{\DNN}(L,W,S,M), \norm{\cdot}_{L^\infty(\XX)},\delta) \le (L+3)S\log MW + S\log\delta^{-1}.
\end{equation*}
\end{lemma}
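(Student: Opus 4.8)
The plan is to follow the standard discretization argument of \citet[Lemma 3]{Suzuki19}, with the twist that sigmoid activations and the clipping map are Lipschitz and take values in bounded sets, which tightens the constants. The first step is to show that the parametrization is Lipschitz in the sup-metrics: if $\theta=(\bW^{(\ell)},b^{(\ell)})_{\ell\le L}$ and $\theta'$ are admissible tuples (all entries bounded by $M$, at most $S$ nonzero in total) with $\norm{\theta-\theta'}_\infty\le\epsilon$, then $\norm{f_\theta-f_{\theta'}}_{L^\infty(\XX)}\lesssim (MW)^L\epsilon$. Granting this, it suffices to place a uniform grid of spacing $\epsilon\asymp\delta\,(MW)^{-L}$ on $[-M,M]$, round every nonzero weight to the nearest grid point, and count the resulting finite family of networks; rounding changes the function by $O(\delta)$ in $L^\infty$, so this family is a $\delta$-cover after absorbing a harmless constant into $\delta$.

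For the Lipschitz estimate I would induct over layers. Writing $f_\theta^{(\ell)}$ for the output after $\ell$ layers, the key point is that for $\ell\ge 1$ the post-activation vector $f_\theta^{(\ell)}=\sigma(\cdots)\in[0,1]^W$ \emph{regardless of the weights} — this is the gain over ReLU, where hidden values can grow like $(MW)^\ell$. Hence $\norm{\bW^{(\ell+1)}(f_\theta^{(\ell)}-f_{\theta'}^{(\ell)})}_\infty\le MW\,\norm{f_\theta^{(\ell)}-f_{\theta'}^{(\ell)}}_\infty$, while replacing $\bW^{(\ell+1)}$ by $\bW'^{(\ell+1)}$ contributes at most $\epsilon W$ and the bias at most $\epsilon$; since $\sigma$ is $\tfrac14$-Lipschitz, the quantity $L_\ell:=\norm{f_\theta^{(\ell)}-f_{\theta'}^{(\ell)}}_\infty/\epsilon$ satisfies $L_{\ell+1}\le MW\,L_\ell+2W$ with $L_1\lesssim W$ (using $\norm{x}_\infty\le 1$ on $\XX=[0,1]^{d_x}$ and $W\ge d_x$). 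Solving this geometric recursion and applying the $1$-Lipschitz clip at the output gives $L_L\lesssim (MW)^L$. This part is routine but must be carried out carefully so that the exponent of $MW$ comes out exactly $L$, since that exponent feeds directly into the final estimate.

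Finally I would count the rounded networks: such a network is determined by choosing which $\le S$ of the at most $P\lesssim L(W+1)^2$ parameters are nonzero (at most $\binom{P}{S}\le P^S$ ways) and assigning each a value from the grid of cardinality $\lesssim M/\epsilon\lesssim M(MW)^L/\delta$. Multiplying and taking logarithms, $\log\mathcal{N}(\FF_{\DNN},\norm{\cdot}_{L^\infty(\XX)},\delta)\le S\log\!\big(C\,L W^2\,M^{L+1}W^{L}/\delta\big)$. The remaining — and main — bookkeeping task is to absorb the lower-order polynomial factor $C L W^2 M$ into $(MW)^3$, so that $C L W^2 M^{L+1}W^L\le (MW)^{L+3}$; this is exactly where the hypothesis $M\ge 4$ is used, providing the slack $M^2W\gtrsim 16W$ needed to dominate the polynomial-in-$L,W$ overhead in the regimes of interest. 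Combining yields $\log\mathcal{N}\le (L+3)S\log MW+S\log\delta^{-1}$, as claimed. The only real subtleties are (i) getting the Lipschitz exponent tight and (ii) this final absorption of constants; everything else is the standard cover-by-grid template.
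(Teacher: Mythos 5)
Your overall strategy is the same as the paper's: bound the $L^\infty$ perturbation of the network under an entrywise weight perturbation, discretize the weight range into a grid, and count sparsity patterns times grid assignments. The substance of the argument matches, but there is a genuine gap in the step you yourself flag as the ``main bookkeeping task.'' With your Lipschitz constant $(MW)^L$, the per-pattern grid count is of order $\bigl(CM(MW)^L/\delta\bigr)^S$ and the number of sparsity patterns contributes $\bigl(L(W^2+W)\bigr)^S$, so to reach $(MW)^{(L+3)S}\delta^{-S}$ you need $C\,L\,W^2\,M^{L+1}W^{L}\le M^{L+3}W^{L+3}$, i.e.\ $CL\le M^2W$. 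The hypothesis $M\ge 4$ gives $M^2W\ge 16W$, which does \emph{not} dominate $CL$ for deep, narrow networks; the lemma is stated for arbitrary $L,W,S,M\ge 4$, so the absorption as you describe it fails in general (it happens to hold for the constant-depth networks used later in the paper, but that is not what the lemma asserts).

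The fix is precisely the $\tfrac14$-Lipschitzness of the sigmoid that you mention but then drop from your recursion. Carrying it through each layer, the tail map after layer $\ell$ is $(\tfrac14)^{L-\ell}(MW)^{L-\ell+1}$-Lipschitz, so the telescoping sum gives $\norm{f_\theta-f_{\theta'}}_{L^\infty}\le 8(W+1)(MW/4)^L\epsilon$ rather than $\asymp(MW)^L\epsilon$. The saved factor $4^{-L}$ is then exactly what absorbs the depth factor from the sparsity-pattern count via $4L\le 4^L$, and the remaining polynomial overhead $4(W+1)^2\le 16W^2\le M^2W^2$ is where $M\ge 4$ is actually used. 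With that modification your argument closes and coincides with the paper's proof.
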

\begin{proof}
Consider $f,\tilde{f}\in \FF_{\DNN}(L,W,S,M)$ given as
\begin{align*}
&f = \mathrm{clip}_{\ubar{C},\bar{C}} \circ (\bW^{(L)}\sigma +b^{(L)})\circ \cdots \circ (\bW^{(1)}\mathrm{id} +b^{(1)}),\\
&\tilde{f}= \mathrm{clip}_{\ubar{C},\bar{C}} \circ (\tilde{\bW}^{(L)}\sigma +\tilde{b}^{(L)})\circ \cdots \circ (\tilde{\bW}^{(1)}\mathrm{id} +\tilde{b}^{(1)}),
\end{align*}
such that $\norm{\bW^{(\ell)} - \tilde{\bW^{(\ell)}}}_\infty, \norm{b^{(\ell)} - \tilde{b^{(\ell)}}}_\infty\le\delta$. Also denote $A_{L+1}(f)=B_1(f)=\mathrm{id}$ and
\begin{align*}
&A_\ell(f) = \mathrm{clip}_{\ubar{C},\bar{C}}\circ(\bW^{(L)}\sigma +b^{(L)})\circ \cdots \circ (\bW^{(\ell)}\mathrm{id} + b^{(\ell)}),\\
&B_\ell(f) =  \sigma\circ (\bW^{(\ell-1)}\sigma +b^{(\ell-1)}) \circ\cdots\circ (\bW^{(1)}\mathrm{id} +b^{(1)}),
\end{align*}
so that $f=A_{\ell+1}(f)\circ(\bW^{(\ell)}\mathrm{id} +b^{(\ell)}) \circ B_\ell(f)$. Then $A_\ell(f)$ is $(\frac{1}{4})^{L-\ell}(MW)^{L-\ell+1}$-Lipschitz with respect to the $L^\infty$-norm and $\norm{B_\ell(f)}_{L^\infty}\le 1$. It follows that 
\begin{align*}
&\norm{f-\tilde{f}}_{L^\infty(\XX)}\\
&\le \Norm{\sum_{\ell=1}^L A_{\ell+1}(f) \circ (\bW^{(\ell)}\mathrm{id} +b^{(\ell)}) \circ B_\ell(\tilde{f}) - A_{\ell+1}(f) \circ (\tilde{\bW}^{(\ell)}\mathrm{id} +\tilde{b}^{(\ell)}) \circ B_\ell(\tilde{f})}_{L^\infty(\XX)}\\
&\le \sum_{\ell=1}^L \frac{(MW)^{L-\ell+1}}{4^{L-\ell}}(W+1)\delta \le 8(W+1)\left(\frac{MW}{4}\right)^L \delta =:\delta',
\end{align*}
assuming $M\ge 4$. Thus for a fixed sparsity pattern the $\delta'$-covering number is bounded by dividing the range $[-M,M]$ of all $S$ nonzero parameters into intervals of length $\delta$ and counting all possible combinations, 
\begin{equation*}
\left(\frac{2M}{\delta}\right)^S = \left(16M(W+1)\left(\frac{MW}{4}\right)^L\frac{1}{\delta'}\right)^S.
\end{equation*}
Moreover the number of possible sparsity patterns is bounded as $\binom{L(W^2+W)}{S}\le (L(W^2+W))^S$. Noting that $4(W+1)^2\le 16W^2\le M^2W^2$ and $4L\le 4^L$, we conclude:
\begin{align*}
&\mathcal{N}(\FF_{\DNN}(L,W,S,M), \norm{\cdot}_{L^\infty(\XX)},\delta) \le \left(16LMW(W+1)^2\left(\frac{MW}{4}\right)^L\frac{1}{\delta}\right)^S\\
&\le \left(4LM^3W^3\left(\frac{MW}{4}\right)^L\frac{1}{\delta}\right)^S \le (MW)^{(L+3)S}\delta^{-S},
\end{align*}
as desired.
\end{proof}

\section{Proof of Theorem \ref{thm:main}}\label{app:b}

We first present the proof of the general upper bound for estimation risk, which contains our main techniques, over Sections \ref{app:b1}-\ref{app:put}. We then specialize to the Besov setting in Section \ref{app:erroreval} by applying the smooth DNN analysis from Section \ref{app:a3}, which proves the rate under Assumption \ref{ass:alternative} or Assumption \ref{ass:smooth} with domain restriction. Finally, we show how the proof can be modified to incorporate regularization under Assumption \ref{ass:smooth} in Section \ref{app:addreg} as part of the derivation of the non-projected rates.

\subsection{Reduction of Stage 2 Error}\label{app:b1}

We begin with the following oracle inequality, which is essentially a consequence of e.g. Lemma 4 of \citet{Schmidt20}. This starting point is necessary to utilize the definition of $\hat{\theta}_x$ as the empirical risk minimizer of Stage 2.

\begin{lemma}[oracle inequality for NPIR]\label{thm:sh}
Denote the $\delta_z$-covering number of the Stage 2 DNN class $\FF_z$ as $\mathcal{N}_z :=\mathcal{N}(\FF_z,\norm{\cdot}_{L^\infty(\mathcal{Z})}, \delta_z)$. Then there exists a constant $C_1$ depending only on $\bar{C}$ such that for all $\delta_z> 0$ with $\log\mathcal{N}_z>1$ it holds conditional on $\DD_1$,
\begin{align*}
&\EEbig{\DD_2}{\norm{Tf_{\str} - \hatE{\psi_{\hat{\theta}_x}}}_{L^2(\PP_\mathcal{Z})}^2}\\
&\le 4\inf_{\theta_x\in\Theta_x} \norm{Tf_{\str} - \hatE{\psi_{\theta_x}}}_{L^2(\PP_\mathcal{Z})}^2 + C_1\left(\frac{\log\mathcal{N}_z}{n} +\delta_z\right).
\end{align*}
\end{lemma}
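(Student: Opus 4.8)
The plan is to recognize Lemma~\ref{thm:sh} as an instance, applied conditionally on $\DD_1$, of the classical least-squares oracle inequality for a uniformly bounded regression class with subgaussian noise. First I would condition on $\DD_1$, so that the Stage~1 map $\hatE{\cdot}$ becomes deterministic and Stage~2 is ordinary least-squares regression of $\tilde y$ on $\tilde z$ over the fixed hypothesis class $\mathcal{G} := \{\hatE{\psi_{\theta_x}}\mid\theta_x\in\Theta_x\}$. Three ingredients make this well-behaved: (i) every $\hatE{\psi_{\theta_x}} = \phi_{\hat\theta_z(\theta_x)}$ lies in $\FF_z$, so $\mathcal G\subseteq\FF_z$ is uniformly bounded by $\bar C$ and has $L^\infty$-covering number at most $\mathcal N_z$; (ii) the regression function of $\tilde y$ given $\tilde z$ is $Tf_{\str}$, bounded by $C<\bar C$ since $|f_{\str}|\le C$, and the residual $\eta_i = \tilde y_i - Tf_{\str}(\tilde z_i)$ is $\sigma_1$-subgaussian conditionally on $\tilde z_i$ by Assumption~\ref{ass:noise}\ref{ass:noise1}, with $\DD_2$ independent of $\DD_1$; (iii) consequently all residuals $Tf_{\str}-g$, $g\in\mathcal G$, are bounded by $2\bar C$. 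Given these, the asserted bound is exactly the standard ERM oracle inequality, and I would either invoke Lemma~4 of \citet{Schmidt20} directly or reproduce the short argument below; either way the infimum over $g\in\mathcal G$ is precisely $\inf_{\theta_x\in\Theta_x}\norm{Tf_{\str}-\hatE{\psi_{\theta_x}}}^2_{L^2(\PP_\mathcal Z)}$.

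For the internal argument, write $g^\star = Tf_{\str}$, $\hat g = \hatE{\psi_{\hat\theta_x}}$ and fix an arbitrary $g_0\in\mathcal G$. Since $\hat g$ (approximately) minimizes the empirical squared loss over $\mathcal G$ conditional on $\DD_1$, expanding $\tilde y_i = g^\star(\tilde z_i) + \eta_i$ gives the basic inequality
\[
\frac1n\sum_{i=1}^n (g^\star-\hat g)^2(\tilde z_i) \;\le\; \frac1n\sum_{i=1}^n (g^\star-g_0)^2(\tilde z_i) \;+\; \frac2n\sum_{i=1}^n \eta_i\,(\hat g-g_0)(\tilde z_i),
\]
and I would bound the three terms separately. (a) A uniform empirical-to-population bound $\frac1n\sum_i (g^\star-g)^2(\tilde z_i) \ge \frac12\norm{g^\star-g}^2_{L^2(\PP_\mathcal Z)} - c\bigl(\tfrac{\log\mathcal N_z}{n}+\delta_z\bigr)$ for every $g\in\mathcal G$: discretize $\mathcal G$ to a $\delta_z$-net, apply Bernstein's inequality at each net point using $\Var[(g^\star-g)^2(\tilde z)]\le 4\bar C^2\,\E[(g^\star-g)^2(\tilde z)]$, union bound over the $\mathcal N_z$ points, and pass back from the net incurring an $O(\bar C\delta_z)$ error. (b) $\EEbig{\DD_2}{\frac1n\sum_i(g^\star-g_0)^2(\tilde z_i)} = \norm{g^\star-g_0}^2_{L^2(\PP_\mathcal Z)}$ exactly, since $g_0$ is deterministic given $\DD_1$. (c) For the multiplier term, split $\hat g - g_0 = (\hat g-g^\star) - (g_0-g^\star)$; the $g_0$-part has conditional mean zero, while the $\hat g$-part is bounded by the localized supremum $\sup_{g\in\mathcal G}\bigl|\frac1n\sum_i\eta_i(g-g^\star)(\tilde z_i)\bigr|/(\norm{g-g^\star}_{L^2(\PP_\mathcal Z)}\vee\rho)$ for a threshold $\rho\asymp\sqrt{(\log\mathcal N_z)/n}$; discretizing over a $\delta_z$-net, each point is a conditionally subgaussian average with variance proxy $\sigma_1^2\,\E[(g-g^\star)^2(\tilde z)]/n$, and a peeling argument over dyadic scales of $\norm{g-g^\star}_{L^2}$ shows this supremum is $\lesssim\sqrt{(\log\mathcal N_z)/n}$ up to an additive $\delta_z$-term, so by $2ab\le\frac14 a^2 + 4b^2$ the multiplier term is at most $\frac14\norm{\hat g-g^\star}^2_{L^2(\PP_\mathcal Z)} + c'\bigl(\tfrac{\log\mathcal N_z}{n}+\delta_z\bigr)$. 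Substituting (a)--(c), absorbing the $\frac14\norm{\hat g-g^\star}^2$ term and the $\frac12$ factor into the left side, taking $\EEbig{\DD_2}{\cdot}$, and infimizing over $g_0\in\mathcal G$ yields the claim, with the constant $4$ arising precisely from these two splittings and the condition $\log\mathcal N_z>1$ used only to absorb additive $O(1/n)$ terms; the $\epsilon$-slack of the approximate Stage~2 minimizer contributes a vanishing term that is sent to $0$.

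The step I expect to be the main obstacle is (c): because $\hat g$ is data-dependent, the multiplier process must be controlled \emph{uniformly} over $\mathcal G$, and a naive bound replacing $\norm{\hat g - g^\star}_{L^2}$ by a constant would destroy the $(\log\mathcal N_z)/n$ rate — the localization/peeling over $\norm{g-g^\star}_{L^2}$ is what is needed, together with the observation that $\mathcal G$, though random through $\DD_1$, still has $L^\infty$-covering number at most $\mathcal N_z$ once $\DD_1$ is fixed (as $\mathcal G\subseteq\FF_z$). Finally, although \eqref{eqn:s2} carries the Stage~2 regularizer $\lambda R(\theta_x)\ge 0$, this changes nothing in the present lemma: it is dropped from the left side of the basic inequality since $R\ge 0$, and the only place it could enter is through the comparator $\theta_x^0$, which will be a sigmoid DNN approximating $f_{\str}\in\UU(B_{p,q}^s(\XX))$ whose Besov seminorm is bounded by Theorem~\ref{thm:besovappx}, so $\lambda R(\theta_x^0)=O(\lambda)$ is of the same order as the claimed rate.
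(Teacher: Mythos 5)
Your proposal is correct and follows essentially the same route as the paper, which likewise conditions on $\DD_1$, observes that the estimator class $\{\hatE{\psi_{\theta_x}}\}$ is contained in $\FF_z$ so that $\mathcal{N}_z$ controls the complexity, and then reruns the standard ERM oracle inequality (explicitly, Lemma~4 of \citet{Schmidt20} via Theorem~2.6 of \citet{Hayakawa20}) with the NPIR noise $\eta$ in place of Gaussian regression noise; the paper's self-normalized variables $\varepsilon_j$ over the $\delta_z$-net play the role of your localized multiplier supremum. No gaps.
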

Note that even though the risk is being minimized with respect to the Stage 2 parameter $\theta_x$, the generalization gap depends on the covering number of the Stage 1 DNN class which contains the actual regression model $\hatE{\psi_{\hat{\theta}_x}}$ for $Tf_{\str}$. Also, the multiplicative factor 4 can be replaced by any constant larger than 1.
\begin{proof}
We may repeat the proof of Theorem 2.6 of \citet{Hayakawa20} while replacing the regression model $Y=f^\circ(X)+\xi$, where $\xi$ was assumed to be i.i.d. Gaussian noise, with the NPIR model $Y=Tf_{\str}(Z)+\eta$ \eqref{eqn:npir} and the class of estimators by $\FF_z$. Here, we only provide the necessary modifications. For the loss
\begin{equation*}
\Pi(\theta_x) = \norm{Tf_{\str} - \hatE{\psi_{\theta_x}}}_{L^2(\PP_\mathcal{Z})}^2
\end{equation*}
and the corresponding empirical quantity\footnote{Here we are abusing notation to allow for both a fixed $\theta_x\in\Theta_x$ and also an estimator $\hat{\theta}_x$ which is a map from the data to $\Theta_x$. For the former, it is clear that $\Pi(\theta_x)=\hp(\theta_x)$ always. We are interested in bounding the gap \eqref{eqn:shgap} for the latter.}
\begin{equation*}
\hp(\theta_x) = \EEbig{\DD_2}{\frac{1}{n}\sum_{i=1}^n\left(Tf_{\str} (\tilde{z}_i) -\hatE{\psi_{\theta_x}} (\tilde{z}_i)\right)^2},
\end{equation*}
it can be shown in the same way that
\begin{equation}\label{eqn:shgap}
\abs{\hp(\hat{\theta}_x) - \Pi(\hat{\theta}_x)} \le \frac{1}{2}\Pi(\hat{\theta}_x) +C_1'\left(\frac{\log\mathcal{N}_z}{n} +\delta_z\right).
\end{equation}
Moreover, using that $\E{\eta}=0$ and $\E{\eta\phi_{\theta_z}(Z)} = \E{\phi_{\theta_z}(Z)\E{\eta|Z}} = 0$ for all $\phi_{\theta_z}\in\FF_z$, it can be shown that
\begin{align*}
\hp(\hat{\theta}_x) \le 2\norm{Tf_{\str} - \hatE{\psi_{\theta_x}}}_{L^2(\PP_\mathcal{Z})}^2 +\frac{4\delta_z}{n}\Ebig{\sum_{i=1}^n |\eta_i|} +\frac{4}{n}\Ebig{\max_{j\le\mathcal{N}_z}\varepsilon_j^2}+4\bar{C}\delta_z
\end{align*}
for all $\theta_x\in\Theta_x$, where $\phi_{\theta_{z,j}}$ for $j\le\mathcal{N}_z$ is a $\delta_z$-covering of $\FF_z$ and
\begin{align*}
\varepsilon_j :=\frac{\sum_{i=1}^n \eta_i(\phi_{\theta_{z,j}}(\tilde{z}_i) - Tf_{\str}(\tilde{z}_i))}{\sqrt{\sum_{i=1}^n (\phi_{\theta_{z,j}}(\tilde{z}_i) - Tf_{\str}(\tilde{z}_i))^2}}.
\end{align*}
Here, since $\eta_i|Z=\tilde{z}_i$ is $\sigma_1$-subgaussian, we have $\E{\abs{\eta_i}|Z=\tilde{z}_i}\leq\sqrt{2\pi}\sigma_1$ and $\E{\abs{\eta_i}}\leq\sqrt{2\pi}\sigma_1$. Furthermore, each $\varepsilon_j$ is also $\sigma_1$-subgaussian conditioned on the data $\tilde{z}_1,\cdots,\tilde{z}_n$ as an $L^2$-projection of $(\eta_1,\cdots,\eta_n)$, and hence
\begin{align*}
\exp\left(\frac{1}{4\sigma_1^2}\Ebig{\max_{j\le\mathcal{N}_z}\varepsilon_j^2}\right)&\le \Ebig{\max_{j\le\mathcal{N}_z} \exp\left(\frac{\varepsilon_j^2}{4\sigma_1^2}\right)}\\
& \le \sum_{j=1}^{\mathcal{N}_z} \Ebig{\exp\left(\frac{\varepsilon_j^2}{4\sigma_1^2}\right)}\\
&= \sum_{j=1}^{\mathcal{N}_z} \int_0^\infty P\left(\abs{\varepsilon_j} \geq 2\sigma_1\sqrt{\log u}\right)\rd u\\
&\le \mathcal{N}_z\int_1^\infty \frac{2}{u^2}\wedge 1\rd u = 2\sqrt{2}\mathcal{N}_z.
\end{align*}
This shows that $\E{\max_j\varepsilon_j^2} \leq 4\sigma_1^2\log 2\sqrt{2}\mathcal{N}_z$, which combined with \eqref{eqn:shgap} concludes the desired statement.
\end{proof}

Now let $\theta_x=\theta_x^*$ denote a minimizer of $\norm{Tf_{\str}-T\psi_{\theta_x}}_{L^2(\PP_\mathcal{Z})}$. It holds that
\begin{align*}
\inf_{\theta_x\in\Theta_x} \norm{Tf_{\str} - \hatE{\psi_{\theta_x}}}_{L^2(\PP_\mathcal{Z})}^2 &\le \norm{Tf_{\str} - \hatE{\psi_{\theta_x^*}}}_{L^2(\PP_\mathcal{Z})}^2\\
&\le 2\norm{Tf_{\str} - T\psi_{\theta_x^*}}_{L^2(\PP_\mathcal{Z})}^2\! + 2\norm{T\psi_{\theta_x^*} - \hatE{\psi_{\theta_x^*}}}_{L^2(\PP_\mathcal{Z})}^2.
\end{align*}
Then the projected error can be bounded using Lemma \ref{thm:sh} as
\begin{align}
&\EEbig{\DD_2}{\norm{Tf_{\str} - T\hat{f}_{\str}}_{L^2(\PP_\mathcal{Z})}^2\nonumber}\\
&\le 2\EEbig{\DD_2}{\norm{Tf_{\str} - \hatE{\psi_{\hat{\theta}_x}}}_{L^2(\PP_\mathcal{Z})}^2} + 2\EEbig{\DD_2}{\norm{\hatE{\psi_{\hat{\theta}_x}} - T\hat{f}_{\str}}_{L^2(\PP_\mathcal{Z})}^2}\nonumber\\
% &\le 3 \norm{Tf_{\str} - \hatE{\psi_{\theta_x^*}}}_{L^2(\mathcal{Z})}^2 +\frac{3C_1}{2}\left(\frac{\log\mathcal{N}_z}{n} +\delta_z\right) + 3\norm{\hatE{\psi_{\hat{\theta}_x}} - T\psi_{\hat{\theta}_x}}_{L^2(\mathcal{Z})}^2\\
&\le 16\norm{Tf_{\str} - T\psi_{\theta_x^*}}_{L^2(\PP_\mathcal{Z})}^2 + 16\norm{T\psi_{\theta_x^*} - \hatE{\psi_{\theta_x^*}}}_{L^2(\PP_\mathcal{Z})}^2\nonumber\\
&\qquad +2C_1\left(\frac{\log\mathcal{N}_z}{n} +\delta_z\right) + 2\EEbig{\DD_2}{\norm{\hatE{\psi_{\hat{\theta}_x}} - T\psi_{\hat{\theta}_x}}_{L^2(\PP_\mathcal{Z})}^2} \nonumber\\
&\le 16\inf_{\theta_x\in\Theta_x}\norm{Tf_{\str} - T\psi_{\theta_x}}_{L^2(\PP_\mathcal{Z})}^2 +2C_1\left(\frac{\log\mathcal{N}_z}{n} +\delta_z\right) \label{eqn:infproj} \\
&\qquad +18 \sup_{\theta_x\in\Theta_x} \norm{T\psi_{\theta_x} - \hatE{\psi_{\theta_x}}}_{L^2(\PP_\mathcal{Z})}^2,\label{eqn:stage1sup}
\end{align}
where the projected Stage 2 approximation error and covering number \eqref{eqn:infproj} will be explicitly evaluated later.

It thus becomes necessary to uniformly control the expected supremum of the Stage 1 estimation error over the hypothesis space $\FF_x$ which is highly nontrivial. This is achieved by controlling both the supremum of the corresponding empirical process:
\begin{equation}\label{eqn:realsup1}
\sup_{\theta_x\in\Theta_x} \frac{1}{m} \sum_{i=1}^m \left(T\psi_{\theta_x}(z_i) - \hatE{\psi_{\theta_x}}(z_i)\right)^2,
\end{equation}
and the supremum of their difference:
\begin{equation}\label{eqn:realsup2}
\sup_{\theta_x\in\Theta_x} \left[\frac{1}{m} \sum_{i=1}^m \left(T\psi_{\theta_x}(z_i) - \hatE{\psi_{\theta_x}}(z_i)\right)^2 - \norm{T\psi_{\theta_x} - \hatE{\psi_{\theta_x}}}_{L^2(\PP_\mathcal{Z})}^2\right],
\end{equation}
by carefully reducing to a well-chosen \emph{dynamic} cover of the hypothesis spaces $\FF_x,\FF_z$. We detail this approach over the following subsections.

\subsection{Constructing the Dynamic Cover}\label{app:dynamic}

%use lagrange multiplier to turn two stage losses into a single loss (just sum)? want to exploit that it's a subset of the product cover that is minimizing some objective... localization to exploit $\hat{\psi}$ is near true $f$? chaining and Dudley entropy integral argument?

We first introduce the essential tools for our proof technique. Fix $\delta_x,\delta_z>0$. Let the functions $\psi_{\theta_{x,j}}$ for $j=1,\cdots,\mathcal{N}_x:=\mathcal{N}(\FF_x,\norm{\cdot}_{L^\infty(\XX)}, \delta_x)$ form a $\delta_x$-cover of $\FF_x$ and let $\phi_{\theta_{z,k}}$ for $k=1,\cdots,\mathcal{N}_z:=\mathcal{N}(\FF_z,\norm{\cdot}_{L^\infty(\mathcal{Z})}, \delta_z)$ be a $\delta_z$-cover of $\FF_z$. Naively one would attempt to bound the desired supremum over $\FF_x$, say
\begin{equation*}
\sup_{\theta_x\in\Theta_x} \norm{T\psi_{\theta_x} - \hatE{\psi_{\theta_x}}}_{L^2(\PP_\mathcal{Z})}^2,
\end{equation*}
with the supremum over the cover
\begin{equation*}
\sup_{j\le\mathcal{N}_x} \norm{T\psi_{\theta_{x,j}} - \hatE{\psi_{\theta_{x,j}}}}_{L^2(\PP_\mathcal{Z})}^2.
\end{equation*}
However this is not directly feasible since the Stage 1 estimation operator $\hat{\mathbb{E}}_{X|Z}$ can be ill-behaved; approximating $\psi_{\theta_x}$ by the element $\psi_{\theta_{x,j}}$ satisfying $\norm{\psi_{\theta_x} - \psi_{\theta_{x,j}}}_{L^\infty(\XX)} \le\delta_x$ does not guarantee that $\hatE{\psi_{\theta_{x,j}}}$ is a good approximation of $\hatE{\psi_{\theta_x}}$, even in $L^2(\PP_{\mathcal{Z}})$-norm. Instead, we must construct an extended cover $\hc$ that approximates all possible combinations of $\psi_{\theta_x}$ and $\hatE{\psi_{\theta_x}}$. We also give a similar construction for the population conditional mean approximation.

\begin{defn}[dynamic extended cover]
The pair of elements $(\psi_{\theta_{x,j}}, \phi_{\theta_{z,k}})$ is said to be a \emph{joint empirical approximator} of $\psi_{\theta_x}\in\FF_x$ if
\begin{equation}
\norm{\psi_{\theta_x} - \psi_{\theta_{x,j}}}_{L^\infty(\XX)} \le\delta_x \quad\text{and}\quad \norm{\hatE{\psi_{\theta_x}} - \phi_{\theta_{z,k}}}_{L^\infty(\mathcal{Z})} \le\delta_z \label{eqn:jointempirical}
\end{equation}
are both satisfied, which always exists for each $\theta_x$.
Similarly, $(\psi_{\theta_{x,j}}, \phi_{\theta_{z,k}})$ is said to be a \emph{joint population approximator} of $\psi_{\theta_x}$ if for the $L^2$-minimizer $\theta_z^* = \argmin_{\theta_z\in\Theta_z} \norm{T\psi_{\theta_x} - \phi_{\theta_z}}_{L^2(\PP_\mathcal{Z})}$ corresponding to $\theta_x$,
\begin{equation}
\norm{\psi_{\theta_x} - \psi_{\theta_{x,j}}}_{L^\infty(\XX)} \le\delta_x \quad\text{and}\quad \norm{\phi_{\theta_z^*} - \phi_{\theta_{z,k}}}_{L^\infty(\mathcal{Z})} \le\delta_z \label{eqn:jointpopulation}
\end{equation}
are both satisfied. Moreover, the subsets $\hc,\cstar\subset\{1,\cdots,\mathcal{N}_x\} \times \{1,\cdots,\mathcal{N}_z\}$ are defined as
\begin{align*}
&\hc := \left\{ (j,k)\mid (\psi_{\theta_{x,j}}, \phi_{\theta_{z,k}}) \text{ is a joint empirical approximator of } \psi_{\theta_x} \text{ for some }\theta_x\in\Theta_x \right\},\\
& \cstar := \left\{ (j,k)\mid (\psi_{\theta_{x,j}}, \phi_{\theta_{z,k}}) \text{ is a joint population approximator of } \psi_{\theta_x} \text{ for some }\theta_x\in\Theta_x \right\}.
\end{align*}
\end{defn}
Note that $\hc$ is a random subset dependent on the data $\DD_1$ and hence one must be careful when proving uniform bounds using $\hc$. In contrast, $\cstar$ depends only on the operator $T$ and predetermined covers $\theta_{x,j},\theta_{z,k}$. Also note that $|\hc|,|\cstar|\le \mathcal{N}_x\times \mathcal{N}_z$ by definition.

\subsection{Reduction of Supremal Stage 1 Error}

We now demonstrate how to appropriately reduce the supremal empirical error \eqref{eqn:realsup1} over the extended covers $\hc,\cstar$ defined above. For each $\theta_x\in\Theta_x$, by the definition of $\hatE{\psi_{\theta_x}}$ as the empirical risk minimizer for the Stage 1 loss, it holds for $\theta_z^* = \argmin_{\theta_z\in\Theta_z} \norm{T\psi_{\theta_x} - \phi_{\theta_z}}_{L^2(\PP_\mathcal{Z})}$ that
\begin{align*}
\frac{1}{m}\sum_{i=1}^m \left(\psi_{\theta_x}(x_i) - \hatE{\psi_{\theta_x}}(z_i)\right)^2 \le \frac{1}{m}\sum_{i=1}^m \left(\psi_{\theta_x}(x_i) - \phi_{\theta_z^*}(z_i)\right)^2.
\end{align*}
By adding and subtracting the true conditional means $T\psi_{\theta_x}(z_i)$ from both sides and rearranging, we obtain
\begin{align}
&\frac{1}{m} \sum_{i=1}^m \left(T\psi_{\theta_x}(z_i) - \hatE{\psi_{\theta_x}}(z_i)\right)^2\nonumber\\
&\le \frac{1}{m} \sum_{i=1}^m \left(T\psi_{\theta_x}(z_i) - \phi_{\theta_z^*}(z_i)\right)^2 +\frac{2}{m} \sum_{i=1}^m \left( \hatE{\psi_{\theta_x}}(z_i) - \phi_{\theta_z^*}(z_i)\right) \left(\psi_{\theta_x}(x_i) - T\psi_{\theta_x}(z_i)\right)\nonumber\\
&\le \inf_{\theta_z\in\Theta_z} \norm{T\psi_{\theta_x} - \phi_{\theta_z}}_{L^2(\PP_\mathcal{Z})}^2\label{eqn:es1}\\
&\qquad + \frac{1}{m} \sum_{i=1}^m \left(T\psi_{\theta_x}(z_i) - \phi_{\theta_z^*}(z_i)\right)^2 - \norm{T\psi_{\theta_x} - \phi_{\theta_z^*}}_{L^2(\PP_\mathcal{Z})}^2\label{eqn:es2}\\
&\qquad + \frac{2}{m} \sum_{i=1}^m \left( \hatE{\psi_{\theta_x}}(z_i) - T\psi_{\theta_x}(z_i)\right) \left(\psi_{\theta_x}(x_i) - T\psi_{\theta_x}(z_i)\right)\label{eqn:es3} \\
&\qquad + \frac{2}{m} \sum_{i=1}^m \left(T\psi_{\theta_x}(z_i) - \phi_{\theta_z^*}(z_i)\right) \left(\psi_{\theta_x}(x_i) - T\psi_{\theta_x}(z_i)\right).\label{eqn:es4}
\end{align}
Now that the Stage 1 approximation error \eqref{eqn:es1} has been isolated, we may reduce each of \eqref{eqn:es2}-\eqref{eqn:es4} to the supremum over the respective extended covers. For \eqref{eqn:es3}, let $(\psi_{\theta_{x,j}}, \phi_{\theta_{z,k}})$ be a joint empirical approximator of $\psi_{\theta_x}$ and define the residuals $\xi_{j,i}:=\psi_{\theta_{x,j}}(x_i) - T\psi_{\theta_{x,j}}(z_i)$. By the condition \eqref{eqn:jointempirical} and due to the $L^\infty$-contractivity
\begin{equation*}
\norm{T\psi_{\theta_x} - T\psi_{\theta_{x,j}}}_{L^\infty(\mathcal{Z})} \le \norm{\psi_{\theta_x} - \psi_{\theta_{x,j}}}_{L^\infty(\XX)} \le\delta_x
\end{equation*}
of $T$, it follows that
\begin{align*}
& \frac{2}{m} \sum_{i=1}^m \left( \hatE{\psi_{\theta_x}}(z_i) - T\psi_{\theta_x}(z_i)\right) \left(\psi_{\theta_x}(x_i) - T\psi_{\theta_x}(z_i)\right)\\
&\le \frac{2}{m} \sum_{i=1}^m \left( \phi_{\theta_{z,k}}(z_i) - T\psi_{\theta_{x,j}}(z_i)\right) \left(\psi_{\theta_x}(x_i) - T\psi_{\theta_x}(z_i)\right) + 4\bar{C}(\delta_x+\delta_z)\\
&\le \frac{2}{m} \sum_{i=1}^m \left( \phi_{\theta_{z,k}}(z_i) - T\psi_{\theta_{x,j}}(z_i)\right) \xi_{j,i} + 4\bar{C}(3\delta_x+\delta_z).
\end{align*}
Similarly for \eqref{eqn:es2} and \eqref{eqn:es4}, letting $(\psi_{\theta_{x,j'}}, \phi_{\theta_{z,k'}})$ be a joint population approximator of $\psi_{\theta_x}$, it is easily checked that
\begin{align*}
&\frac{1}{m} \sum_{i=1}^m \left(T\psi_{\theta_x}(z_i) - \phi_{\theta_z^*}(z_i)\right)^2 - \norm{T\psi_{\theta_x} - \phi_{\theta_z^*}}_{L^2(\PP_\mathcal{Z})}^2\\
&\le \frac{1}{m} \sum_{i=1}^m \left(T\psi_{\theta_{x,j'}}(z_i) - \phi_{\theta_{z,k'}}(z_i)\right)^2 - \norm{T\psi_{\theta_{x,j'}} - \phi_{\theta_{z,k'}}}_{L^2(\PP_\mathcal{Z})}^2 + 4(2\bar{C}+1)(\delta_x+\delta_z)
\end{align*}
and
\begin{align*}
&\frac{2}{m} \sum_{i=1}^m \left(T\psi_{\theta_x}(z_i) - \phi_{\theta_z^*}(z_i)\right) \left(\psi_{\theta_x}(x_i) - T\psi_{\theta_x}(z_i)\right) \\
&\le \frac{2}{m} \sum_{i=1}^m \left(T\psi_{\theta_{x,j'}}(z_i) - \phi_{\theta_{z,k'}}(z_i)\right) \xi_{j',i} + 4\bar{C}(3\delta_x + \delta_z)
\end{align*}
by taking $\delta_x,\delta_z<1$. Hence we have shown that
\begin{align}
& \sup_{\theta_x\in\Theta_x} \frac{1}{m} \sum_{i=1}^m \left(T\psi_{\theta_x}(z_i) - \hatE{\psi_{\theta_x}}(z_i)\right)^2\nonumber\\
&\le \sup_{\theta_x\in\Theta_x}\left[\inf_{\theta_z\in\Theta_z} \norm{T\psi_{\theta_x} - \phi_{\theta_z}}_{L^2(\PP_\mathcal{Z})}^2\right] + (32\bar{C} +4)(\delta_x +\delta_z)\label{eqn:re1}\\
&\qquad + \sup_{(j,k)\in\cstar}\abs{\frac{1}{m} \sum_{i=1}^m \left(T\psi_{\theta_{x,j}}(z_i) - \phi_{\theta_{z,k}}(z_i)\right)^2 - \norm{T\psi_{\theta_{x,j}} - \phi_{\theta_{z,k}}}_{L^2(\PP_\mathcal{Z})}^2} \label{eqn:re2}\\
&\qquad + \sup_{(j,k)\in\hc} \frac{2}{m} \sum_{i=1}^m \left( \phi_{\theta_{z,k}}(z_i) - T\psi_{\theta_{x,j}}(z_i)\right) \xi_{j,i}\label{eqn:re3}\\
&\qquad + \sup_{(j,k)\in\cstar} \frac{2}{m} \sum_{i=1}^m \left(T\psi_{\theta_{x,j}}(z_i) - \phi_{\theta_{z,k}}(z_i)\right) \xi_{j,i}.\label{eqn:re4}
\end{align}
In addition, repeating the above argument for the supremal difference term \eqref{eqn:realsup2} yields the following reduction to the dynamic cover $\hc$,
\begin{align}
&\sup_{\theta_x\in\Theta_x} \left[ \frac{1}{m} \sum_{i=1}^m \left(T\psi_{\theta_x}(z_i) - \hatE{\psi_{\theta_x}}(z_i)\right)^2 - \norm{T\psi_{\theta_x} - \hatE{\psi_{\theta_x}}}_{L^2(\PP_\mathcal{Z})}^2 \right] \nonumber\\
&\le \sup_{(j,k)\in\hc} \abs{\frac{1}{m} \sum_{i=1}^m \left(T\psi_{\theta_{x,j}}(z_i) - \phi_{\theta_{z,k}}(z_i)\right)^2 - \norm{T\psi_{\theta_{x,j}} - \phi_{\theta_{z,k}}}_{L^2(\PP_\mathcal{Z})}^2} \label{eqn:re5}\\
&\qquad + (8\bar{C}+4)(\delta_x+\delta_z).\nonumber
\end{align}

We now evaluate the expected value of each of the suprema \eqref{eqn:re2}-\eqref{eqn:re5} by adapting standard complexity-based arguments to exploit the definitions of $\hc,\cstar$. Again, the approximation error \eqref{eqn:re1} will be analyzed later.

\subsection{Bounding Subgaussian Complexities \texorpdfstring{\eqref{eqn:re3}, \eqref{eqn:re4}}{}}

Define the auxiliary random variables
\begin{equation*}
\varepsilon_{j,k} = \frac{\sum_{i=1}^m (\phi_{\theta_{z,k}}(z_i) - T\psi_{\theta_{x,j}}(z_i))\xi_{j,i}}{\sqrt{\sum_{i=1}^m (\phi_{\theta_{z,k}}(z_i) - T\psi_{\theta_{x,j}}(z_i))^2}}, \quad 1\le j\le\mathcal{N}_x, \quad 1\le k\le\mathcal{N}_z,
\end{equation*}
where $\varepsilon_{j,k}=0$ if the denominator is zero. Note that \eqref{eqn:re3}, \eqref{eqn:re4} are similar to classical complexity measures of function classes, but with the noise terms replaced by the function-dependent residuals $\xi_{j,i} =\psi_{\theta_{x,j}}(x_i) - T\psi_{\theta_{x,j}}(z_i)$. Nonetheless, conditioned on $z_i$ we have $\E{\xi_{j,i}|z_i}=0$ and since $\abs{\psi_{\theta_{x,j}}(x_i)}\le \bar{C}$, each $(\xi_{j,i})_{i=1}^m$ is independently $\bar{C}$-subgaussian conditioned on $(z_i)_{i=1}^m$. It follows that $\varepsilon_{j,k}$ is also $\bar{C}$-subgaussian, and repeating the tail bound argument in Lemma \ref{thm:sh} over all pairs $(j,k)$ we obtain
\begin{align*}
\exp\left(\frac{1}{4\bar{C}^2} \Ebig{\sup_{j\le\mathcal{N}_x, k\le\mathcal{N}_z}\varepsilon_{j,k}^2}\right) \le \sum_{j=1}^{\mathcal{N}_x} \sum_{k=1}^{\mathcal{N}_z} \Ebig{\exp\left(\frac{\varepsilon_{j,k}^2}{4\bar{C}^2}\right)} \le 2\sqrt{2} \mathcal{N}_x\mathcal{N}_z,
\end{align*}
so that
\begin{align*}
\EEbig{\DD_1}{\sup_{(j,k)\in\hc}\varepsilon_{j,k}^2} \le \EEbig{\DD_1}{\sup_{j\le\mathcal{N}_x, k\le\mathcal{N}_z}\varepsilon_{j,k}^2} \le 4\bar{C}^2 \log 2\sqrt{2} \mathcal{N}_x\mathcal{N}_z.
\end{align*}
Moreover for each $(j,k)\in\hc$, the pair $(\psi_{\theta_{x,j}}, \phi_{\theta_{z,k}})$ constitutes a joint empirical approximator of some $\psi_{\theta_x^\circ} \in\FF_x$ so that
\begin{align*}
\sum_{i=1}^m \left(\phi_{\theta_{z,k}}(z_i) - T\psi_{\theta_{x,j}}(z_i)\right)^2 \le \sum_{i=1}^m \left(\hatE{\psi_{\theta_x^\circ}}(z_i)- T\psi_{\theta_x^\circ}(z_i)\right)^2 + (4\bar{C}+2)m(\delta_x+\delta_z).
\end{align*}
Therefore, \eqref{eqn:re3} is bounded in expectation as
\begin{align*}
&\EEbig{\DD_1}{\sup_{(j,k)\in\hc} \frac{2}{m} \sum_{i=1}^m \left(\phi_{\theta_{z,k}}(z_i) - T\psi_{\theta_{x,j}}(z_i) \right)\xi_{j,i}}\\
&= \EEbig{\DD_1}{\frac{2}{m} \sup_{(j,k)\in\hc} \sqrt{\textstyle\sum_{i=1}^m (\phi_{\theta_{z,k}}(z_i) - T\psi_{\theta_{x,j}}(z_i))^2} \varepsilon_{j,k}} \\
%&\le \sup_{j\le\mathcal{N}_x} \frac{2}{m} \sum_{i=1}^m \left(\phi_{\theta_{z,K_j}}(z_i) - T\psi_{\theta_{x,j}}(z_i)\right)\xi_{j,i} + 4\bar{C}\delta_z\\
&\le \frac{2}{m} \EEbig{\DD_1}{\sup_{(j,k)\in\hc} \varepsilon_{j,k}^2} + \EEbig{\DD_1}{ \sup_{(j,k)\in\hc} \frac{1}{2m}\sum_{i=1}^m \left(\phi_{\theta_{z,k}}(z_i) - T\psi_{\theta_{x,j}}(z_i)\right)^2} \\
&\le \frac{8\bar{C}^2}{m} \log 2\sqrt{2} \mathcal{N}_x\mathcal{N}_z + \EEbig{\DD_1}{ \sup_{\theta_x\in\Theta_x} \frac{1}{2m}\sum_{i=1}^m \left(T\psi_{\theta_x}(z_i) - \hatE{\psi_{\theta_x}}(z_i)\right)^2}\\
&\qquad + (2\bar{C}+1)(\delta_x+\delta_z).
\end{align*}
In particular, the second term is simply half of the supremal Stage 1 error to be bounded.

Furthermore, almost the same argument for \eqref{eqn:re4} with the cover $\cstar$ yields the bound
\begin{align*}
&\EEbig{\DD_1}{\sup_{(j,k)\in\cstar} \frac{2}{m} \sum_{i=1}^m \left(T\psi_{\theta_{x,j}}(z_i) - \phi_{\theta_{z,k}}(z_i)\right) \xi_{j,i}}\\
&\le \frac{8\bar{C}^2}{m} \log 2\sqrt{2} \mathcal{N}_x\mathcal{N}_z + \EEbig{\DD_1}{\sup_{(j,k)\in\cstar} \frac{1}{2m} \sum_{i=1}^m \left(T\psi_{\theta_{x,j}}(z_i) - \phi_{\theta_{z,k}}(z_i)\right)^2},
\end{align*}
where the second term can be bounded by half the sum of \eqref{eqn:re2} and
\begin{align*}
\sup_{(j,k)\in\cstar} \norm{T\psi_{\theta_{x,j}} - \phi_{\theta_{z,k}}}_{L^2(\PP_\mathcal{Z})}^2 \le \sup_{\theta_x\in\Theta_x}\left[\inf_{\theta_z\in\Theta_z} \norm{T\psi_{\theta_x} - \phi_{\theta_z}}_{L^2(\PP_\mathcal{Z})}^2\right] + (4\bar{C}+2)(\delta_x+\delta_z),
\end{align*}
similarly as before (a tighter bound can be obtained with more analysis since $\cstar$ is not data-dependent, but this does not affect the result).

\subsection{Bounding Supremal Deviations \texorpdfstring{\eqref{eqn:re2}, \eqref{eqn:re5}}{}}

We derive the bound for \eqref{eqn:re5} first. Define the auxiliary functions
\begin{equation*}
g_{j,k}(z) := \left(T\psi_{\theta_{x,j}}(z) - \phi_{\theta_{z,k}}(z)\right)^2 - \norm{T\psi_{\theta_{x,j}} - \phi_{\theta_{z,k}}}_{L^2(\PP_\mathcal{Z})}^2
\end{equation*}
and set $\kappa_{j,k} := \norm{T\psi_{\theta_{x,j}} - \phi_{\theta_{z,k}}}_{L^2(\PP_\mathcal{Z})}\vee \kappa_0$ for some $\kappa_0>0$. For each $(j,k)\in\hc$ comprising a joint empirical approximator for some $\psi_{\theta_x^\circ}\in\FF_x$, it holds that
\begin{align*}
\kappa_{j,k}^2 &\le \norm{T\psi_{\theta_{x,j}} - \phi_{\theta_{z,k}}}_{L^2(\PP_\mathcal{Z})}^2 + \kappa_0^2\\
&\le \norm{T\psi_{\theta_x^\circ} - \hatE{\psi_{\theta_x^\circ}}}_{L^2(\PP_\mathcal{Z})}^2 + (4\bar{C}+2)(\delta_x +\delta_z) +\kappa_0^2
\end{align*}
so that
\begin{align}\label{eqn:sd1}
\sup_{(j,k)\in\hc} \kappa_{j,k}^2 \le \sup_{\theta_x\in\Theta_x} \norm{T\psi_{\theta_x} - \hatE{\psi_{\theta_x}}}_{L^2(\PP_\mathcal{Z})}^2 + (4\bar{C}+2)(\delta_x +\delta_z) +\kappa_0^2,
\end{align}
retrieving the supremal Stage 1 error to be bounded. In addition, it holds for all $z$ that
\begin{align*}
\abs{\frac{g_{j,k}(z)}{\kappa_{j,k}}}\le \frac{4\bar{C}^2}{\kappa_0}, \quad \sum_{i=1}^m \Ebig{\left(\frac{g_{j,k}(z)}{\kappa_{j,k}}\right)^2} \le 4\bar{C}^2m.
\end{align*}
Then for the sum over $(z_i)_{i=1}^m$, by Bernstein's inequality we have the tail bound
\begin{align*}
P\left(\abs{\sum_{i=1}^m \frac{g_{j,k}(z_i)}{\kappa_{j,k}}} \ge u\right)\le 2\exp\left(-\frac{u^2}{8\bar{C}^2(m+u/3\kappa_0)}\right)
\end{align*}
for all $u>0$. Hence the random variable
\begin{equation*}
G := \sup_{(j,k)\in\hc} \abs{\sum_{i=1}^m \frac{g_{j,k}(z_i)}{\kappa_{j,k}}}
\end{equation*}
satisfies by a union bound
\begin{align*}
P(G^2\ge u) &\le 2|\hc|\exp\left(-\frac{u}{8\bar{C}^2(m+\sqrt{u}/3\kappa_0)}\right)\\
&\le 2|\hc|\exp\left(-\frac{u}{16\bar{C}^2m}\right) + 2|\hc|\exp\left(-\frac{3\kappa_0\sqrt{u}}{16\bar{C}^2}\right).
\end{align*}
Therefore for a cutoff $u_0>0$, we evaluate
\begin{align*}
\E{G^2} &= \int_0^\infty P(G^2\ge u) \rd u\\
&\le u_0 +\int_{u_0}^\infty 2|\hc|\exp\left(-\frac{u}{16\bar{C}^2m}\right)\rd u + \int_{u_0}^\infty 2|\hc|\exp\left(-\frac{3\kappa_0\sqrt{u}}{16\bar{C}^2}\right) \rd u\\
&= u_0 + 32|\hc| \bar{C}^2 m \exp\left(-\frac{u_0}{16\bar{C}^2m}\right)\\
&\qquad + 4|\hc| \left(\frac{16\bar{C}^2\sqrt{u_0}}{3\kappa_0} + \frac{256\bar{C}^4}{9\kappa_0^2}\right) \exp\left(-\frac{3\kappa_0\sqrt{u_0}}{16\bar{C}^2}\right),
\end{align*}
where we have used the fact that the antiderivative of $\exp(-\sqrt{u})$ is $-2(\sqrt{u}+1)\exp(-\sqrt{u})$. We now choose $\kappa_0,u_0$ such that
\begin{equation*}
\frac{u_0}{16\bar{C}^2m} = \frac{3\kappa_0\sqrt{u_0}}{16\bar{C}^2} = \log|\hc| \quad\Leftrightarrow\quad \kappa_0 = \frac{4\bar{C}}{3}\sqrt{\frac{\log|\hc|}{m}}, \quad u_0 = 16\bar{C}^2 m\log|\hc|,
\end{equation*}
which yields
\begin{align}\label{eqn:sd2}
\E{G^2} \le 16\bar{C}^2 m\log|\hc| + 32\bar{C}^2 m + 4\left(16\bar{C}^2 m + \frac{16\bar{C}^2 m}{\log|\hc|}\right) \le 16\bar{C}^2 m (\log|\hc| +10).
\end{align}
Combining \eqref{eqn:sd1} and \eqref{eqn:sd2} and substituting in the value for $\kappa_0$, it follows that
\begin{align*}
&\EEbig{\DD_1}{\sup_{(j,k)\in\hc} \abs{\frac{1}{m} \sum_{i=1}^m g_{j,k}(z_i)}}\\
&\le \frac{1}{m}\EEbig{\DD_1}{G \sup_{(j,k)\in\hc} \kappa_{j,k}} \le \frac{\E{G^2}}{2m^2} +\frac{1}{2} \EEbig{\DD_1}{\sup_{(j,k)\in\hc} \kappa_{j,k}^2}\\
&\le \frac{80\bar{C}^2(\log|\hc|+9)}{9m} + \frac{1}{2}\EEbig{\DD_1}{\sup_{\theta_x\in\Theta_x} \norm{T\psi_{\theta_x} - \hatE{\psi_{\theta_x}}}_{L^2(\PP_\mathcal{Z})}^2} + (2\bar{C}+1)(\delta_x +\delta_z).
\end{align*}
Furthermore, a similar argument for \eqref{eqn:re2} with the cover $\cstar$ gives the bound
\begin{align*}
&\EEbig{\DD_1}{\sup_{(j,k)\in\cstar} \abs{\frac{1}{m} \sum_{i=1}^m g_{j,k}(z_i)}}\\
&\le \frac{80\bar{C}^2(\log|\cstar|+9)}{9m} + \frac{1}{2}\sup_{\theta_x\in\Theta_x} \left[\inf_{\theta_z\in\Theta_z}  \norm{T\psi_{\theta_x} - \phi_{\theta_{z}}}_{L^2(\PP_\mathcal{Z})}^2\right] + (2\bar{C}+1)(\delta_x +\delta_z).
\end{align*}
\begin{rmk}
The above uniform bounds can also be obtained up to constants via localization and chaining techniques, see e.g. Theorem 14.1 and Corollary 14.3 of \citet{Wainwright19}. For this approach, we control deviations of empirical and population $L^2$-norms uniformly over the class
\begin{equation*}
\FF_{\hc} := T[\FF_x] - \FF_z = \left\{T\psi_{\theta_x} - \phi_{\theta_z} \mid (j,k)\in\hc\right\}.
\end{equation*}
One caveat is that the function class is usually required to be star-shaped, that is $f\in\FF_{\hc}$ should imply $\alpha f\in\FF_{\hc}$ for all $\alpha\in [0,1]$. However $\FF_{\hc}$ is in fact only \emph{nearly} star-shaped; while the output of a DNN in $\FF_x,\FF_z$ can generally be scaled by scaling the parameters of the final layer, the clip operation (when activated) prevents this for certain functions. This can be overcome by slightly extending $\FF_x,\FF_z$ to incorporate a scalable clipping $\mathrm{clip}_{\alpha\ubar{C},\alpha\bar{C}}$.
\end{rmk}

\subsection{Putting Things Together}\label{app:put}

Plugging in the obtained bounds for \eqref{eqn:re3} and \eqref{eqn:re4}, we have that
\begin{align*}
& \EEbig{\DD_1}{\sup_{\theta_x\in\Theta_x} \frac{1}{m} \sum_{i=1}^m \left(T\psi_{\theta_x}(z_i) - \hatE{\psi_{\theta_x}}(z_i)\right)^2} \\
&\le \frac{3}{2}\sup_{\theta_x\in\Theta_x}\left[\inf_{\theta_z\in\Theta_z} \norm{T\psi_{\theta_x} - \phi_{\theta_z}}_{L^2(\PP_\mathcal{Z})}^2\right] + (36\bar{C} +6)(\delta_x +\delta_z)\\
&\qquad + \frac{3}{2}\EEbig{\DD_1}{\sup_{(j,k)\in\cstar} \abs{\frac{1}{m} \sum_{i=1}^m \left(T\psi_{\theta_{x,j}}(z_i) - \phi_{\theta_{z,k}}(z_i)\right)^2 - \norm{T\psi_{\theta_{x,j}} - \phi_{\theta_{z,k}}}_{L^2(\PP_\mathcal{Z})}^2}}\\
&\qquad + \frac{16\bar{C}^2}{m} \log 2\sqrt{2} \mathcal{N}_x\mathcal{N}_z + \EEbig{\DD_1}{ \sup_{\theta_x\in\Theta_x} \frac{1}{2m}\sum_{i=1}^m \left(T\psi_{\theta_x}(z_i) - \hatE{\psi_{\theta_x}}(z_i)\right)^2}
\end{align*}
and so, substituting in the bound for \eqref{eqn:re2} as well,
\begin{align*}
& \EEbig{\DD_1}{\sup_{\theta_x\in\Theta_x} \frac{1}{m} \sum_{i=1}^m \left(T\psi_{\theta_x}(z_i) - \hatE{\psi_{\theta_x}}(z_i)\right)^2} \\
&\le 3\sup_{\theta_x\in\Theta_x}\left[\inf_{\theta_z\in\Theta_z} \norm{T\psi_{\theta_x} - \phi_{\theta_z}}_{L^2(\PP_\mathcal{Z})}^2\right] + \frac{32\bar{C}^2}{m} \log 2\sqrt{2} \mathcal{N}_x\mathcal{N}_z + (72\bar{C} +12)(\delta_x +\delta_z)\\
&\qquad + 3\EEbig{\DD_1}{\sup_{(j,k)\in\cstar} \abs{\frac{1}{m} \sum_{i=1}^m \left(T\psi_{\theta_{x,j}}(z_i) - \phi_{\theta_{z,k}}(z_i)\right)^2 - \norm{T\psi_{\theta_{x,j}} - \phi_{\theta_{z,k}}}_{L^2(\PP_\mathcal{Z})}^2}}\\
&\le \frac{9}{2} \sup_{\theta_x\in\Theta_x}\left[\inf_{\theta_z\in\Theta_z} \norm{T\psi_{\theta_x} - \phi_{\theta_z}}_{L^2(\PP_\mathcal{Z})}^2\right]\\
&\qquad + \frac{176\bar{C}^2}{3m}(\log\mathcal{N}_x +\log\mathcal{N}_z+9) + (78\bar{C} +15)(\delta_x +\delta_z)
\end{align*}
since $|\cstar|\le\mathcal{N}_x\mathcal{N}_z$. Combining with the bound for \eqref{eqn:re5} yields
\begin{align*}
& \EEbig{\DD_1}{\sup_{\theta_x\in\Theta_x} \norm{T\psi_{\theta_x} - \hatE{\psi_{\theta_x}}}_{L^2(\PP_\mathcal{Z})}^2} \\
&\le \EEbig{\DD_1}{\sup_{\theta_x\in\Theta_x} \abs{\frac{1}{m} \sum_{i=1}^m \left(T\psi_{\theta_x}(z_i) - \hatE{\psi_{\theta_x}}(z_i)\right)^2 - \norm{T\psi_{\theta_x} - \hatE{\psi_{\theta_x}}}_{L^2(\PP_\mathcal{Z})}^2}} \\
&\qquad + \EEbig{\DD_1}{\sup_{\theta_x\in\Theta_x}  \frac{1}{m} \sum_{i=1}^m \left(T\psi_{\theta_x}(z_i) - \hatE{\psi_{\theta_x}}(z_i)\right)^2 }\\
&\le \frac{80\bar{C}^2(\log|\hc|+9)}{9m} + \frac{1}{2}\EEbig{\DD_1}{\sup_{\theta_x\in\Theta_x} \norm{T\psi_{\theta_x} - \hatE{\psi_{\theta_x}}}_{L^2(\PP_\mathcal{Z})}^2} + (10\bar{C}+4)(\delta_x +\delta_z)\\
&\qquad + \EEbig{\DD_1}{\sup_{\theta_x\in\Theta_x}  \frac{1}{m} \sum_{i=1}^m \left(T\psi_{\theta_x}(z_i) - \hatE{\psi_{\theta_x}}(z_i)\right)^2 }\\
&\le \frac{9}{2} \sup_{\theta_x\in\Theta_x}\left[\inf_{\theta_z\in\Theta_z} \norm{T\psi_{\theta_x} - \phi_{\theta_z}}_{L^2(\PP_\mathcal{Z})}^2\right]+ \frac{1}{2}\EEbig{\DD_1}{\sup_{\theta_x\in\Theta_x} \norm{T\psi_{\theta_x} - \hatE{\psi_{\theta_x}}}_{L^2(\PP_\mathcal{Z})}^2}\\
&\qquad + \frac{68\bar{C}^2}{m}(\log\mathcal{N}_x +\log\mathcal{N}_z+9) + (88\bar{C} +19)(\delta_x +\delta_z),
\end{align*}
and thus we obtain the upper bound for the Stage 1 supremal error as
\begin{align}
& \EEbig{\DD_1}{\sup_{\theta_x\in\Theta_x} \norm{T\psi_{\theta_x} - \hatE{\psi_{\theta_x}}}_{L^2(\PP_\mathcal{Z})}^2}\nonumber \\
&\le 9\sup_{\theta_x\in\Theta_x}\left[\inf_{\theta_z\in\Theta_z} \norm{T\psi_{\theta_x} - \phi_{\theta_z}}_{L^2(\PP_\mathcal{Z})}^2\right] + C_2\left(\frac{\log\mathcal{N}_x +\log\mathcal{N}_z}{m} +\delta_x +\delta_z\right).\label{eqn:s1final}
\end{align}
for some constant $C_2$ depending only on $\bar{C}$.

Finally combining this with \eqref{eqn:infproj} and \eqref{eqn:stage1sup}, we conclude that the projected $L^2$ error is bounded as
\begin{align}
&\EEbig{\DD_1,\DD_2}{\norm{Tf_{\str} - T\hat{f}_{\str}}_{L^2(\PP_\mathcal{Z})}^2\nonumber}\nonumber\\
&\lesssim \inf_{\theta_x\in\Theta_x}\norm{Tf_{\str} - T\psi_{\theta_x}}_{L^2(\PP_\mathcal{Z})}^2 +\sup_{\theta_x\in\Theta_x}\left[\inf_{\theta_z\in\Theta_z} \norm{T\psi_{\theta_x} - \phi_{\theta_z}}_{L^2(\PP_\mathcal{Z})}^2\right]\nonumber\\
&\qquad +\frac{\log\mathcal{N}_x}{m} +\frac{\log\mathcal{N}_z}{m\wedge n} +\delta_x+\delta_z.\label{eqn:absbound}
\end{align}

\subsection{Error Rates for DNN Classes}\label{app:erroreval}

Thus far, we have shown that the error can be bounded in terms of the (projected) Stage 2 approximation error, supremal Stage 1 approximation error, and covering numbers for the Stage 1 and 2 estimator function classes. We now evaluate each of these quantities for the introduced DNN classes to prove the final result. Concretely, for sufficiently large integers $N_x,N_z$ we set
\begin{align*}
&\FF_x = \FF_{\DNN}(\lceil\log_2 d_x\rceil+1,O(N_x),O(N_x),\poly(N_x)),\\
&\FF_z = \FF_{\DNN}(\lceil\log_2 d_z\rceil+1,O(N_z),O(N_z),\poly(N_z)),
\end{align*}
as specified in Theorem \ref{thm:besovappx}. As a technical note, we must also set the lower clip cutoff $\ubar{C}_z$ of $\FF_z$ to be greater than the higher cutoff $\bar{C}$ of $\FF_x$. This is to ensure that the target of Stage 1, the conditional mean $T\psi_{\theta_x}$ which has sup norm bounded by $\bar{C}$, is contained in the identity region of the clip for $\FF_z$ and thus can be properly learned.

First, the projected Stage 2 approximation error \eqref{eqn:infproj} can be evaluated as follows. Let the $N_x$-term B-spline decomposition of $f_{\str}$ according to Lemma \ref{thm:splinedecomp} be
\begin{align*}
f_{N_x} = \sum_{k=0}^K\sum_{\ell\in I_k} \beta_{k,\ell}\omega_{k,\ell} + \sum_{k=K+1}^{K^*}\sum_{i=1}^{n_k} \beta_{k,\ell_i}\omega_{k,\ell_i}.
\end{align*}
By the adaptive recovery method given in \citet{Dung11}, it holds that $2^{Kd_x}\asymp N_x$. Moreover, if the residual component $g=\Pi_r^{/K}(f_{\str}-f_{N_x})$ along $P_r^{/K}$ satisfies $g\ne 0$, by redefining $f_{N_x}$ as $g+f_{N_x}$ (modifying the coefficients of the B-splines up to resolution $K$ if necessary), we can ensure that $\Pi_r^{/K}(f_{\str}-f_{N_x})=0$, and the approximation error of $f_{\str}$ does not increase due to the Pythagorean theorem. It follows from Assumption \ref{ass:link} that
\begin{equation*}
\norm{T(f_{\str}-f_{N_x})}_{L^2(\PP_\mathcal{Z})} \lesssim 2^{-\gamma_1 K} \norm{f_{\str}-f_{N_x}}_{L^2(\PP_{\XX})} \lesssim N_x^{-\gamma_1/d_x} N_x^{-s/d_x}.
\end{equation*}
Here, we have bounded the $L^2(\PP_{\XX})$-norm by the $L^\infty(\XX)$-norm in the continuous regime $s\ge d_x/p$ and by the $L^2(\XX)$-norm in the discontinuous regime $s< d_x/p$ via Assumption \ref{ass:str}.

On the other hand, by the proof of Theorem \ref{thm:besovappx}, there exists a sigmoid neural network $\check{f}_{N_x}$ such that
\begin{align*}
    \norm{\check{f}_{N_x} - f_{N_x}}_{W_p^r(\XX)} + \norm{\check{f}_{N_x} - f_{N_x}}_{L^\infty(\XX)} \lesssim \poly(N_x)\epsilon
\end{align*}
with weights at most $\poly(\epsilon^{-1})$. Thus choosing $\epsilon$ so that this error is dominated by $N_x^{-(s+\gamma_1)/d_x}$, possibly by increasing the norm bound of $\FF_x$ by a $\poly(N_x)$ factor, we can ensure
\begin{equation*}
\norm{Tf_{\str} - T\check{f}_{N_x}}_{L^2(\PP_\mathcal{Z})} \le \norm{T(f_{\str}-f_{N_x})}_{L^2(\PP_\mathcal{Z})} + \norm{\check{f}_{N_x}-f_{N_x}}_{L^2(\PP_{\XX})} \lesssim N_x^{-(s+\gamma_1)/d_x}
\end{equation*}
for some $\check{f}_{N_x}\in\FF_x$. Since $\norm{\check{f}_{N_x}}_{B_{p,q}^s(\XX)}$ is bounded, this construction is valid even with domain restriction \eqref{eqn:restricted} for a suitable $C_W$.

Next, under Assumption \ref{ass:alternative} or Assumption \ref{ass:smooth} with domain restriction and $C_T = \Theta(C_W)$, it holds for all $\theta_x\in\Theta_x$ that $T\psi_{\theta_x}\in C_T\cdot \UU(B_{p',q'}^{s'}(\mathcal{Z}))$ by replacing $\Theta_x$ by \eqref{eqn:restricted} if necessary. We also have that
\begin{equation*}
\norm{T\psi_{\theta_x}}_{L^\infty(\mathcal{Z})} \le \norm{\psi_{\theta_x}}_{L^\infty(\XX)} \le\bar{C} < \ubar{C}_z.
\end{equation*}
Thus by Theorem \ref{thm:besovappx} (slightly modified to account for the constant factor $C_T$), we are guaranteed the existence of $\check{f}_{N_z}\in\FF_z$ satisfying $\norm{\check{f}_{N_z} - T\psi_{\theta_x}}_{L^\infty(\mathcal{Z})} \lesssim N_z^{-s'/d_z}$, so that we also have
\begin{equation*}
\sup_{\theta_x\in\Theta_x}\left[\inf_{\theta_z\in\Theta_z} \norm{T\psi_{\theta_x} - \phi_{\theta_z}}_{L^2(\PP_\mathcal{Z})}^2\right] \lesssim N_z^{-2s'/d_z}.
\end{equation*}
Furthermore, it follows from Lemma \ref{thm:entropy} that $\mathcal{N}_x \lesssim N_x\log (\delta_x^{-1}N_x)$ and $\mathcal{N}_z \lesssim N_z\log (\delta_z^{-1}N_z)$. Hence by setting $\delta_x \asymp N_x^{-(s+\gamma_1)/d_x}$, $\delta_z\asymp N_z^{-2s'/d_z}$ and substituting in \eqref{eqn:absbound}, it follows that
\begin{align*}
\EEbig{\DD_1,\DD_2}{\norm{Tf_{\str} - T\hat{f}_{\str}}_{L^2(\PP_\mathcal{Z})}^2\nonumber} \lesssim N_x^{-\frac{2s+2\gamma_1}{d_x}} + N_z^{-\frac{2s'}{d_z}} + \frac{N_x\log N_x}{m} +\frac{N_z\log N_z}{m\wedge n},
\end{align*}
and taking $N_x\asymp m^\frac{d_x}{2s+2\gamma_1+d_x}$ and $N_z\asymp (m\wedge n)^\frac{d_z}{2s'+d_z}$, we finally conclude:
\begin{equation*}
\EEbig{\DD_1,\DD_2}{\norm{Tf_{\str} - T\hat{f}_{\str}}_{L^2(\PP_\mathcal{Z})}^2\nonumber} \lesssim m^{-\frac{2s+2\gamma_1}{2s+2\gamma_1+d_x}} \log m + (m\wedge n)^{-\frac{2s'}{2s'+d_z}} \log (m\wedge n).
\end{equation*}
If ReLU DNNs are used instead for $\FF_z$, the approximation error and covering number estimates are replaced by Proposition 1 and Lemma 3 of \citet{Suzuki19}, respectively. In this case, the depth must scale as $\log (m\wedge n)$ and the sparsity also incurs an additional log factor. The resulting rates are the same except that the log factor must be replaced by $\log^3$ (rather than $\log^2$ suggested in the paper).

\section{Proof of Theorem \ref{thm:non}}\label{app:c}

\subsection{Adding Smoothness Regularization}\label{app:addreg}

The Stage 2 objective with a nonnegative regularizer $R:\Theta_x\to\RR_{\ge 0}$ and regularization strength $\lambda>0$ reads
\begin{equation*}
\hat{\theta}_x = \argmin_{\theta_x\in\Theta_x} \frac{1}{n}\sum_{i=1}^n \left(\tilde{y}_i -\hatE{\psi_{\theta_x}}(\tilde{z}_i)\right)^2 + \lambda R(\theta_x),
\end{equation*}
where we take $R$ as in \eqref{eqn:soboreg} later on. We begin by modifying the oracle inequality (Lemma \ref{thm:sh}) to include regularization.

\begin{lemma}\label{thm:shreg}
For $\mathcal{N}_z :=\mathcal{N}(\FF_z,\norm{\cdot}_{L^\infty(\mathcal{Z})}, \delta_z)$, there exists a constant $C_1$ such that for all $\delta_z> 0$ with $\log\mathcal{N}_z>1$ it holds conditional on $\DD_1$,
\begin{align*}
&\EEbig{\DD_2}{\norm{Tf_{\str} - \hatE{\psi_{\hat{\theta}_x}}}_{L^2(\PP_\mathcal{Z})}^2 + \lambda R(\hat{\theta}_x)}\\
&\le 4\inf_{\theta_x\in\Theta_x} \left(\norm{Tf_{\str} - \hatE{\psi_{\theta_x}}}_{L^2(\PP_\mathcal{Z})}^2 + \lambda R(\theta_x)\right) + C_1\left(\frac{\log\mathcal{N}_z}{n} +\delta_z\right).
\end{align*}
\end{lemma}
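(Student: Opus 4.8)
The plan is to follow the proof of Lemma~\ref{thm:sh} step by step, carrying the penalty $\lambda R(\theta_x)$ along unchanged. The key observation is that $R$ is a deterministic function of $\theta_x$ and does not depend on the Stage~2 data $\DD_2$, so it is inert under every concentration argument; the modification is purely bookkeeping.

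First I would record the basic inequality coming from the definition of $\hat{\theta}_x$ as the minimizer of the penalized Stage~2 objective: for every $\theta_x\in\Theta_x$, substituting $\tilde{y}_i = Tf_{\str}(\tilde{z}_i) + \eta_i$ and expanding the squares,
\[
\frac{1}{n}\sum_{i=1}^n\!\left(Tf_{\str}(\tilde{z}_i) - \hatE{\psi_{\hat{\theta}_x}}(\tilde{z}_i)\right)^2 \!+ \lambda R(\hat{\theta}_x) \le \frac{1}{n}\sum_{i=1}^n\!\left(Tf_{\str}(\tilde{z}_i) - \hatE{\psi_{\theta_x}}(\tilde{z}_i)\right)^2 \!+ \lambda R(\theta_x) + \frac{2}{n}\sum_{i=1}^n\eta_i\!\left(\hatE{\psi_{\hat{\theta}_x}}(\tilde{z}_i) - \hatE{\psi_{\theta_x}}(\tilde{z}_i)\right)\!.
\]
Here the penalty terms have already separated cleanly: $\lambda R(\hat{\theta}_x)$ sits on the left, which is exactly what we want to bound, while $\lambda R(\theta_x)$ sits on the right and will enter the infimum.

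Second, I would bound the noise cross-term and convert the two empirical $L^2$ averages to their population counterparts exactly as in Lemma~\ref{thm:sh} (i.e., the argument of Theorem~2.6 of \citet{Hayakawa20}). Since $\hatE{\psi_{\hat{\theta}_x}}$ and $\hatE{\psi_{\theta_x}}$ are both of the form $\phi_{\hat{\theta}_z}\in\FF_z$, one covers $\FF_z$ at scale $\delta_z$ with $\mathcal{N}_z$ elements, uses $\E{\eta\mid Z}=0$ and the $\sigma_1$-subgaussianity of $\eta\mid Z$ together with the subgaussian maximal inequality over the cover, and applies Young's inequality, obtaining after taking $\EE{\DD_2}{\cdot}$ a bound of the form $\tfrac14\EE{\DD_2}{\norm{Tf_{\str} - \hatE{\psi_{\hat{\theta}_x}}}_{L^2(\PP_\mathcal{Z})}^2} + \tfrac14\norm{Tf_{\str} - \hatE{\psi_{\theta_x}}}_{L^2(\PP_\mathcal{Z})}^2 + C_1'(\log\mathcal{N}_z/n + \delta_z)$ for the cross-term, and likewise for the two empirical averages around $\norm{Tf_{\str} - \hatE{\psi_{\cdot}}}_{L^2(\PP_\mathcal{Z})}^2$ with error of the same order, uniformly over the cover. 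None of these steps sees $R$.

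Finally, taking $\EE{\DD_2}{\cdot}$ throughout, collecting $\EE{\DD_2}{\norm{Tf_{\str} - \hatE{\psi_{\hat{\theta}_x}}}_{L^2(\PP_\mathcal{Z})}^2 + \lambda R(\hat{\theta}_x)}$ on the left (the nonnegative penalty comes along because it occurs additively with a $+1$ coefficient at every step), absorbing the $\tfrac14$-multiples of $\norm{Tf_{\str} - \hatE{\psi_{\hat{\theta}_x}}}_{L^2(\PP_\mathcal{Z})}^2$ into the left, and then taking the infimum over $\theta_x\in\Theta_x$ on the right, yields the claimed inequality with a suitable constant $C_1$ (any constant larger than $1$ in place of $4$ would work, as noted after Lemma~\ref{thm:sh}). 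The only point meriting care — and not a genuine obstacle — is to track the Young's-inequality coefficients so that $\lambda R(\theta_x)$ on the right ends up with the same multiplicative constant as $\norm{Tf_{\str} - \hatE{\psi_{\theta_x}}}_{L^2(\PP_\mathcal{Z})}^2$; since $R\ge 0$ one may be lossy here, so this is automatic.
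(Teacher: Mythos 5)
Your proposal is correct and follows essentially the same route as the paper: the paper likewise starts from the penalized empirical-risk-minimizer inequality, observes that $\lambda R$ is nonnegative and independent of $\DD_2$ so it passes unchanged through the covering/subgaussian-maximal/Young's-inequality steps of Lemma \ref{thm:sh}, and then absorbs the half-multiple of $\hp(\hat{\theta}_x)$ before taking the infimum. The only difference is presentational — the paper packages the penalty into the definitions of $\Pi$ and $\hp$ rather than carrying it explicitly through the basic inequality.
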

\begin{proof}
For the quantities
\begin{align*}
\Pi(\theta_x) &= \norm{Tf_{\str} - \hatE{\psi_{\theta_x}}}_{L^2(\PP_\mathcal{Z})}^2 + \lambda R(\theta_x),\\
\hp(\theta_x) &= \EEbig{\DD_2}{\frac{1}{n}\sum_{i=1}^n\left(Tf_{\str} (\tilde{z}_i) -\hatE{\psi_{\theta_x}} (\tilde{z}_i)\right)^2} + \lambda R(\theta_x),
\end{align*}
it still follows that
\begin{align*}
\abs{\hp(\hat{\theta}_x) - \Pi(\hat{\theta}_x)} &\le \frac{1}{2} \norm{Tf_{\str} - \hatE{\psi_{\hat{\theta}_x}}}_{L^2(\PP_\mathcal{Z})}^2 +C_1'\left(\frac{\log\mathcal{N}_z}{n} +\delta_z\right)\\
&\le\frac{1}{2}\Pi(\hat{\theta}_x) +C_1'\left(\frac{\log\mathcal{N}_z}{n} +\delta_z\right)
\end{align*}
since $R$ is nonnegative. Moreover for all $\theta_x\in\Theta_x$, from the inequality
\begin{align*}
\frac{1}{n}\sum_{i=1}^n\left(\tilde{y}_i -\hatE{\psi_{\hat{\theta}_x}} (\tilde{z}_i)\right)^2 + \lambda R(\hat{\theta}_x) \le \frac{1}{n}\sum_{i=1}^n\left(\tilde{y}_i -\hatE{\psi_{\theta_x}} (\tilde{z}_i)\right)^2 + \lambda R(\theta_x),
\end{align*}
we have
\begin{align*}
\hp(\hat{\theta}_x) &\le \Pi(\theta_x) + \EEbig{\DD_2}{\frac{2}{n}\sum_{i=1}^n \left(\tilde{y}_i - Tf_{\str}(\tilde{z}_i)\right) \left(\hatE{\psi_{\hat{\theta}_x}} (\tilde{z}_i) - \hatE{\psi_{\theta_x}} (\tilde{z}_i)\right)}\\
&\le \Pi(\theta_x) +\EEbig{\DD_2}{\frac{1}{2n}\sum_{i=1}^n\left(Tf_{\str} (\tilde{z}_i) -\hatE{\psi_{\theta_x}} (\tilde{z}_i)\right)^2}+ C_1''\left(\frac{\log \mathcal{N}_z}{n} +\delta_z\right)\\
&\le \Pi(\theta_x) +\frac{1}{2} \hp(\hat{\theta}_x) + C_1''\left(\frac{\log \mathcal{N}_z}{n} +\delta_z\right).
\end{align*}
Combining the two inequalities concludes the statement.
\end{proof}
We now complete the proof of Theorem \ref{thm:main} with regularization. By inserting additional $R$ terms, the projected error can be bounded similarly as in Section \ref{app:b1} as
\begin{align*}
&\EEbig{\DD_2}{\norm{Tf_{\str} - T\hat{f}_{\str}}_{L^2(\PP_\mathcal{Z})}^2 + \lambda R(\hat{\theta}_x)}\\
&\le 2\EEbig{\DD_2}{\norm{Tf_{\str} - \hatE{\psi_{\hat{\theta}_x}}}_{L^2(\PP_\mathcal{Z})}^2 + \lambda R(\hat{\theta}_x)} + 2\EEbig{\DD_2}{\norm{\hatE{\psi_{\hat{\theta}_x}} - T\hat{f}_{\str}}_{L^2(\PP_\mathcal{Z})}^2}\\
&\le 8\inf_{\theta_x\in\Theta_x} \left(\norm{Tf_{\str} - \hatE{\psi_{\theta_x}}}_{L^2(\PP_\mathcal{Z})}^2 + \lambda R(\theta_x)\right) + 2C_1\left(\frac{\log\mathcal{N}_z}{n} +\delta_z\right)\\
&\qquad +2\EEbig{\DD_2}{\norm{\hatE{\psi_{\hat{\theta}_x}} - T\hat{f}_{\str}}_{L^2(\PP_\mathcal{Z})}^2}\\
&\le 16 \norm{Tf_{\str} - T\psi_{\theta_x^*}}_{L^2(\PP_\mathcal{Z})}^2 + 16\norm{T\psi_{\theta_x^*} - \hatE{\psi_{\theta_x^*}}}_{L^2(\PP_\mathcal{Z})}^2 +8 \lambda R(\theta_x^*) \\
&\qquad + 2\norm{T\psi_{\hat{\theta}_x} - \hatE{\psi_{\hat{\theta}_x}}}_{L^2(\PP_\mathcal{Z})}^2 +2C_1\left(\frac{\log\mathcal{N}_z}{n} +\delta_z\right).
\end{align*}
Note that we have not reduced the Stage 1 error for $\hat{\theta}_x$ and $\theta_x^*$ to the supremum over $\Theta_x$. Indeed, we may retrace the arguments in Section \ref{app:dynamic} through \ref{app:put} without reducing any of the terms to the corresponding supremum over $\hc,\cstar$ or $\Theta_x$ but retaining their specific value (e.g. the specific joint empirical or population approximators) for $\hat{\theta}_x, \theta_x^*$ until the final step. Then we see that the Stage 1 error bound \eqref{eqn:s1final} also holds with the supremal approximation error replaced by the pointwise error for the estimate,
\begin{align*}
& \Ebig{\norm{T\psi_{\hat{\theta}_x} - \hatE{\psi_{\hat{\theta}_x}}}_{L^2(\PP_\mathcal{Z})}^2} \\
&\le 9\Ebig{\inf_{\theta_z\in\Theta_z} \norm{T\psi_{\hat{\theta}_x} - \phi_{\theta_z}}_{L^2(\PP_\mathcal{Z})}^2} + C_2\left(\frac{\log\mathcal{N}_x +\log\mathcal{N}_z}{m} +\delta_x +\delta_z\right),
\end{align*}
and similarly for $\theta_x^*$. Moreover taking $\FF_x,\FF_z$ as in Section \ref{app:put} and applying Theorem \ref{thm:besovappx} as before, the Stage 1 approximation error can be bounded as
\begin{equation*}
\inf_{\theta_z\in\Theta_z} \norm{T\psi_{\theta_x} - \phi_{\theta_z}}_{L^2(\PP_\mathcal{Z})} \lesssim N_z^{-s'/d_z}\norm{T\psi_{\theta_x}}_{B_{p',q'}^{s'}(\mathcal{Z})} \lesssim N_z^{-s'/d_z}\norm{\psi_{\theta_x}}_{B_{p,q}^s(\XX)}
\end{equation*}
for both $\hat{\theta}_x, \theta_x^*$. Plugging this and \eqref{eqn:soboreg} into the above, we obtain that
\begin{align*}
&\Ebig{\norm{Tf_{\str} - T\hat{f}_{\str}}_{L^2(\PP_\mathcal{Z})}^2 + \lambda R(\hat{\theta}_x)}\\
&\lesssim \norm{Tf_{\str} - T\psi_{\theta_x^*}}_{L^2(\PP_\mathcal{Z})}^2 + \lambda R(\theta_x^*) + N_z^{-2s'/d_z} \left(\norm{\psi_{\theta_x^*}}_{B_{p,q}^s(\XX)}^2 + \Ebig{\norm{\psi_{\hat{\theta}_x}}_{B_{p,q}^s(\XX)}^2}\right)\\
&\qquad + \frac{\log\mathcal{N}_x}{m} + \frac{\log\mathcal{N}_x}{m\wedge n} +\delta_x +\delta_z.
\end{align*}
Again by Theorem \ref{thm:besovappx}, there exists $\theta_x^*\in\Theta_x$ such that $\norm{Tf_{\str} - T\psi_{\theta_x^*}}_{L^2(\PP_\mathcal{Z})} \lesssim N_x^{-(s+\gamma_1)/d_x}$ and $\norm{\psi_{\theta_x^*}}_{B_{p,q}^s(\XX)}$ is bounded, so that $R(\theta_x^*)$ is also bounded. Hence taking $N_x\asymp m^\frac{d_x}{2s+2\gamma_1+d_x}$ and $N_z\asymp (m\wedge n)^\frac{d_z}{2s'+d_z}$ as before, it holds that
\begin{align*}
&\Ebig{\norm{Tf_{\str} - T\hat{f}_{\str}}_{L^2(\PP_\mathcal{Z})}^2 + \lambda R(\hat{\theta}_x)}\\
&\lesssim N_x^{-\frac{2s+2\gamma_1}{d_x}} + \lambda + N_z^{-\frac{2s'}{d_z}} \Ebig{\norm{\psi_{\hat{\theta}_x}}_{B_{p,q}^s(\XX)}^2} + N_z^{-\frac{2s'}{d_z}} + \frac{N_x\log N_x}{m} +\frac{N_z\log N_z}{m\wedge n}\\
&\lesssim m^{-\frac{2s+2\gamma_1}{2s+2\gamma_1+d_x}} \log m + \lambda + (m\wedge n)^{-\frac{2s'}{2s'+d_z}} \left(\Ebig{\norm{\psi_{\hat{\theta}_x}}_{B_{p,q}^s(\XX)}^2} + \log (m\wedge n)\right).
\end{align*}
By further setting
\begin{equation}\label{eqn:lambda}
\lambda\asymp m^{-\frac{2s+2\gamma_1}{2s+2\gamma_1+d_x}}\log m + (m\wedge n)^{-\frac{2s'}{2s'+d_z}} \log (m\wedge n),
\end{equation}
we can guarantee that
\begin{align*}
&\Ebig{\norm{Tf_{\str} - T\hat{f}_{\str}}_{L^2(\PP_\mathcal{Z})}^2 + \lambda R(\hat{\theta}_x)}\lesssim \lambda \left(\Ebig{|\psi_{\hat{\theta}_x}|_{B_{p,q}^s(\XX)}^2} +1\right).
\end{align*}
In particular, since ${\bar{q}}>2$, isolating the regularizer yields by Jensen's inequality
\begin{align*}
\E{R(\hat{\theta}_x)} \lesssim \Ebig{|\psi_{\hat{\theta}_x}|_{B_{p,q}^s(\XX)}^2} +1 \le \E{R(\hat{\theta}_x)}^{2/{\bar{q}}} +1
\end{align*}
for both choices in \eqref{eqn:soboreg}, and hence $\E{R(\hat{\theta}_x)}$ must be bounded above. From this and the preceding inequality, we conclude that $\Ebig{\norm{Tf_{\str} - T\hat{f}_{\str}}_{L^2(\PP_\mathcal{Z})}^2}\lesssim\lambda$. \qed

\subsection{Obtaining Non-projected Rates}\label{app:nonrates}

For a sufficiently large threshold $N$, let $f_N$ be the $N$-term B-spline approximation of $f_{\str}$ in Lemma \ref{thm:splinedecomp} such that $\norm{f_{\str}-f_N}_{L^2(\PP_{\XX})}\lesssim N^{-s/d_x}$. Again, we have bounded the $L^2(\PP_{\XX})$-norm by the $L^\infty(\XX)$-norm if $s\ge d_x/p$ and by the $L^2(\XX)$-norm if $s< d_x/p$. Since $p\ge 2$, it suffices to take $K^*=K$ so that $f_N\in P_r^{/K}$ and $2^{Kd_x}\asymp N$. Similarly, let $\hat{f}_N$ be the $N$-term approximation of $\hat{f}_{\str}$, for which it holds that $\norm{\hat{f}_{\str}-\hat{f}_N}_{L^2(\PP_{\XX})}\lesssim N^{-s/d_x} \norm{\hat{f}_{\str}}_{B_{p,q}^s(\XX)}$. Then we have that
\begin{align*}
&\norm{\hat{f}_{\str} - f_{\str}}_{L^2(\PP_{\XX})}\\
&\lesssim \norm{\hat{f}_{\str} - \hat{f}_N}_{L^2(\PP_{\XX})} + 2^{\gamma_0 K} \norm{T\hat{f}_N - Tf_N}_{L^2(\PP_\mathcal{Z})} + \norm{f_{\str} - f_N}_{L^2(\PP_{\XX})}\\
&\le 2^{\gamma_0 K} \left(\norm{T\hat{f}_{\str} - T\hat{f}_N}_{L^2(\PP_\mathcal{Z})} + \norm{T\hat{f}_{\str} - Tf_{\str}}_{L^2(\PP_\mathcal{Z})} + \norm{Tf_{\str} - Tf_N}_{L^2(\PP_\mathcal{Z})} \right)\\
&\qquad + \norm{\hat{f}_{\str} 
- \hat{f}_N}_{L^2(\PP_{\XX})}+\norm{f_{\str} - f_N}_{L^2(\PP_{\XX})}\\
&\lesssim \left(2^{(\gamma_0-\gamma_1)K}+1\right) \left(\norm{\hat{f}_{\str} 
- \hat{f}_N}_{L^2(\PP_{\XX})} + \norm{f_{\str} - f_N}_{L^2(\PP_{\XX})} \right)+ 2^{\gamma_0 K} \norm{T\hat{f}_{\str} - Tf_{\str}}_{L^2(\PP_\mathcal{Z})} \\
&\lesssim N^{-(s-\gamma_0+\gamma_1)/d_x} \left(\norm{\hat{f}_{\str}}_{B_{p,q}^s(\XX)}+1 \right) + N^{\gamma_0/d_x} \norm{T\hat{f}_{\str} - Tf_{\str}}_{L^2(\PP_\mathcal{Z})}
\end{align*}
by applying Assumptions \ref{ass:link}, \ref{ass:reverse}. Furthermore, $\norm{\hat{f}_{\str}}_{B_{p,q}^s(\XX)}^2$ is bounded above with domain restriction, or bounded in expectation with regularization via the argument in the previous section by taking $\lambda$ as in \eqref{eqn:lambda}. Squaring both sides and taking expectations, it follows that
\begin{align*}
\EEbig{\DD_1,\DD_2}{\norm{\hat{f}_{\str} - f_{\str}}_{L^2(\PP_{\XX})}^2}\lesssim N^{-\frac{2(s-\gamma_0+\gamma_1)}{d_x}} + N^\frac{2\gamma_0}{d_x}\lambda.
\end{align*}
Finally, setting $N\asymp \lambda^{-\frac{d_x}{2s+2\gamma_1}}$ yields the desired rate. \qed

\paragraph{Proof of Corollary \ref{thm:corsep}.} When $0<p<2$, the approximations $f_N,\hat{f}_N$ are adaptively constructed from B-splines up to resolution $K^* = \lceil C^* K\rceil$; nonetheless, the total number of elements used is bounded by $N\asymp 2^{Kd_x}$, and we can also ensure compatibility with the forward link condition $\Pi_r^{/K}(f_{\str}-f_N) = 0$ as before. Hence the proof above can be repeated to obtain the same rate by applying the extended reverse link condition to the difference $\hat{f}_N - f_N$ which is comprised of at most $2N$ B-splines.

Furthermore, the resulting upper bound \eqref{eqn:nonup} when $p<2$, $\Delta>0$ is strictly faster than the linear lower bound proved in Theorem \ref{thm:lineariv} if (in the case of $\gamma_0=\gamma_1$)
\begin{align*}
\frac{2s'}{2s'+d_z}\frac{s}{s+\gamma_0} > \frac{2(s-\Delta)}{2(s-\Delta)+2\gamma_1+d_x}
\end{align*}
which is equivalent to
\begin{align*}
\frac{s'}{d_z} > \frac{(s+\gamma_0)(s-\Delta)}{sd_x+2\Delta\gamma_0} = \frac{(s-\Delta)d_x}{(s-\Delta)d_x+ \Delta(2\gamma_0+d_x)} \frac{s+\gamma_0}{d_x}.
\end{align*}
In the same manner, a separation can be obtained even if $\gamma_0\neq \gamma_1$; we omit the details for clarity of presentation.
\qed

\section{Proofs of Minimax Lower Bounds}\label{app:mlb}

\subsection{Proof of Propositions \texorpdfstring{\ref{thm:lowerproj}, \ref{thm:lowerfull}}{}}\label{app:lower}

Recall that the NPIR model corresponding to the NPIV model is given as
\begin{equation*}
Y=Tf_{\str}(Z) + \eta, \quad \eta =f_{\str}(X) - Tf_{\str}(Z) + \xi, \quad \E{\eta|Z}=0.
\end{equation*}
It can be shown that NPIV is at least as difficult as NPIR (where the operator $T$ is assumed to be known) in the minimax sense: an estimator for NPIV can be utilized to solve NPIR with the same expected risk by generating the corresponding treatments from the conditional distribution of $X$ given data $Z=z_i$ \citep{Chen11}. This is true for any $m$, and thus yields a lower bound in $n$ valid even when $m\to\infty$. The goal now is to lower bound the minimax risk over the Besov space $\UU(B_{p,q}^s(\XX))$ for the non-projected case or $T[\UU(B_{p,q}^s(\XX))]$ for the projected case, when only samples $\{(\tilde{z}_i,\tilde{y}_i)\}_{i=1}^n$ from the indirect model are available. We obtain this via a modification of the Yang-Barron method \citep{Yang99}.

Let us fix $\PP_{\XX}$ to be the uniform distribution over $\XX$. First note that $\supp\omega_{k,\ell}$ covers $r+1$ dyadic cubes with side length $2^{-k}$ in each dimension. By e.g. only considering $\ell\in I_k$ such that all components are multiples of $r+1$, we can construct a subset $J_k\subset I_k$ such that $\supp\omega_{k,\ell}$ are contained in $\XX$ and pairwise disjoint for all $\ell\in J_k$ and
\begin{equation*}
|J_k| \asymp \frac{|I_k|}{(r+1)^{d_x}} \asymp 2^{kd_x}.
\end{equation*}
Consider the best approximation $\pi_{1,0}$ of $\omega_{1,0}$ w.r.t. the $L^2$-norm on $\supp\omega_{1,0}$ in the span of the B-spline $\omega_{0,0}$ and its integer translates. Note that $P_r^{/k-1}$ is equal to the span of all B-splines with resolution exactly $k-1$. For each $\omega_{k,\ell}$ with $\ell\in J_k$, its best approximation in $P_r^{/k-1}$ will be the correspondingly scaled and translated version $\pi_{k,\ell}$ of $\pi_{1,0}$, so that
\begin{equation}\label{eqn:best}
\norm{\omega_{k,\ell} -\Pi_r^{/k-1}\omega_{k,\ell}}_{L^2(\XX)}= \norm{\omega_{k,\ell} -\pi_{k,\ell}}_{L^2(\XX)}= 2^{-kd_x/2} \norm{\omega_{1,0} - \pi_{1,0}}_{2} \gtrsim 2^{-kd_x/2}.
\end{equation}

Now set $f_0 =\tilde{f}_0 \equiv 0$ and define the functions
\begin{equation*}
f_v = 2^{-ks}\epsilon\sum_{\ell\in J_k} \beta_{v,\ell}\omega_{k,\ell}, \quad v=1,\cdots, 2^{|J_k|}
\end{equation*}
and
\begin{equation*}
\tilde{f}_v = f_v-\Pi_r^{/k-1}f_v = 2^{-ks}\epsilon\sum_{\ell\in J_k} \beta_{v,\ell}(\omega_{k,\ell}-\pi_{k,\ell}),
\end{equation*}
where $\epsilon$ is a suitably small positive number and $\beta_v=(\beta_{v,\ell})_{\ell\in J_k}$ is an enumeration of the vertices of the hypercube $\{1,-1\}^{|J_k|}$. It follows from the sequence norm equivalence \citep[Theorem 5.1]{Devore88} that
\begin{equation*}
\norm{f_v}_{B_{p,q}^s(\XX)} \asymp 2^{k(s-{d_x}/p)} \Bigg(\sum_{\ell\in J_k} |2^{-ks}\epsilon\beta_{v,\ell}|^p\Bigg)^{1/p} = 2^{-kd_x/p}|J_k|^{1/p}\epsilon \asymp\epsilon.
\end{equation*}
Also, the coefficients of $\pi_{k,\ell}$ (a linear combination of B-splines $\omega_{k-1,\ell'}$ at resolution $k-1$) for each $\ell\in J_k$ are different translates of the fixed coefficient sequence for $\pi_{1,0}$. Denoting its $\ell^1$-norm by $A$, it follows that the coefficient of each B-spline $\omega_{k-1,\ell'}$ in the sum
\begin{equation*}
\Pi_r^{/k-1}f_v = 2^{-ks}\epsilon\sum_{\ell\in J_k} \beta_{v,\ell}\pi_{k,\ell}
\end{equation*}
is also uniformly bounded by $2^{-ks}\epsilon A$, and so $\Pi_r^{/k-1}\norm{f_v}_{B_{p,q}^s(\XX)}\lesssim\epsilon$ by a similar computation at resolution $k-1$. Hence
\begin{equation*}
\norm{\tilde{f}_v}_{B_{p,q}^s(\XX)}\lesssim \norm{f_v}_{B_{p,q}^s(\XX)} + \norm{\Pi_r^{k-1}f_v}_{B_{p,q}^s(\XX)}\lesssim \epsilon,
\end{equation*}
so we can ensure that each $\tilde{f}_v$ is contained in the target class $\UU(B_{p,q}^s(\XX))$ by choosing $\epsilon$ to be suitably small.
\begin{comment}
(If instead $\PP_{\XX}$ is bounded below and $p\le 2$, denote $\Pi_r^{/k-1}f_v  =\sum_{\ell'\in I_{k-1}} a_{\ell'}\omega_{k-1,\ell'}$ as a linear combination of B-splines at precisely resolution $k-1$. It then follows from Lemma 4.2 of \citet[Lemma 4.2]{Devore88} and Jensen's inequality that
\begin{equation*}
\left(2^{-kd_x}\sum_{\ell'\in I_{k-1}} |a_{\ell'}|^p \right)^{1/p} \lesssim \norm{\Pi_r^{/k-1}f_v}_{L^p(\XX)} \le \frac{1}{c_\mu} \norm{\Pi_r^{/k-1}f_v}_{L^2(\PP_{\XX})} \le \frac{1}{c_\mu}\norm{f_v}_{L^2(\PP_{\XX})} \lesssim 2^{-ks}\epsilon,
\end{equation*}
and so we still obtain $\norm{\Pi_r^{/k-1} f_v}_{B_{p,q}^s(\XX)}\lesssim\epsilon$. It is also straightforward to check that $\norm{\omega_{k,\ell} -\pi_{k,\ell}}_{L^2(\PP_{\XX})} \gtrsim \norm{\omega_{k,\ell} -\pi_{k,\ell}}_{L^2(\XX)}\gtrsim 2^{-kd_x/2}$ so that \eqref{eqn:best} still holds.)
\end{comment}
Furthermore, by the Gilbert-Varshamov bound, there exists a well-separated subset $V_k$ of the index set $\{1,\cdots, 2^{|J_k|}\}$ with $\log|V_k|\asymp |J_k|\asymp 2^{kd_x}$ such that
\begin{equation*}
\frac{1}{2^p}\sum_{\ell\in J_k} |\beta_{v,\ell} - \beta_{v',\ell}|^p \gtrsim |J_k|, \quad v\neq v'\in V_k,
\end{equation*}
which guarantees the separation
\begin{align*}
\norm{\tilde{f}_v-\tilde{f}_{v'}}_{L^2(\XX)}^2 &\ge 2^{-2ks}\epsilon^2\sum_{\ell\in J_k: \beta_{v,\ell}\neq\beta_{v',\ell}} 4\norm{\omega_{k,\ell}-\pi_{k,\ell}}_{L^2(\XX)}^2\\
&\gtrsim 2^{-2ks}\epsilon^2 \cdot |J_k|2^{-kd_x} \asymp 2^{-2ks}\epsilon^2
\end{align*}
by \eqref{eqn:best}. The KL divergence between the sample distributions $(\tilde{y}_i)_{i=1}^n$ from the NPIR model with $f_{\str}=\tilde{f}_0,\tilde{f}_v$ is then bounded as
\begin{align*}
\KL{P_{\tilde{f}_0}}{P_{\tilde{f}_v}} &= \EEbig{(\tilde{z}_i)_{i=1}^n}{\KL{P_{\tilde{f}_0}|(\tilde{z}_i)_{i=1}^n}{P_{\tilde{f}_v}|(\tilde{z}_i)_{i=1}^n}}\\
&\le \sum_{i=1}^n \frac{\norm{T\tilde{f}_v}_{L^2(\PP_\mathcal{Z})}^2}{2\sigma_0^2}\\
&\lesssim 2^{-2\gamma_1 k}n \frac{\norm{\tilde{f}_v}_{L^2(\XX)}^2}{2\sigma_0^2}\\
&\lesssim 2^{-2k(s+\gamma_1)} n
\end{align*}
due to Assumption \ref{ass:noise} regarding KL divergence of the noise, and the link condition. Here we have utilized the near-sparsity of $\omega_{k,\ell}$: extending the definition of B-splines to all locations $\ell\in\ZZ^{d_x}$, it is easy to see that they form a partition of unity, $\sum_{\ell\in\ZZ^{d_x}} \omega_{k,\ell} \equiv 1$ and so
\begin{equation*}
\norm{\tilde{f}_v}_{L^2(\XX)}\le \norm{f_v}_{L^2(\XX)} \le 2^{-ks}\epsilon\Norm{\sum_{\ell\in I_k}\beta_{v,\ell}\omega_{k,\ell}}_{L^2({\XX})} \le 2^{-ks}\epsilon \Norm{\sum_{\ell\in I_k}\beta_{v,\ell}\omega_{k,\ell}}_{L^\infty(\XX)} \le 2^{-ks}\epsilon.
\end{equation*}
Thus the inequality
\begin{equation*}
\frac{1}{|V_k|} \sum_{v\in V_k} \KL{P_{f_0}}{P_{f_v}} \lesssim \log |V_k| \asymp 2^{kd_x}
\end{equation*}
is satisfied by scaling the resolution as $2^{(2s+2\gamma_1 +d_x)k}\asymp n$. Finally, applying the Yang-Barron method proves that the $L^2(\XX)$ minimax rate is lower bounded as
\begin{equation*}
\inf_{\hat{f}:\mathrm{NPIR}} \sup_{f_{\str}\in \UU(B_{p,q}^s(\XX))} \E{\norm{\hat{f}-f_{\str}}_{L^2(\XX)}^2} \gtrsim 2^{-2ks}\epsilon^2 \asymp n^{-\frac{2s}{2s+2\gamma_1+d_x}},
\end{equation*}
proving Proposition \ref{thm:lowerfull}. We can check that the ordinary rate $n^{-\frac{2s}{2s+d_x}}$ is retrieved when $T=\mathrm{id}_{\XX}$ and $\gamma_1=0$, while the rate becomes much worse if $T$ decays exponentially. %The same rate holds in the $L^2(\PP_{\XX})$-norm under Assumption \ref{ass:density}.

For Proposition \ref{thm:lowerproj}, we can also derive the projected lower bound in the same manner by noting that for any estimator $\hat{f}$, $T\hat{f}$ is also a valid estimator for the projected target $Tf_{\str}$ since $T$ is assumed to be known in the NPIR setting. Repeating the same construction given above, since $\tilde{f}_v$ is contained in $P_r^{/k}$, the separation in the projected MSE now becomes
\begin{align*}
\norm{T\tilde{f}_v-T\tilde{f}_{v'}}_{L^2(\PP_{\mathcal{Z}})}^2 \gtrsim 2^{-2\gamma_0 k}\norm{\tilde{f}_v-\tilde{f}_{v'}}_{L^2(\PP_{\XX})}^2 \gtrsim 2^{-(2s+2\gamma_0)k}\epsilon^2
\end{align*}
due to the reverse link condition. Therefore we conclude that
\begin{equation*}
\pushQED{\qed} 
\inf_{\hat{f}:\mathrm{NPIR}} \sup_{f_{\str}\in \UU(B_{p,q}^s(\XX))} \E{\norm{T\hat{f}-Tf_{\str}}_{L^2(\PP_{\mathcal{Z}})}^2} \gtrsim n^{-\frac{2s+2\gamma_0}{2s+2\gamma_1+d_x}}.\qedhere
\popQED
\end{equation*}

\subsection{Proof of Lemma \texorpdfstring{\ref{thm:sprime}}{}}\label{app:sizecomp}

%In this subsection, we show that the inclusion condition of Assumption \ref{ass:smooth} together with the reverse link condition immediately implies the upper bound \eqref{eqn:sprime} for the projected smoothness $s'$ by a simple size comparison, allowing us to relate the Stage 1 risk back to the overall rate.

The metric entropy of Besov spaces is classical:
\begin{thm}[\citet{Nickl15}, Theorem 4.3.36]
If $s>d(1/p-1/2)_+$, then the Besov norm unit ball $\mathcal{B} = \UU(B_{p,q}^s([0,1]^d))$ is relatively compact in $L^2([0,1]^d)$ and
\begin{equation*}
\log\mathcal{N}(\mathcal{B}, L^2([0,1]^d),\delta) \asymp \left(\frac{1}{\delta}\right)^\frac{d}{s}, \quad\forall\delta>0.
\end{equation*}
\end{thm}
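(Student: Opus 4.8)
The plan is to treat this as the classical Besov-ball metric-entropy estimate and follow the standard route via sequence spaces.

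\textbf{Reduction.} First I would pass to a boundary-adapted orthonormal wavelet basis of $L^2([0,1]^d)$ with sufficiently many vanishing moments (equivalently one may use the hierarchical B-spline system, whose coefficient quasinorm $\norm{\cdot}_{b^s_{p,q}}$ was recalled in Appendix \ref{app:a3}; the norms are equivalent, cf. \citet{Devore88}). In this basis $\norm{f}_{L^2}$ is the $\ell_2$-norm of the coefficient array $(c_{k,\ell})_{k\ge0,\,\ell\in I_k}$ with $\#I_k\asymp 2^{kd}$, and $\norm{f}_{B^s_{p,q}}$ is equivalent to $\big(\sum_{k\ge0}[\,2^{k(s+d/2-d/p)}\norm{c_{k,\cdot}}_{\ell_p}\,]^q\big)^{1/q}$. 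So it suffices to estimate the $\ell_2$-covering number of the unit ball $\mathcal{B}$ of this weighted sequence space; the hypothesis $s>d(1/p-1/2)_+$ is exactly the sharp exponent making $\mathcal{B}$ relatively compact in $\ell_2$ (it is the threshold for the sequence embedding $b^s_{p,q}\hookrightarrow\ell_2$).

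\textbf{Lower bound.} I would fix a single resolution $k^\ast$ with $2^{k^\ast}\asymp\delta^{-1/s}$, so $M:=\#I_{k^\ast}\asymp\delta^{-d/s}$. The slice of $\mathcal{B}$ supported on level $k^\ast$ is exactly an $\ell_p$-ball of radius $\rho\asymp 2^{-k^\ast(s+d/2-d/p)}$ in $\RR^M$. Taking coefficients $a\,v$ with $v\in\{-1,+1\}^M$ and $a\asymp\rho M^{-1/p}\asymp 2^{-k^\ast(s+d/2)}$ keeps these inside $\mathcal{B}$; by Varshamov--Gilbert there are $e^{cM}$ such sign vectors with pairwise Hamming distance $\ge M/4$, so the corresponding functions are pairwise $\ell_2$-separated by $\gtrsim a\sqrt M\asymp 2^{-k^\ast s}\asymp\delta$ while having $\ell_2$-norm $\asymp\delta$. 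Hence $\log\mathcal{N}(\mathcal{B},\ell_2,\delta)\gtrsim M\asymp\delta^{-d/s}$. This is the same style of packing used for the minimax lower bounds in Appendix \ref{app:lower}.

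\textbf{Upper bound and main obstacle.} By the adaptive B-spline recovery of \citet{Dung11,Dung13} --- Lemma \ref{thm:splinedecomp} with $u=2$, available precisely because $s>d(1/p-1/2)_+$ --- every $f\in\mathcal{B}$ lies within $L^2$-distance $\lesssim N^{-s/d}$ of an $N$-term B-spline combination $f_N$ whose active basis functions sit at resolutions $\le K^\ast\asymp\log N$ and whose coefficient array has $\norm{\beta}_{b^s_{p,q}}\lesssim1$; choosing $N\asymp\delta^{-d/s}$ makes this $\lesssim\delta$, so it remains to cover the admissible $f_N$. A crude count (number of $N$-element support patterns among the $\poly(N)$ candidate B-splines of resolution $\le K^\ast$, times a $\delta$-net of the coefficients) gives only $\log\mathcal{N}\lesssim\delta^{-d/s}\log(1/\delta)$; the same logarithmic loss appears if one instead truncates linearly at level $2^K\asymp\delta^{-1/s}$ and nets the resulting Euclidean ball. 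Removing this $\log(1/\delta)$ factor is the real work. To do it I would cover the coefficient array level by level, treating the level-$k$ block as an $\ell_p$-ball of radius $\rho_k\asymp2^{-k(s+d/2-d/p)}$ covered in $\ell_2$ to accuracy $\epsilon_k$, with $\epsilon_k$ decreasing in $k$ and $\sum_k\epsilon_k^2\lesssim\delta^2$, invoking the classical sharp estimates for the entropy numbers of the identities $\ell_p^m\hookrightarrow\ell_2^m$ (which for $p<2$ encode that an $\ell_p$-ball is ``effectively low-dimensional''), and exploiting that the constraint $\sum_k[\,2^{k(s+d/2-d/p)}\norm{c_{k,\cdot}}_{\ell_p}\,]^q\le1$ forces across-level decay so that the net budget concentrates on the coarse levels; summing and optimizing the allocation $(\epsilon_k)$ then yields $\log\mathcal{N}\lesssim\delta^{-d/s}$, matching the lower bound. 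Since this is all standard, in the paper one may simply cite \citet[Theorem 4.3.36]{Nickl15}, and I would include the sketch above for completeness.
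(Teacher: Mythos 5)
Your proposal is essentially correct, but note that the paper does not prove this statement at all: it is quoted verbatim as an external classical result (\citet{Nickl15}, Theorem 4.3.36) and used as a black box in Appendix \ref{app:sizecomp}, so there is no internal proof to compare against. Your sketch is the standard sequence-space argument and its two halves are sound: the lower bound (single resolution $k^*$ with $2^{k^*}\asymp\delta^{-1/s}$, sign vectors of amplitude $a\asymp 2^{-k^*(s+d/2)}$ in the orthonormal normalization, Varshamov--Gilbert separation $\asymp\delta$) is complete and is in fact the same packing construction the paper uses for its minimax lower bounds in Appendix \ref{app:lower}; the upper bound correctly identifies that naive truncation or support-counting loses a $\log(1/\delta)$ factor and that the sharp bound requires the level-by-level covering with the known entropy numbers of $\mathrm{id}:\ell_p^m\to\ell_2^m$ and an optimized accuracy allocation across levels --- this is precisely how the cited theorem is proved, though you leave the bookkeeping (the different regimes of the finite-dimensional entropy numbers and the summation over levels) unexecuted. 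Two small caveats: the parenthetical claim that one may ``equivalently'' use the hierarchical B-spline system needs care, since B-splines are not orthogonal across resolutions, so the identification of $\norm{f}_{L^2}$ with the $\ell_2$ norm of coefficients requires a genuine wavelet (or Riesz-basis) argument rather than the bare coefficient quasinorm equivalence; and the two-sided asymptotics should be read for $\delta$ small, since for large $\delta$ the covering number is trivially $O(1)$. Given that the paper itself only cites the result, your concluding suggestion to cite \citet{Nickl15} and relegate the sketch to a remark is the right call.
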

On the other hand, the construction in the previous section gives a subset of $T[\UU(B_{p,q}^s(\XX))]$ with log cardinality $\log|V_k| \asymp 2^{kd_x}$ and separation $\norm{Tf_v-Tf_{v'}}_{L^2(\PP_{\mathcal{Z}})}^2 \gtrsim 2^{-(2s+2\gamma_0)k}\epsilon^2$. Assuming $\PP_{\mathcal{Z}}$ has Lebesgue density bounded above, we may take $\delta \asymp 2^{-(s+\gamma_0)k}$ to satisfy
\begin{equation*}
\delta < \frac{1}{2}\min_{v\ne v'\in V_k} \norm{Tf_v-Tf_{v'}}_{L^2(\mathcal{Z})}
\end{equation*}
so that each $\delta$-ball in the projected image can cover at most one element $Tf_v$. It follows that
\begin{align*}
\log|V_k| &\lesssim \log\mathcal{N}(T[\UU(B_{p,q}^s(\XX))], L^2([0,1]^{d_z}),\delta) \\
&\lesssim \log\mathcal{N}(\UU(B_{p',q'}^{s'}(\mathcal{Z})), L^2([0,1]^{d_z}),\delta) \\
&\asymp \left(2^{(s+\gamma_0)k}\right)^\frac{d_z}{s'},
\end{align*}
and taking the resolution of the subset $k\to\infty$, the constant factor can be eliminated, concluding that $s'/d_z\le (s+\gamma_0)/d_x$.

Finally, we verify that equality can be achieved if $d_x=d_z$ and $T$ acts on B-splines as $T\omega_{k,\ell} = 2^{-\gamma_0 k}\omega_{k,\ell}$. Let $f$ be an arbitrary element of $\UU(B_{p,q}^s(\XX))$ with B-spline decomposition
\begin{equation*}
f=\sum_{k=0}^\infty\sum_{\ell\in I_k} \beta_{k,\ell}\omega_{k,\ell}, \quad Tf = \sum_{k=0}^\infty\sum_{\ell\in I_k} 2^{-\gamma_0 k}\beta_{k,\ell}\omega_{k,\ell}.
\end{equation*}
Then it follows from the sequence norm equivalence that
\begin{align*}
\norm{Tf}_{B_{p,q}^{s+\gamma_0}(\mathcal{Z})} &\asymp \left(\sum_{k=0}^\infty \left[2^{k(s+\gamma_0-d_z/p)} \bigg(\sum_{\ell\in I_k} |2^{-\gamma_0 k}\beta_{k,\ell}|^{p}\bigg)^{1/p}\right]^{q}\right)^{1/q}\\
&\asymp \left(\sum_{k=0}^\infty \left[2^{k(s-d_z/p)} \bigg(\sum_{\ell\in I_k} |\beta_{k,\ell}|^{p}\bigg)^{1/p}\right]^{q}\right)^{1/q} \asymp \norm{f}_{B_{p,q}^s(\XX)},
\end{align*}
implying that $T[\UU(B_{p,q}^s(\XX))] \subseteq C_T\cdot \UU(B_{p,q}^{s+\gamma_0}(\mathcal{Z}))$ for some constant $C_T$. Hence the $L^2$ rate of contraction specifies the degree of which smoothness is increased under the maximal smoothness assumption. \qed

\subsection{Proof of Theorem \texorpdfstring{\ref{thm:lineariv}}{}}\label{app:linear}

Since the bound is equal to the overall minimax optimal rate when $p\ge 2$, we only consider the case $p<2$.

Write $\tilde{z}=(\tilde{z}_1,\cdots,\tilde{z}_n)$ for brevity and denote its joint law by $\PP_{\tilde{z}}$. We fix $k\ge 0$ and consider the B-spline $\omega_{k,\ell}$ on $\XX$ for $\ell\in I_k$. Note that $\norm{T\omega_{k,\ell}}_{L^\infty(\mathcal{Z})} \le 1$ and 
\begin{equation*}
\norm{T\omega_{k,\ell}}_{L^2(\PP_\mathcal{Z})} \lesssim 2^{-\gamma_1 k} \norm{\omega_{k,\ell}}_{L^2(\PP_{\XX})} \lesssim 2^{-\gamma_1 k} \norm{\omega_{k,\ell}}_{L^2(\XX)} \asymp 2^{-k(\gamma_1 +d_x/2)}
\end{equation*}
by the link condition. This implies
\begin{equation*}
U_{\ell,i}:= \frac{T\omega_{k,\ell}(\tilde{z}_i)^2}{\norm{T\omega_{k,\ell}}_{L^2(\PP_\mathcal{Z})}^2} \le C_u 2^{k(2\gamma_1+d_x)}, \quad \E{U_{\ell,i}^2} \le C_u 2^{k(2\gamma_1+d_x)}
\end{equation*}
for some constant $C_u$. By Bernstein's inequality, it follows that
\begin{equation*}
P\left(\frac{1}{n}\sum_{i=1}^n U_{\ell,i} >2 \right) \le\exp\left(-\frac{n^2/2}{\sum_{i=1}^n \E{U_{\ell,i}^2} + C_u 2^{k(2\gamma_1+d_x)}n/3}\right)
\end{equation*}
and by union bounding,
\begin{equation*}
P\left(\sup_{\ell\in I_k} \frac{1}{n}\sum_{i=1}^n U_{\ell,i} >2\right) \lesssim 2^{kd_x} \exp\left(-\frac{3}{8C_u} 2^{-k(2\gamma_1+d_x)}n\right).
\end{equation*}
Now assuming $n\gtrsim 2^{k(2\gamma_1+d_x+\epsilon)}$ $(*)$ for some $\epsilon>0$, the right-hand side converges to zero as $k\to\infty$, so that the event
\begin{equation*}
\mathcal{E} := \left\{\tilde{z}\in\mathcal{Z}^n : \sup_{\ell\in I_k} \frac{1}{n}\sum_{i=1}^n U_{\ell,i} \le 2\right\}
\end{equation*}
satisfies $P(\mathcal{E}) = 1-o_k(1)$.

Now as in Section \ref{app:lower}, we may reduce to the NPIR model \eqref{eqn:npir} for known $T$ and only consider estimators of the form
\begin{equation}\label{eqn:grape}
\hat{f}_L(x) = \sum_{i=1}^n u_i(x,\tilde{z}) \tilde{y}_i, \quad u_1,\cdots,u_n:\XX\times\mathcal{Z}^n \to \RR.
\end{equation}
Denote the corresponding linear minimax rate as
\begin{equation*}
\mathcal{R}_L = \inf_{\hat{f}_L:\textup{linear}} \sup_{f_{\str}\in \UU(B_{p,q}^s(\XX))} \Ebig{\norm{\hat{f}_L-f_{\str}}_{L^2(\XX)}^2}
\end{equation*}
and define the auxiliary function $q(x) = \sum_{i=1}^n \int_{\mathcal{E}} u_i(x,\tilde{z})^2 \rd\!\PP_{\tilde{z}}$. For all $f\in \UU(B_{p,q}^s(\XX))$, we have from $\tilde{y}_i = Tf(\tilde{z}_i) +\eta_i$ that
\begin{align}
\mathcal{R}_L &\ge \Ebig{\norm{\hat{f}_L-f}_{L^2(\XX)}^2}\nonumber\\
&\ge \Ebig{\Norm{\sum_{i=1}^n Tf(\tilde{z}_i) u_i(\cdot,\tilde{z}) - f(\cdot)}_{L^2(\XX)}^2} + \sigma_0^2\cdot \Ebig{\sum_{i=1}^n\norm{ u_i(\cdot,\tilde{z})}_{L^2(\XX)}^2}\nonumber\\
&\ge \int_{\mathcal{E}} \int_{\XX} \left(\sum_{i=1}^n Tf(\tilde{z}_i) u_i(x,\tilde{z}) - f(x)\right)^2 \rd x \rd\!\PP_{\tilde{z}} +\sigma_0^2 \int_{\XX} q(x)\rd x.\label{eqn:rl}
\end{align}
Set $I_k^+ = I_k\cap \NN^{d_x}$ and partition the domain into rectangles as
\begin{equation*}
\XX = \bigcup_{\ell\in I_k^+} A_{k,\ell} = \bigcup_{\ell\in I_k^+} \prod_{i=1}^{d_x} \bigg[ \frac{\ell_i-1}{2^k}, \frac{\ell_i}{2^k}\bigg).
\end{equation*}
It follows from \eqref{eqn:rl} that there exists $\ell^*\in I_k^+$ satisfying
\begin{equation*}
\int_{A_{k,\ell^*}} q(x)\rd x \le 2^{-kd_x} \sigma_0^{-2}\mathcal{R}_L.
\end{equation*}
To each $A_{k,\ell}$ we will associate the B-spline $\omega_{k,\ell-\ell_0}$ where $\ell_0 = (\lfloor r/2\rfloor,\cdots,\lfloor r/2\rfloor)$. It can be seen that there exists a constant $C_r$ depending only on $r$ such that
\begin{equation*}
\mathrm{vol}\{x\in A_{0,0}\mid \omega_{0,-\ell_0} \ge C_r \}\ge C_r.
\end{equation*}
Hence for the sets
\begin{align*}
G&:=\{x\in A_{k,\ell^*}\mid \omega_{k,\ell^*-\ell_0}(x) \ge C_r \},\\
H&:=\{x\in A_{k,\ell^*}\mid q(x)\le 2\sigma_0^{-2}C_r\mathcal{R}_L\},
\end{align*}
we have $\Vol G \ge 2^{-kd_x}C_r$ and $\Vol (A_{k,\ell^*}\!\setminus\! H) \le 2^{-kd_x-1} C_r$, so that $\Vol (G\cap H)\ge 2^{-kd_x-1} C_r$. Moreover from the definition of $\mathcal{E}$, for all $x\in H$ we have
\begin{align*}
&\int_{\mathcal{E}} \left(\sum_{i=1}^n T\omega_{k,\ell^*-\ell_0}(\tilde{z}_i) u_i(x,\tilde{z})\right)^2 \rd\!\PP_{\tilde{z}}\\
&\le \int_{\mathcal{E}} \sup_{\ell\in I_k} \sum_{i=1}^n T\omega_{k,\ell}(\tilde{z}_i)^2 \cdot \sum_{i=1}^n u_i(x,\tilde{z})^2 \rd\!\PP_{\tilde{z}}\\
&\le 2n \norm{T\omega_{k,\ell}}_{L^2(\PP_\mathcal{Z})}^2 \cdot q(x) \le \frac{4C_r n}{C_u\sigma_0^2}\cdot 2^{-k(2\gamma_1+d_x)} \mathcal{R}_L.
\end{align*}
For a moment, suppose that
\begin{equation}\label{eqn:supposition}
\frac{4C_r n}{C_u\sigma_0^2}\cdot 2^{-k(2\gamma_1+d_x)} \mathcal{R}_L \le \frac{C_r^2}{4}.
\end{equation}
We apply \eqref{eqn:rl} to the scaled B-spline $f = 2^{-k(s-d_x/p)} \epsilon \omega_{k,\ell^*-\ell_0}$, where $\norm{f}_{B_{p,q}^s(\XX)} \asymp\epsilon$ and $\epsilon=\Theta(1)$ is chosen so that $f\in\UU(B_{p,q}^s(\XX))$. It follows that
\begin{align*}
\mathcal{R}_L &\ge 2^{-2k(s-d_x/p)}\epsilon^2 \int_{G\cap H}\int_{\mathcal{E}} \left(\sum_{i=1}^n T\omega_{k,\ell^*-\ell_0}(\tilde{z}_i) u_i(x,\tilde{z}) - \omega_{k,\ell^*-\ell_0}(x)\right)^2 \rd\!\PP_{\tilde{z}} \rd x\\
&\ge 2^{-2k(s-d_x/p)}\epsilon^2 \int_{G\cap H} \Bigg[\left(\int_{\mathcal{E}} \omega_{k,\ell^*-\ell_0}(x)^2 \rd\!\PP_{\tilde{z}}\right)^{1/2}\\
&\qquad - \Bigg(\int_{\mathcal{E}} \left(\sum_{i=1}^n T\omega_{k,\ell^*-\ell_0}(\tilde{z}_i) u_i(x,\tilde{z}) \right)^2 \rd\!\PP_{\tilde{z}}\Bigg)^{1/2} \Bigg]^2 \rd x\\
& \ge 2^{-2k(s-d_x/p)}\epsilon^2\cdot \Vol(G\cap H) \left[P(\mathcal{E})^{1/2}C_r - \left(\frac{4C_r n}{C_u\sigma_0^2}\cdot 2^{-k(2\gamma_1+d_x)} \mathcal{R}_L\right)^{1/2}\right]^2\\
%&\ge 2^{-2k(s-d_x/p)}\epsilon^2\cdot 2^{-kd_x-1} C_r \frac{C_r(1-o_k(1))}{4}\\
&\ge \frac{(1-o_k(1))C_r^3\epsilon^2}{8}\cdot 2^{-2k(s-d_x(1/p-1/2))}.
\end{align*}
Comparing with \eqref{eqn:supposition}, we have shown that either of the following must hold:
\begin{equation*}
\mathcal{R}_L \gtrsim \frac{2^{k(2\gamma_1+d_x)}}{n} \quad\text{or}\quad \mathcal{R}_L \gtrsim 2^{-2k(s-\Delta)}.
\end{equation*}
Finally taking $k$ such that $n\asymp 2^{k(2(s-\Delta)+2\gamma_1+d_x)}$, we can verify that condition $(*)$ is satisfied since $s>\Delta$, and therefore $\mathcal{R}_L \gtrsim n^{-\frac{2(s-\Delta)}{2(s-\Delta)+2\gamma_1+d_x}}$. \qed

\subsection{Separation in Projected Rates}\label{app:cube}

To obtain a lower bound for the projected MSE of linear IV estimators, we require that $\mathcal{Z}$ to be sufficiently spatially covered by the projected class so as to be difficult to learn for non-adaptive estimators; one such sufficient condition is outlined below.

\begin{thm}\label{thm:cube}
Suppose that $d_z\le d_x$. For any cube $S$ in the dyadic partition of $\mathcal{Z}$ into $2^{kd_z}$ cubes of all side lengths $2^{-k}$, we assume there exists $\ell\in I_k$ such that
\begin{equation*}
\Vol\left\{z\in S\mid |T\omega_{k,\ell}(z)| \ge \mu_k\right\} \ge c_1\Vol S, \quad \mu_k = c_2\cdot 2^{-k(2\gamma_0+d_x-d_z)/2}.
\end{equation*}
for constants $c_1,c_2>0$. Then under the conditions of Theorem \ref{thm:lineariv}, the projected linear minimax rate is $\widetilde{\Omega}\Big(n^{-\frac{2(s-\Delta)+2\gamma_0}{(2(s-\Delta)+2\gamma_1+d_z)\vee(2\gamma_1+d_x)}}\Big)$.
\end{thm}
The threshold $\mu_k$ corresponds to the natural magnitude if the mass $\norm{T\omega_{k,\ell}}_{L^2(\mathcal{Z})}^2\gtrsim 2^{-k(2\gamma_0+d_x)}$ is distributed uniformly on $S$. Since each B-spline can cover asymptotically finitely many cubes at the level $\mu_k$, the implication $d_z\le d_x$ follows from a counting argument. Hence comparing with Theorem \ref{thm:main} when $p<2$, we conclude that DFIV achieves faster projected rates compared to any linear IV estimator if $\gamma_0=\gamma_1$, $T$ has maximal smoothness and $d_z> (d_x-2(s-\Delta)) \vee \frac{s-\Delta +\gamma_0}{s+\gamma_0}d_x$.\footnote{The derived rate shows that it is possible for separation to be achieved even if $d_z\le d_x-2(s-\Delta)$ and $s'/d_z<(s+\gamma_0)/d_x$ in certain smoothness regimes. We omit a precise characterization.}

\begin{proof}
The proof is similar to Section \ref{app:linear}. We will show the lower bound for all linear estimators of $Tf_{\str}$ of the form $\hat{g}_L(z) = \sum_{i=1}^n u_i(z,\tilde{z}) \tilde{y}_i$, which includes the projection $T\hat{f}_L = \sum_{i=1}^n Tu_i(\cdot,\tilde{z}) \tilde{y}_i$ of any NPIR estimator \eqref{eqn:grape}. Denote the desired rate by $\mathcal{R}_L'$ and replace the auxiliary function $q(x)$ by $q(z) = \sum_{i=1}^n \int_{\mathcal{E}} u_i(z,\tilde{z})^2 \rd\!\PP_{\tilde{z}}$. It follows for all $f\in \UU(B_{p,q}^s(\XX))$ that
\begin{align*}
\mathcal{R}_L' \ge \int_{\mathcal{E}} \int_{\mathcal{Z}} \left(\sum_{i=1}^n Tf(\tilde{z}_i) u_i(z,\tilde{z}) - Tf(x)\right)^2 \rd z \rd\!\PP_{\tilde{z}} +\sigma_0^2 \int_{\mathcal{Z}} q(z)\rd z.
\end{align*}
Then there exists a cube $S$ in the dyadic partition of $\mathcal{Z}$ satisfying $\int_S q(z)\rd z \le 2^{-kd_z} \sigma_0^{-2}\mathcal{R}_L'$. By assumption, there exists $\ell\in I_k$ such that for
\begin{align*}
G &= \left\{z\in S\mid |T\omega_{k,\ell}(z)| \ge \mu_k\right\},\\
H &= \{z\in S\mid q(z)\le 2c_1^{-1}\sigma_0^{-2}\mathcal{R}_L'\},
\end{align*}
it holds that $\Vol G \ge c_1\cdot 2^{-kd_z}$ and $\Vol(G\cap H) \ge c_1\cdot 2^{-kd_z-1}$, moreover
\begin{align*}
\int_{\mathcal{E}} \left(\sum_{i=1}^n T\omega_{k,\ell^*-\ell_0}(\tilde{z}_i) u_i(x,\tilde{z})\right)^2 \rd\!\PP_{\tilde{z}} \le \frac{4n}{C_u c_1\sigma_0^2}\cdot 2^{-k(2\gamma_1+d_x)} \mathcal{R}_L'.
\end{align*}
Then repeating the above line of reasoning gives that either of the following must hold:
\begin{equation*}
\mathcal{R}_L' \gtrsim \frac{2^{k(2\gamma_1-2\gamma_0+d_z)}}{n} \quad\text{or}\quad \mathcal{R}_L' \gtrsim 2^{-2k(s-\Delta +\gamma_0)}.
\end{equation*}
If $d_z>d_x-2(s-\Delta)$, we may take $n\asymp 2^{k(2(s-\Delta)+2\gamma_1+d_z)}$ so that condition $(*)$ is satisfied. If $d_z\le d_x-2(s-\Delta)$, we instead take $n\asymp 2^{k(2\gamma_1+d_x+\epsilon)}$ for arbitrary $\epsilon>0$. The stated lower bound can be verified in both cases.
\end{proof}

\begin{rmk}
The dependency of the optimal rates on dimension $d_x$ is an intrinsic property of Besov spaces; however, this can be removed by instead considering \emph{mixed} \citep{Schmeisser87} or \emph{anisotropic} Besov spaces \citep{Berko94}. By applying the results in Appendices \ref{app:a}, \ref{app:b} to the appropriate wavelet systems, existing learning-theoretic analyses for these spaces \citep{Suzuki19,Suzuki21} can also be adapted to incorporate smooth DNN classes and obtain the corresponding dimension-free bounds.
\end{rmk}

\end{document}